\documentclass[11pt]{article}

\usepackage[margin=1in]{geometry}
\usepackage{natbib}
\usepackage{hyperref}
\usepackage{url}
\usepackage{amsthm}
\usepackage{aliascnt}
\usepackage{float}
\usepackage{booktabs}

\usepackage{amsmath,amsfonts,bm}
\usepackage{graphicx}
\usepackage{algorithm}
\usepackage{algpseudocode}
\usepackage{comment}

\def\eqref#1{equation~\ref{#1}}
\def\Eqref#1{Equation~\ref{#1}}

\def\1{\bm{1}}

\DeclareMathAlphabet{\mathsfit}{\encodingdefault}{\sfdefault}{m}{sl}
\SetMathAlphabet{\mathsfit}{bold}{\encodingdefault}{\sfdefault}{bx}{n}

\newcommand{\E}{\mathbb{E}}

\newcommand{\R}{\mathbb{R}}

\DeclareMathOperator*{\argmin}{arg\,min}

\hypersetup{hidelinks}

\newtheorem{theorem}{Theorem}
\newaliascnt{proposition}{theorem}
\newtheorem{proposition}[proposition]{Proposition}
\aliascntresetthe{proposition}
\newaliascnt{lemma}{theorem}
\newtheorem{lemma}[lemma]{Lemma}
\aliascntresetthe{lemma}
\newaliascnt{corollary}{theorem}
\newtheorem{corollary}[corollary]{Corollary}
\aliascntresetthe{corollary}

\newtheorem{definition}{Definition}
\newtheorem{assumption}{Assumption}

\usepackage{cleveref}

\crefname{algorithm}{Algorithm}{Algorithms}
\Crefname{algorithm}{Algorithm}{Algorithms}
\crefname{assumption}{Assumption}{Assumptions}
\Crefname{assumption}{Assumption}{Assumptions}
\crefname{definition}{Definition}{Definitions}
\Crefname{definition}{Definition}{Definitions}
\crefname{theorem}{Theorem}{Theorems}
\Crefname{theorem}{Theorem}{Theorems}
\crefname{proposition}{Proposition}{Propositions}
\Crefname{proposition}{Proposition}{Propositions}
\crefname{lemma}{Lemma}{Lemmas}
\Crefname{lemma}{Lemma}{Lemmas}
\crefname{corollary}{Corollary}{Corollaries}
\Crefname{corollary}{Corollary}{Corollaries}
\crefname{remark}{Remark}{Remarks}
\Crefname{remark}{Remark}{Remarks}

\title{When Is Coarse Supervision Worth It? Cost-Aware Learning under Unknown Aggregation}
\author{%
Jianyu Xu$^{1,2}$ \quad Smriti Jha$^{2}$ \quad Aarti Singh$^{2}$ \quad Bryan Wilder$^{2}$\\[0.6em]
\small $^{1}$University of North Carolina at Charlotte, Charlotte, NC 28223\\
\small $^{2}$Carnegie Mellon University, Pittsburgh, PA 15213
}
\date{}

\begin{document}
\maketitle

\begin{abstract}
Modern learning systems often acquire supervision at multiple resolutions, trading annotation cost against information content.
We study cost-aware two-resolution learning, where expensive fine labels reveal a vector response and cheaper coarse labels reveal a scalar aggregate formed with unknown weights, while the target remains the full response.
The challenge is that unknown aggregation changes which directions coarse data can identify, so the value of coarse supervision depends jointly on cost, noise, and identification.
We characterize this information geometry and develop an estimate-and-track policy that learns the aggregation rule and tracks the optimal resolution mix.
We derive a closed-form break-even condition for coarse supervision and prove that the online policy attains the optimal leading cumulative-risk coefficient, with a matching local asymptotic minimax lower bound.
Synthetic experiments support the predicted all-fine/mixed transition, show the online learner approaching the oracle-share benchmark, and demonstrate a finite-budget gain over all-fine acquisition when coarse supervision is sufficiently favorable.
Our results provide a principled way to balance information and annotation cost across supervision resolutions.

\end{abstract}

\section{Introduction}
\label{sec:introduction}
Modern learning systems increasingly acquire supervision at different resolutions. For the same example one may purchase a detailed record of many attributes or a cheaper summary of them: pixel-level versus image-level annotation in vision, token-level versus sentence-level labels in language, a detailed diagnostic profile versus an overall outcome, or a multi-criterion evaluation rubric versus one overall score. A growing body of work in active learning and weak supervision accordingly treats the \emph{level} of supervision, and not only the choice of example, as part of the acquisition decision \citep{tejero2023full,hu2018active,rotman2022multi,matsuo2025instance,nguyen2026leveraging}. This raises a basic question: when supervision is available at several resolutions and prices, how should a finite annotation budget be divided across them?

Consider evaluating model-generated responses. A detailed annotation scores a response on several criteria such as factuality, relevance, safety, clarity, and style, while a cheaper annotation asks only for an overall judgment. The quantity we wish to predict may nevertheless be the full criterion vector. The cheap label is therefore not simply a noisier copy of the target: it is a scalar projection of a $K$-dimensional response, carrying information only along one direction. The relevant tradeoff is whether the information gained from this inexpensive direction compensates for the fine information forgone elsewhere.

A further difficulty is that the aggregation rule is often not known to the learner. An overall score may combine fine criteria according to weights implicit in an evaluation protocol or annotator population rather than specified in advance. If the rule were known, the cheap channel would supply information along a known response direction. When it is unknown, that direction must itself be identified from fine and coarse observations before that information can be fully exploited. Cheap supervision therefore creates an additional identification cost: its value depends not only on price and noise, but also on uncertainty about what is being aggregated.

We study this tradeoff in a stylized linear two-resolution model. Each example carries a $d$-dimensional covariate; a fine query returns a noisy $K$-dimensional response generated by a linear map, and a cheaper but coarse query returns a noisy scalar aggregation under unknown simplex weights. The inferential target is the entire fine map, measured in prediction risk. Linearity makes the nuisance-efficient information geometry exactly computable, yielding closed-form break-even conditions, optimal resolution mixes, and leading-order annotation savings.

Our approach starts from a simple geometric question: which parts of the fine map can each annotation type distinguish? If the aggregation rule were known, coarse labels would inform a fixed response direction, while fine labels identify the remaining directions. When the weights are unknown, however, some changes in the fine map can be matched by changes in the aggregation weights and therefore leave the coarse conditional mean unchanged. In the linear model, this ambiguity removes exactly $K-1$ directions from the part of the target that coarse data can help estimate. The remaining information still separates into fine-only and shared blocks, so the design problem reduces to their dimensions and a single scalar measuring cost-adjusted precision. This yields a one-dimensional objective and a sharp break-even boundary between all-fine and mixed acquisition.

Our problem connects multiresponse optimal design, multifidelity estimation, and cost-sensitive supervision \citep{sagnol2011computing, peherstorfer2016optimal, xu2022bandit, matsuo2025instance, fotakis2021efficient}. These literatures separately address heterogeneous observation operators, budget allocation, and supervision granularity. What is specific here is that the cheap channel is a lower-dimensional projection of the prediction target whose direction is itself unknown, and that we seek exact first-order answers for the resulting cost--information tradeoff.  

\textbf{Summary of Contributions.}
Under the canonical linear model, we develop a first-order theory of cost-aware two-resolution supervision with unknown aggregation. Specifically, we separate two sources of difficulty. First, not knowing the aggregation direction changes the information available from coarse labels and creates an intrinsic statistical price. Second, the optimal resolution mix depends on that unknown direction, so the learner must estimate and track the allocation online. We show that this second difficulty need not add another first-order penalty.

\begin{itemize}
    \item \textbf{A sharp static cost--information frontier.} For fixed $d$ and $K$, a single effective ratio $\lambda$, combining relative annotation costs, noise variances, and the aggregation strength $\|w_\star\|_2^2$, determines whether coarse supervision is worth purchasing. With unknown simplex aggregation weights, coarse labels enter the optimal design if and only if $\lambda>dK/(d-K+1)$, compared with threshold $K$ when the aggregation rule is known. Above the boundary, both the optimal coarse spending share and its leading fractional gain are available in closed form. The gain measures the leading reduction in prediction risk at fixed budget, equivalently the leading reduction in annotation cost at fixed target risk, and remains structurally bounded even when coarse feedback becomes arbitrarily cheap or precise.
    \item \textbf{First-order optimal online adaptation with a matching lower bound.} When the aggregation rule is unknown, so is the optimal resolution mix. We give an epochal estimate-and-track policy that reserves a vanishing design stream to estimate the aggregation rule and target share, while the remaining budget tracks that share. A cross-fitted estimator uses the estimation stream without feeding its labels back into future allocation decisions. The cumulative risk matches the optimal leading $\log T$ coefficient for exogenous static designs under unknown aggregation, with an $O(1)$ remainder. A matching local asymptotic minimax lower bound shows that learning the allocation adds no first-order penalty beyond the intrinsic nuisance price.
\end{itemize}

Synthetic experiments support both parts of the theory. A structural comparison illustrates the gap between known- and unknown-aggregation acquisition decisions, while the online policy approaches the oracle-share benchmark and, in a strongly favorable coarse-information regime, yields a clear finite-budget improvement over all-fine acquisition.

\section{Related Works}
\label{sec:related_works}
We discuss the closest connections to optimal design, multifidelity estimation, and cost-sensitive learning. Additional comparisons with the broader literature appear in Appendix~\ref{app:extended-related}.

\textbf{Optimal design and cost-aware partial measurement.}
Multiresponse optimal design allows experiment types to contribute different vector observations and information matrices. \citet{sagnol2011computing} develops A-optimal design under linear resource constraints, while later work treats multivariate responses and variable response dimensions \citep{soumaya2015optimal,somogyi2026randomized,filova2026optimal}. With known aggregation, our fine and coarse channels are two structured observation types in this framework; related work also optimizes complete versus partial measurement under collection costs \citep{mareis2026cost}. Our focus is the closed-form two-resolution frontier and how it changes when the coarse observation operator is itself unknown.

\textbf{Multifidelity estimation and adaptive design.}
Multifidelity methods allocate budgets across expensive and cheap information sources, including analytic allocations and multi-output estimators \citep{peherstorfer2016optimal,schaden2020multilevel,croci2023multi}. Cross-fidelity relationships can also be learned adaptively \citep{xu2022bandit,dixon2026optimally}, while \citet{fontaine2021online} study online A-optimal scalar regression under unknown noise. These methods motivate our adaptive question, but our cheap channel is a lower-dimensional projection of the same vector target, and the unknown projection is a nuisance parameter in the conditional-mean law itself.

\textbf{Cost-sensitive learning and supervision granularity.}
Active-learning work has long selected among annotation sources or forms with different costs and information content \citep{donmez2008proactive,wallace2011should,hu2018active,rotman2022multi,tejero2023full,matsuo2025instance}. Related work studies learning from intrinsically coarse labels or cheap coarse machine annotations \citep{fotakis2021efficient,nguyen2026leveraging}. These settings use different observation structures; here the cheaper action is an unknown scalar projection of the same vector-valued response whose full conditional mean remains the target.

\section{Problem Setup}
\label{sec:problem_setup}
We formulate the two-resolution acquisition problem, define the performance criterion and reference regimes, and state the regularity assumptions used throughout.

\subsection{Problem Formulation}
\label{subsec:problem-formulation}

We study a sequential information-acquisition problem with two annotation resolutions. At round $t$, the learner chooses the resolution $A_t\in\{\mathrm F,\mathrm C\}$ as a measurable function of the history before observing the current covariate $x_t\in\R^d$ or current response noise. The prediction target is the full $K$-dimensional conditional response $z_\star(x)=\Theta_\star^\top x$, where $\Theta_\star\in\R^{d\times K}$ is unknown.

A fine query returns
\begin{equation}
Y_t^{\mathrm F}
=
\Theta_\star^\top x_t+\varepsilon_t^{\mathrm F},
\qquad
\varepsilon_t^{\mathrm F}\sim\mathcal N(0,\sigma_{\mathrm F}^2I_K),
\label{eq:fine-observation}
\end{equation}
and costs $c_{\mathrm F}>0$. A coarse query costs $c_{\mathrm C}>0$ and returns only
\begin{equation}
Y_t^{\mathrm C}
=
w_\star^\top\Theta_\star^\top x_t+\varepsilon_t^{\mathrm C}
=
x_t^\top u_\star+\varepsilon_t^{\mathrm C},
\qquad
u_\star:=\Theta_\star w_\star,
\qquad
\varepsilon_t^{\mathrm C}\sim\mathcal N(0,\sigma_{\mathrm C}^2),
\label{eq:coarse-observation}
\end{equation}
where $w_\star$ lies in the probability simplex $\Delta_K:=\{w\in\R^K:\mathbf 1^\top w=1,\;w_k\ge0\}$. Hence coarse supervision reveals a scalar projection of the same fine conditional mean rather than a separate prediction target.

The round index counts completed queries, while the budget clock records their cumulative annotation cost. For a budget level $b$, let $N_{\mathrm F}(b)$ and $N_{\mathrm C}(b)$ be the numbers of completed queries whose cumulative annotation cost does not exceed $b$. Bounded action costs can leave a bounded unused residue, so $c_{\mathrm F}N_{\mathrm F}(b)+c_{\mathrm C}N_{\mathrm C}(b)\le b$. The learner may use the entire observed history to choose future resolutions but never observes the unqueried response. We consider both known and unknown aggregation. Known $w_\star$ provides a static reference problem; our main setting treats $w_\star$ as unknown and learns the allocation sequentially.

\subsection{Key Quantities}
\label{subsec:key-quantities}

\begin{definition}[Fine-prediction risk]
\label{def:prediction-risk}
For an estimator $\widehat\Theta$, define $\ell(\widehat\Theta,\Theta_\star):=\E_x[\|\widehat\Theta^\top x-\Theta_\star^\top x\|_2^2]$. Let $B_0$ denote the end of a fixed finite initialization and the start of the reported budget clock. For budget $b\ge B_0$, let $L_b:=\E[\ell(\widehat\Theta_b,\Theta_\star)]$, with the current estimator held fixed between completed estimation queries. The cumulative fine-prediction risk through budget $T$ is
\begin{equation}
\mathfrak R_T
:=
\sum_{b=B_0}^{T}L_b.
\label{eq:cumulative-risk}
\end{equation}
\end{definition}

Therefore, $T$ is an annotation-cost horizon rather than a query count. The budget clock in \Cref{eq:cumulative-risk} uses integer cost units for convenience; rational costs can be rescaled, and a continuous budget integral gives the same first-order conclusions. This criterion measures the anytime quality of a predictor updated throughout an annotation campaign, rather than only its terminal risk.

\begin{definition}[Reference regimes]
\label{def:benchmarks}
We use three reference regimes.
\begin{itemize}
\setlength{\itemsep}{0pt}
\setlength{\parskip}{0pt}
\setlength{\parsep}{0pt}
\setlength{\topsep}{0pt}
    \item[$\mathsf B_1$:] \emph{known-$w_\star$ static oracle}: the best exogenous static allocation when $w_\star$ is known;
    \item[$\mathsf B_2$:] \emph{unknown-aggregation oracle-share benchmark}: the true-parameter optimal share is given exogenously, but $\Theta_\star$ and $w_\star$ remain unknown and the schedule carries no inferential information;
    \item[$\mathsf B_3$:] \emph{online learned-allocation setting}: $w_\star$ and the optimal share are learned sequentially; \Cref{thm:online-main} addresses it.
\end{itemize}
\end{definition}

\subsection{Assumptions}
\label{subsec:assumptions}

We first impose standard regularity on the covariates.

\begin{assumption}[Covariates]
\label{ass:covariates}
The covariates are i.i.d. with $\E[x_t]=0$ and covariance $\Sigma_x\succ0$. Unlimited unlabeled covariates are available at zero annotation cost, so $\Sigma_x$ is treated as known. After whitening, $\E[x_tx_t^\top]=I_d$ and $\|x_t\|_2\le L_x$ almost surely.
\end{assumption}

Whitening only changes coordinates and is without loss for prediction risk, and bounded covariates are a standard regularity condition.

\begin{assumption}[Noise and costs]
\label{ass:noise-cost}
Fine-query noises are i.i.d. $\mathcal N(0,\sigma_{\mathrm F}^2I_K)$ and coarse-query noises are i.i.d. $\mathcal N(0,\sigma_{\mathrm C}^2)$. The queried noises are independent across items, independent of covariates, and independent across resolutions conditional on the covariate. The positive costs $c_{\mathrm F},c_{\mathrm C}$ and noise variances $\sigma_{\mathrm F}^2,\sigma_{\mathrm C}^2$ are known.
\end{assumption}

The Gaussian independent-channel model is the canonical setting in which the two resolutions have separate measurement noise; correlated or shared item-level noise would lead to a different pairing problem (see \Cref{app:limitations}). We next require the aggregation parameter to be regular and identifiable.

\begin{assumption}[Aggregation regularity and identifiability]
\label{ass:identifiability}
The dimensions satisfy $d\ge K\ge2$. For some fixed $\tau\in(0,1/(2K))$, $w_\star$ lies in the simplex interior with $\min_k w_{\star,k}\ge2\tau$. Let $Q\in\R^{K\times(K-1)}$ have orthonormal columns spanning $\mathbf 1^\perp$. For fixed constants $\kappa_\Theta,M_\Theta>0$, the whitened coefficient matrix satisfies
\begin{equation}
\sigma_{\min}(\Theta_\star Q)\ge\kappa_\Theta,
\qquad
\|\Theta_\star\|_{\mathrm{op}}\le M_\Theta.
\label{eq:tangent-identifiability}
\end{equation}
\end{assumption}

The interior condition fixes the local simplex dimension at $K-1$, while the singular-value condition rules out observationally indistinguishable aggregation directions. Finally, we specify the sequential policy class and asymptotic regime.

\begin{assumption}[Sequential protocol and asymptotics]
\label{ass:sequential}
Each action $A_t$ is measurable with respect to the history before round $t$ and is therefore conditionally independent of the current $x_t$ and current noises. The dimensions, costs, noise variances, and regularity constants are fixed as $T\to\infty$.
\end{assumption}

Choosing the resolution before seeing the current covariate isolates the resolution-allocation problem; allowing current-item leverage would introduce an additional optimal-design dimension.

\section{Static Allocation with Known Aggregation}
\label{sec:static}
We first isolate the cost--information tradeoff when the aggregation vector $w_\star$ is known. This benchmark removes the need to learn the coarse direction and reveals, in closed form, when coarse supervision is worth purchasing, how much budget should be assigned to it, and how much it can improve over all-fine acquisition.

\subsection{Fine-Prediction Risk under a Static Allocation}
\label{subsec:static-risk}

Fix fine and coarse sample counts $N_{\mathrm F}$ and $N_{\mathrm C}$. Let $\beta=\operatorname{vec}(\Theta)$ and define $\alpha:=N_{\mathrm F}/\sigma_{\mathrm F}^2$ and $\gamma:=N_{\mathrm C}/\sigma_{\mathrm C}^2$. Since $w_\star$ is known, the expected Fisher information for $\beta$ is
\begin{equation}
\mathcal I_{\mathrm{kn}}
=
\alpha I_{dK}
+
\gamma\bigl(w_\star w_\star^\top\otimes I_d\bigr).
\label{eq:known-information}
\end{equation}
Fine labels contribute in every response direction, while the coarse term acts only along $w_\star$. 

\begin{proposition}[Known-aggregation information trace]
\label{prop:known-static-risk}
Under \Cref{ass:covariates,ass:noise-cost}, $\mathcal I_{\mathrm{kn}}$ has eigenvalue $\alpha$ with multiplicity $d(K-1)$ and eigenvalue $\alpha+\gamma\|w_\star\|_2^2$ with multiplicity $d$. Hence
\begin{equation}
\operatorname{tr}(\mathcal I_{\mathrm{kn}}^{-1})
=
d(K-1)\frac{\sigma_{\mathrm F}^2}{N_{\mathrm F}}
+
d\left(
\frac{N_{\mathrm F}}{\sigma_{\mathrm F}^2}
+
\frac{N_{\mathrm C}\|w_\star\|_2^2}{\sigma_{\mathrm C}^2}
\right)^{-1}.
\label{eq:known-static-risk}
\end{equation}
For exogenous static allocations, this inverse-information trace is the first-order local minimax fine-prediction risk and is attained by the Gaussian efficient estimator described in Appendix~\ref{app:proof-known-risk}.
\end{proposition}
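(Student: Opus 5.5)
The argument has two parts: the spectral computation, which is exact linear algebra, and the efficiency claim, which follows from classical local asymptotic normality theory for a Gaussian linear model.

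\textbf{Spectral part.} Since $I_{dK}=I_K\otimes I_d$, we can write $\mathcal I_{\mathrm{kn}}=(\alpha I_K+\gamma w_\star w_\star^\top)\otimes I_d$. The eigenvalues of a Kronecker product are the pairwise products of the factors' eigenvalues, so it suffices to diagonalize the $K\times K$ factor. The matrix $w_\star w_\star^\top$ has eigenvalue $\|w_\star\|_2^2$ on the span of $w_\star$ and $0$ on $w_\star^\perp$, which has dimension $K-1$. Hence $\alpha I_K+\gamma w_\star w_\star^\top$ has eigenvalues $\alpha+\gamma\|w_\star\|_2^2$ (once) and $\alpha$ (with multiplicity $K-1$). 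Tensoring with $I_d$ multiplies each multiplicity by $d$. The trace of the inverse is the sum of reciprocal eigenvalues. Substituting $\alpha=N_{\mathrm F}/\sigma_{\mathrm F}^2$ and $\gamma=N_{\mathrm C}/\sigma_{\mathrm C}^2$ then gives \eqref{eq:known-static-risk}.

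Before this, I would justify \eqref{eq:known-information} itself.
\begin{itemize}
\item A fine observation has log-likelihood $-\|Y-\Theta^\top x\|^2/(2\sigma_{\mathrm F}^2)$. Its score in $\beta$ is $(I_K\otimes x)\varepsilon/\sigma_{\mathrm F}^2$ in the $\operatorname{vec}$ convention, so its per-sample information is $(I_K\otimes xx^\top)/\sigma_{\mathrm F}^2$.
\item A coarse observation has mean $x^\top\Theta w_\star=(w_\star\otimes x)^\top\beta$, giving per-sample information $(w_\star w_\star^\top\otimes xx^\top)/\sigma_{\mathrm C}^2$.
\item Taking expectations under the whitened covariates of \Cref{ass:covariates} ($\E[xx^\top]=I_d$) and summing over the independent samples (\Cref{ass:noise-cost}) yields $\mathcal I_{\mathrm{kn}}$.
\end{itemize}
I would also record that the whitened risk is $\ell(\widehat\Theta,\Theta_\star)=\|\widehat\beta-\beta_\star\|_2^2$. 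This identity is what makes the trace of the inverse information the relevant quantity.

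\textbf{Efficiency part.} For the upper bound, I would use the generalized least-squares estimator with the known $w_\star$, which is the Gaussian MLE. Conditional on the covariates it is exactly unbiased, with covariance equal to the inverse of the empirical information $\widehat{\mathcal I}$. Its risk is therefore $\E\operatorname{tr}(\widehat{\mathcal I}^{-1})$, computed on a high-probability event where $\widehat{\mathcal I}$ is well conditioned. Bounded covariates plus matrix Bernstein give $\widehat{\mathcal I}=\mathcal I_{\mathrm{kn}}(1+O_p(N^{-1/2}))$. Off that event, a ridge or truncation safeguard controls the risk, so it contributes only lower order. This yields risk $\operatorname{tr}(\mathcal I_{\mathrm{kn}}^{-1})(1+o(1))$.

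For the lower bound, I would take the model's local experiments $\beta_\star+h/\sqrt N$ with the allocation proportions fixed. These are locally asymptotically normal with information $\mathcal I_{\mathrm{kn}}/N$. The Hájek–Le Cam local asymptotic minimax theorem under squared loss then bounds the normalized risk from below by $\operatorname{tr}(\mathcal I_{\mathrm{kn}}^{-1})$. Alternatively, a Bayesian van Trees argument with a smooth prior on a shrinking ball gives the same bound non-asymptotically up to $1+o(1)$.

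\textbf{Main obstacle.} The delicate step is the upper bound's control of $\E\operatorname{tr}(\widehat{\mathcal I}^{-1})$, since the inverse has heavy tails when the empirical Gram matrix is near singular. I would first split on the event $\lambda_{\min}(\widehat{\mathcal I}/N)\ge c$. On its complement, of exponentially small probability, I would rely on a fallback estimator (for example, a ridge-regularized one) whose risk is bounded. The rest of the argument is routine.
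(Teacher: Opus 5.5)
Your proposal is correct and follows essentially the same route as the paper:
\begin{itemize}
\item the same per-observation information calculation;
\item a spectral split, where your Kronecker factorization $(\alpha I_K+\gamma w_\star w_\star^\top)\otimes I_d$ is the same computation as the paper's orthogonal decomposition $H=h\bar w_\star^\top+H_0$;
\item Gaussian weighted least squares with a safeguard on the bad-Gram event for attainment;
\item LAN plus the local asymptotic minimax theorem for the lower bound.
\end{itemize}
The only refinement is in the upper bound. The paper applies the resolvent identity and uses that the first-order fluctuation has mean zero, $\E\Delta_B=0$, to obtain $\E\operatorname{tr}(J_B^{-1})=\operatorname{tr}(\mathcal I_{\mathrm{kn}}^{-1})+O(B^{-2})$, rather than your $(1+o(1))$. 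That quantitative remainder is what the paper later sums in the static-benchmark corollary.
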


The $d(K-1)$ directions orthogonal to $w_\star$ can only be learned from fine labels, whereas the remaining $d$ directions can use both resolutions. This separation makes the cost-allocation problem one-dimensional.

\subsection{Cost-Optimal Resolution Mix}
\label{subsec:static-allocation}

Let $\eta\in[0,1)$ denote the fraction of annotation budget spent on coarse labels. At total budget $B$, ignoring bounded integer effects, $N_{\mathrm F}=(1-\eta)B/c_{\mathrm F}$ and $N_{\mathrm C}=\eta B/c_{\mathrm C}$. Substituting into \Cref{eq:known-static-risk} introduces the effective coarse-information ratio
\begin{equation}
\lambda
:=
\frac{c_{\mathrm F}\sigma_{\mathrm F}^2\|w_\star\|_2^2}
{c_{\mathrm C}\sigma_{\mathrm C}^2},
\label{eq:known-lambda}
\end{equation}
which combines relative annotation cost, relative noise, and aggregation strength. For simplex weights, $\|w_\star\|_2^2\in[1/K,1]$: concentrated aggregation makes each coarse label more informative in this sense, while uniform averaging is least informative. The resulting leading risk is
\begin{equation}
\mathcal R_{\mathrm K}(B,\eta)
=
\frac{c_{\mathrm F}\sigma_{\mathrm F}^2d}{B}
\psi_{\mathrm K}(\eta,\lambda),
\qquad
\psi_{\mathrm K}(\eta,\lambda)
:=
\frac{K-1}{1-\eta}
+
\frac{1}{1+(\lambda-1)\eta}.
\label{eq:known-budget-risk}
\end{equation}

\begin{theorem}[Static break-even boundary]
\label{thm:known-static-optimum}
Suppose $K\ge2$ and $\lambda>0$. The function $\psi_{\mathrm K}(\eta,\lambda)$ is strictly convex in $\eta\in[0,1)$. Its unique minimizer is
\begin{equation}
\eta_{\mathrm K}^\star(\lambda)
=
\begin{cases}
0,&\lambda\le K,\\[1ex]
\displaystyle
\frac{q_{\mathrm K}(\lambda)-1}{\lambda-1+q_{\mathrm K}(\lambda)},
&\lambda>K,
\end{cases}
\qquad
q_{\mathrm K}(\lambda):=\sqrt{\frac{\lambda-1}{K-1}}.
\label{eq:known-eta-star}
\end{equation}
Therefore coarse supervision enters the optimal static design if and only if $\lambda>K$.
\end{theorem}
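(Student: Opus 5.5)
The proof is a one-variable calculus argument on $\psi_{\mathrm K}(\cdot,\lambda)$ over $[0,1)$. First I will check that the function is well defined and smooth there. For $\lambda>0$ and $\eta\in[0,1)$, the quantity $1+(\lambda-1)\eta$ is a convex combination of $1$ and $\lambda$, namely $(1-\eta)\cdot 1+\eta\lambda$. It is therefore at least $\min(1,\lambda)>0$, so both terms of \Cref{eq:known-budget-risk} are finite and $C^\infty$ on $[0,1)$.

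\textbf{Strict convexity.} Differentiating twice gives
\begin{equation*}
\partial_\eta^2\psi_{\mathrm K}(\eta,\lambda)=\frac{2(K-1)}{(1-\eta)^3}+\frac{2(\lambda-1)^2}{\bigl(1+(\lambda-1)\eta\bigr)^3}.
\end{equation*}
The first term is strictly positive because $K\ge2$, and the second is nonnegative. Hence $\psi_{\mathrm K}$ is strictly convex on $[0,1)$, and any minimizer is unique.

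\textbf{Boundary case.} The first derivative is
\begin{equation*}
\partial_\eta\psi_{\mathrm K}(\eta,\lambda)=\frac{K-1}{(1-\eta)^2}-\frac{\lambda-1}{\bigl(1+(\lambda-1)\eta\bigr)^2},
\end{equation*}
so at $\eta=0$ it equals $(K-1)-(\lambda-1)=K-\lambda$. If $\lambda\le K$, the derivative at $0$ is nonnegative. By convexity the derivative is then nondecreasing, hence nonnegative on all of $[0,1)$. So $\psi_{\mathrm K}$ is nondecreasing and $\eta=0$ is the minimizer, which is unique by strict convexity.

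\textbf{Interior case.} If $\lambda>K$, the derivative at $0$ is negative. As $\eta\to1^-$ the first term of $\psi_{\mathrm K}$ blows up, so $\psi_{\mathrm K}\to+\infty$ and the derivative eventually becomes positive. Since the derivative is strictly increasing, it has exactly one root in $(0,1)$, and that root is the minimizer. In this case $\lambda-1>K-1>0$, so both denominators are positive and we can take square roots in the first-order condition. This gives
\begin{equation*}
\frac{1+(\lambda-1)\eta}{1-\eta}=q_{\mathrm K}(\lambda).
\end{equation*}
This equation is linear in $\eta$. Solving it yields $\eta=(q_{\mathrm K}-1)/(\lambda-1+q_{\mathrm K})$. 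As a consistency check, $q_{\mathrm K}>1$ exactly when $\lambda>K$, so this value lies in $(0,1)$; the bound $\eta<1$ holds because $q_{\mathrm K}-1<\lambda-1+q_{\mathrm K}$.

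\textbf{Break-even statement.} Combining the two cases, the minimizer is strictly positive if and only if $\lambda>K$. This is exactly the claim that coarse supervision enters the optimal static design if and only if $\lambda>K$.

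The argument is elementary. The only step requiring care is the interior case: the sign conditions that justify taking square roots, and confirming that the resulting root lies inside $(0,1)$.
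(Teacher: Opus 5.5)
Your proof is correct and follows the same route as the paper's: strict convexity from the second derivative, the sign of $\partial_\eta\psi_{\mathrm K}(0,\lambda)=K-\lambda$ to separate the all-fine case, and solving the square-rooted first-order condition for the interior optimum. You also make explicit two points the paper leaves implicit: that $1+(\lambda-1)\eta\ge\min(1,\lambda)>0$ when $\lambda<1$, and that the interior root exists and lies in $(0,1)$.
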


\begin{proof}
Differentiating \Cref{eq:known-budget-risk} gives $\partial_\eta\psi_{\mathrm K}(0,\lambda)=K-\lambda$, while the second derivative is strictly positive on $[0,1)$. Hence the all-fine boundary is optimal exactly when $\lambda\le K$. Solving the first-order condition above the boundary gives \Cref{eq:known-eta-star}.
\end{proof}

The threshold has a direct economic interpretation: coarse labels must be sufficiently precise per unit cost along the aggregation direction to compensate for the $K-1$ response directions that still require fine labels. Define
$\Phi_{\mathrm K}^\star:=c_{\mathrm F}\sigma_{\mathrm F}^2d\min_{\eta\in[0,1)}\psi_{\mathrm K}(\eta,\lambda)$; its cumulative-risk interpretation is formalized together with the unknown-aggregation benchmark in \Cref{cor:static-benchmark-attainment}.

\subsection{Value of Coarse Supervision}
\label{subsec:known-value}

Let $\psi_{\mathrm K}^\star(\lambda):=\psi_{\mathrm K}(\eta_{\mathrm K}^\star(\lambda),\lambda)$. Below the break-even boundary, $\psi_{\mathrm K}^\star(\lambda)=K$. Above it,
\begin{equation}
\psi_{\mathrm K}^\star(\lambda)
=
\frac{\left(\sqrt{(K-1)(\lambda-1)}+1\right)^2}{\lambda},
\qquad \lambda>K.
\label{eq:known-optimal-coefficient}
\end{equation}
We measure the leading value of optimal coarse acquisition relative to all-fine by
\begin{equation}
G_{\mathrm K}(\lambda)
:=
1-\frac{\psi_{\mathrm K}^\star(\lambda)}{K}.
\label{eq:known-gain}
\end{equation}
Thus $G_{\mathrm K}=0.04$, for example, means a $4\%$ leading reduction in prediction risk at fixed budget, equivalently a $4\%$ leading reduction in annotation cost at fixed target risk. The gain is zero for $\lambda\le K$, strictly positive above the boundary, and satisfies
\begin{equation}
0\le G_{\mathrm K}(\lambda)<\frac1K,
\qquad
\lim_{\lambda\to\infty}G_{\mathrm K}(\lambda)=\frac1K.
\label{eq:known-gain-ceiling}
\end{equation}
Even arbitrarily cheap or precise coarse labels therefore cannot replace the fine information needed in the remaining $K-1$ response directions.

\section{Unknown Aggregation and Online Adaptation}
\label{sec:online}
We now study the main setting in which the aggregation vector $w_\star$ is unknown. The section first identifies the statistical price of this nuisance parameter and then shows that the resulting optimal resolution mix can be learned online without an additional first-order penalty.

\subsection{Unknown Aggregation: Static Benchmark}
\label{subsec:unknown-static}

When $w_\star$ is unknown, a coarse observation depends jointly on the fine map and aggregation direction. Profiling this nuisance changes the two information-block dimensions. Using the same $\eta$ and $\lambda$ as in \Cref{sec:static}, let $A_{\mathrm U}$ and $D_{\mathrm U}$ denote the fine-only and fine-plus-coarse dimensions, and define
\begin{equation}
A_{\mathrm U}:=(K-1)(d+1),
\qquad
D_{\mathrm U}:=d-K+1,
\qquad
\psi_{\mathrm U}(\eta,\lambda)
:=
\frac{A_{\mathrm U}/d}{1-\eta}
+
\frac{D_{\mathrm U}/d}{1+(\lambda-1)\eta}.
\label{eq:unknown-objective}
\end{equation}
Let $\eta_{\mathrm U}^\star(\lambda)$ minimize $\psi_{\mathrm U}(\eta,\lambda)$ over $\eta\in[0,1)$.

\begin{theorem}[Unknown-aggregation frontier]
\label{thm:unknown-frontier}
Under \Cref{ass:covariates,ass:noise-cost,ass:identifiability}, the first-order local minimax fine-prediction risk of an exogenous static allocation with budget $B$ and coarse spending share $\eta$ satisfies  
\begin{equation}
\mathcal R_{\mathrm U}(B,\eta)
=
\frac{c_{\mathrm F}\sigma_{\mathrm F}^2d}{B}
\psi_{\mathrm U}(\eta,\lambda)
+o(B^{-1}).
\label{eq:unknown-leading-risk}
\end{equation}
A projected cross-fitted one-step estimator attains the same leading term. Coarse supervision enters the optimal design if and only if
\begin{equation}
\lambda>\lambda_{\mathrm U}^{\mathrm{crit}}
:=
\frac{dK}{d-K+1}.
\label{eq:unknown-threshold}
\end{equation}
Compared with the known-$w_\star$ threshold $K$, unknown aggregation therefore converts exactly $K-1$ formerly coarse-informative directions into fine-only directions.
\end{theorem}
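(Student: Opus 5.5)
The approach is to compute the efficient (profiled) Fisher information for $\beta=\operatorname{vec}(\Theta)$ with $w$ a nuisance parameter. We parametrize the simplex tangent space locally by $w=w_\star+Q v$, $v\in\R^{K-1}$, which is legitimate because \Cref{ass:identifiability} puts $w_\star$ in the interior. Under whitening, the per-sample fine score gives information $\alpha I_{dK}$ on $\beta$ and nothing on $v$. The coarse mean $x^\top\Theta w$ has derivative $w\otimes x$ in $\beta$ and $Q^\top\Theta^\top x$ in $v$. With $\gamma=N_{\mathrm C}/\sigma_{\mathrm C}^2$, the coarse block contributions are therefore
\[
\gamma\,(w_\star w_\star^\top\otimes I_d),\qquad
\gamma\,(w_\star\otimes I_d)\Theta_\star Q,\qquad
\gamma\,Q^\top\Theta_\star^\top\Theta_\star Q .
\]
The last block is invertible by \Cref{eq:tangent-identifiability}. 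The efficient information is the Schur complement
\[
\mathcal I_{\mathrm{eff}}=\alpha I_{dK}+\gamma\,(w_\star\otimes I_d)\bigl(I_d-P\bigr)(w_\star^\top\otimes I_d),
\]
where $P$ is the orthogonal projection onto the $(K-1)$-dimensional column space of $\Theta_\star Q$ in $\R^d$. Its eigenstructure follows exactly as in \Cref{prop:known-static-risk}. It has eigenvalue $\alpha+\gamma\|w_\star\|_2^2$ on the $d-K+1$ directions $w_\star\otimes(I-P)\R^d$, and eigenvalue $\alpha$ on the remaining $dK-(d-K+1)=(K-1)(d+1)$ directions. Taking the trace of the inverse and substituting $N_{\mathrm F}=(1-\eta)B/c_{\mathrm F}$ and $N_{\mathrm C}=\eta B/c_{\mathrm C}$ gives $\frac{c_{\mathrm F}\sigma_{\mathrm F}^2 d}{B}\psi_{\mathrm U}(\eta,\lambda)$, which is \Cref{eq:unknown-leading-risk} at leading order.

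The minimax lower bound comes from a local asymptotic minimax (Hájek–Le Cam) argument applied to the LAN experiment formed by the two independent Gaussian channels at the fixed sample sizes. The bounded-covariate and nondegeneracy assumptions make this experiment regular, and the loss is the quadratic form $\|\widehat\beta-\beta\|^2$ after whitening. Under the nuisance, the local minimax risk for estimating $\beta$ equals $\operatorname{tr}(\mathcal I_{\mathrm{eff}}^{-1})$; this requires $\mathcal I_{\mathrm{eff}}$ to be nonsingular, which holds whenever $\alpha>0$. The case $\eta\to1$ has infinite risk and is excluded because $\eta<1$.

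For attainment, we take the following estimator.
\begin{itemize}
\item Split each channel into two folds.
\item On one fold, form a pilot estimate: $\widehat\Theta$ by fine least squares, then $\widehat w$ by simplex-projected least squares of the coarse labels on $\widehat\Theta^\top x$.
\item On the other fold, apply a single Newton/one-step correction along the efficient score, projecting $\widehat w$ back onto the interior of the simplex (it stays there with high probability since $\min_k w_{\star,k}\ge 2\tau$).
\item Swap the folds and average.
\end{itemize}
Cross-fitting makes the nuisance error and the score independent, so the remainder is quadratic, $O(\|\widehat\beta-\beta_\star\|^2+\|\widehat w-w_\star\|^2)=O_P(B^{-1})$ in norm squared and hence $o(B^{-1})$ in risk after the correction. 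This step needs a truncation or high-probability event, plus uniform integrability from bounded $x$ and Gaussian tails, to turn convergence in probability into convergence of expected risk. I expect this to be the main technical obstacle. The pilot $\widehat w$ requires $N_{\mathrm C}\to\infty$. When $\eta=0$ the coarse term vanishes and all-fine least squares already attains $dK$, which matches $\psi_{\mathrm U}(0,\lambda)=K$.

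The threshold is then calculus. The function $\psi_{\mathrm U}$ is strictly convex on $[0,1)$, and
\[
\partial_\eta\psi_{\mathrm U}(0,\lambda)=\frac{A_{\mathrm U}-D_{\mathrm U}(\lambda-1)}{d}.
\]
This derivative is negative if and only if
\[
\lambda>1+\frac{(K-1)(d+1)}{d-K+1}=\frac{dK}{d-K+1}.
\]
If $D_{\mathrm U}=0$, i.e.\ $d=K-1$, the threshold would be infinite, but $d\ge K$ rules this case out. Finally, comparing the dimensions with \Cref{prop:known-static-risk}, the shared block shrinks from $d$ to $d-K+1$, so exactly $K-1$ directions move to the fine-only block.
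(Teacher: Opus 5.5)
Your proposal is correct and follows the paper's proof essentially step for step. It uses the same Schur complement $\alpha I_{dK}+\gamma\,(w_\star w_\star^\top\otimes(I_d-P_{\Theta_\star Q}))$ with multiplicities $(K-1)(d+1)$ and $d-K+1$, the same LAN/local asymptotic minimax lower bound, the same projected two-fold cross-fitted one-step estimator (fine OLS at $\eta=0$), and the same sign check of $\partial_\eta\psi_{\mathrm U}(0,\lambda)$ under strict convexity.

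One wording fix: the one-step remainder should be stated as $O_P(B^{-1})$ in norm, not ``in norm squared.'' It also contains the centered Jacobian-fluctuation term times the pilot error, not only products of pilot errors. Cauchy--Schwarz against the $O(B^{-1/2})$ leading influence term then makes its risk contribution $O(B^{-3/2})=o(B^{-1})$.
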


Define $\Phi_{\mathrm U}^\star:=c_{\mathrm F}\sigma_{\mathrm F}^2d\,\psi_{\mathrm U}(\eta_{\mathrm U}^\star,\lambda)$. Above the threshold,
$\eta_{\mathrm U}^\star=(q_{\mathrm U}-1)/(\lambda-1+q_{\mathrm U})$ with
$q_{\mathrm U}:=\sqrt{D_{\mathrm U}(\lambda-1)/A_{\mathrm U}}$.
The leading gain relative to all-fine is
$G_{\mathrm U}(\lambda):=1-\psi_{\mathrm U}(\eta_{\mathrm U}^\star,\lambda)/K$.
It is zero at and below the boundary, increases above it, and approaches the structural ceiling $(d-K+1)/(dK)$, compared with $1/K$ when $w_\star$ is known.

\textbf{Proof roadmap.}
The key step is \Cref{lem:unknown-information-split}: profiling out the $(K-1)$-dimensional simplex tangent of $w_\star$ moves exactly $K-1$ directions from the coarse-informative block into the fine-only block, giving the multiplicities in \Cref{eq:unknown-objective}. Taking the inverse-information trace yields \Cref{eq:unknown-leading-risk}. Strict convexity then reduces the design problem to one dimension: the derivative at $\eta=0$ gives the break-even boundary, while the first-order condition above the boundary gives the closed-form optimizer. Appendix~\ref{app:proof-unknown-frontier} gives the full information calculation, lower bound, and matching attainment.

\begin{lemma}[Nuisance information split]
\label{lem:unknown-information-split}
After profiling out local perturbations of $w_\star$ in the simplex tangent space, the efficient information for $\Theta_\star$ has two eigenspaces: $A_{\mathrm U}$ directions receive only fine-label information, while $D_{\mathrm U}$ directions receive both fine and coarse information.
\end{lemma}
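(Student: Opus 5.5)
The plan is to compute the full Fisher information for the joint parameter $(\beta,s)$ and to obtain the efficient information for $\beta=\operatorname{vec}(\Theta)$ as a Schur complement. Here $\beta$ is as in \Cref{sec:static}, and $s\in\R^{K-1}$ is a local coordinate for the aggregation weights, $w=w_\star+Qs$. Because $w_\star$ lies in the simplex interior (\Cref{ass:identifiability}), this parametrization covers a full neighbourhood of $w_\star$ in $\Delta_K$, so the nuisance tangent space is exactly $\operatorname{range}(Q)=\mathbf 1^\perp$. By \Cref{ass:noise-cost} the two channels have independent Gaussian noise, so their informations add. As in \Cref{prop:known-static-risk}, write $\alpha=N_{\mathrm F}/\sigma_{\mathrm F}^2$ and $\gamma=N_{\mathrm C}/\sigma_{\mathrm C}^2$.

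\textbf{Step 1: channel informations.} The fine channel does not depend on $s$. Its information is $\alpha I_{dK}$ in the $\beta$ block and zero in every block involving $s$. For the coarse channel, the conditional mean is $x^\top\Theta w$. Its derivative at $(\Theta_\star,w_\star)$ in the direction $(\Delta,s)$ is $x^\top(\Delta w_\star+\Theta_\star Q s)$. Define the linear maps $J_\Theta:\Delta\mapsto\Delta w_\star\in\R^d$ and $J_s:s\mapsto\Theta_\star Q s\in\R^d$. Using whitened covariates, $\E[xx^\top]=I_d$ (\Cref{ass:covariates}), the coarse information is
\[
\gamma\,[J_\Theta\;\,J_s]^\top[J_\Theta\;\,J_s].
\]
In vec coordinates $J_\Theta^\top J_\Theta=w_\star w_\star^\top\otimes I_d$, which recovers \Cref{eq:known-information} when $s$ is absent.

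\textbf{Step 2: profile out $s$.} The nuisance block is $I_{ss}=\gamma\,Q^\top\Theta_\star^\top\Theta_\star Q$. It is invertible because $\sigma_{\min}(\Theta_\star Q)\ge\kappa_\Theta>0$ by \Cref{ass:identifiability}. This step is the only place identifiability enters, and it is where I expect the main care to be needed. One must check that the nuisance information is nonsingular, so that the Schur complement is the efficient information in the local asymptotic normality sense. One must also check that the fine channel contributes nothing to the cross blocks. Carrying out the Schur complement gives
\[
\mathcal I_{\mathrm{eff}}=\alpha I_{dK}+\gamma\,J_\Theta^\top(I_d-P)J_\Theta .
\]
Here $P$ is the orthogonal projection onto $\operatorname{range}(\Theta_\star Q)\subset\R^d$, which has rank $K-1$. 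In vec form the coarse term is $\gamma\,(w_\star w_\star^\top)\otimes(I_d-P)$.

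\textbf{Step 3: eigendecomposition.} A Kronecker product of two symmetric matrices has eigenvectors that are tensor products of the factors' eigenvectors. The factor $w_\star w_\star^\top$ has the single nonzero eigenvalue $\|w_\star\|_2^2$, and $I_d-P$ is a projection of rank $d-K+1$. Hence the coarse term has eigenvalue $\gamma\|w_\star\|_2^2$ on $\operatorname{span}(w_\star)\otimes\operatorname{range}(I_d-P)$, a space of dimension $d-K+1=D_{\mathrm U}$. It vanishes on the orthogonal complement, whose dimension is
\[
dK-(d-K+1)=(K-1)(d+1)=A_{\mathrm U}.
\]
Adding $\alpha I_{dK}$, the efficient information has eigenvalue $\alpha$ with multiplicity $A_{\mathrm U}$ (fine-only directions) and eigenvalue $\alpha+\gamma\|w_\star\|_2^2$ with multiplicity $D_{\mathrm U}$ (fine-plus-coarse directions). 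This is the claim.

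As a check, compared with \Cref{prop:known-static-risk}, exactly the $K-1$ directions $w_\star\otimes\operatorname{range}(P)$ move from the coarse-informative block to the fine-only block. These are the directions along which a change in $\Theta$ can be absorbed by a change in $w$, which matches the interpretation stated after \Cref{thm:unknown-frontier}.
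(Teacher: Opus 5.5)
Your proposal is correct and takes essentially the same approach as the paper. The paper profiles the quadratic form $\alpha\|H\|_F^2+\gamma\|Hw_\star+\Theta_\star h_w\|_2^2$ over $h_w\in\mathbf 1^\perp$, which is equivalent to the Schur complement $\alpha I_{dK}+\gamma\bigl(w_\star w_\star^\top\otimes(I_d-P_{\Theta_\star Q})\bigr)$ that it also records, and it reads off the multiplicities via $H=h\bar w_\star^\top+H_0$ with $h$ split along $\operatorname{col}(\Theta_\star Q)$, matching your Kronecker eigendecomposition. The only minor nuance is that invertibility of your nuisance block also needs $\gamma>0$; when $N_{\mathrm C}=0$ the formula holds trivially with $\gamma=0$, and the paper's quadratic-form profiling covers that case automatically.
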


\begin{proof}[Proof sketch]
A local perturbation $(H,a)$ of $(\Theta_\star,w_\star)$ changes the coarse mean through $Hw_\star+\Theta_\star a$. The tangent space of an interior simplex point has dimension $K-1$, and \Cref{ass:identifiability} makes its image under $\Theta_\star$ $K-1$ dimensional. Profiling over $a$ removes exactly those $K-1$ directions from the coarse-informative block. The full Schur-complement calculation appears in Appendix~\ref{app:proof-information-split}.
\end{proof}

Since first-order pointwise risk scales as $1/b$, these coefficients become $\log T$ cumulative-risk benchmarks on the budget clock.

\begin{corollary}[Static benchmark attainment]
\label{cor:static-benchmark-attainment}
Regular exogenous static procedures achieve $\mathfrak R_T^{\mathsf B_1}\le\Phi_{\mathrm K}^\star\log T+O(1)$ and $\mathfrak R_T^{\mathsf B_2}\le\Phi_{\mathrm U}^\star\log T+O(1)$; the two leading coefficients are local asymptotic minimax for their respective static experiments.
\end{corollary}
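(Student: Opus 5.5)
The plan is to turn the pointwise first-order risk statements, \Cref{prop:known-static-risk} for $\mathsf B_1$ and \Cref{thm:unknown-frontier} for $\mathsf B_2$, into cumulative-risk statements by summing over the budget clock. I will treat both benchmarks in parallel, with one fixed exogenous share for each. For $\mathsf B_1$ the share is $\eta_{\mathrm K}^\star$ from \Cref{thm:known-static-optimum}, and the learner uses the Gaussian efficient estimator of Appendix~\ref{app:proof-known-risk}. For $\mathsf B_2$ the share is $\eta_{\mathrm U}^\star$, and the learner uses the projected cross-fitted one-step estimator of \Cref{thm:unknown-frontier}. The schedule interleaves resolutions deterministically, for example by a Bresenham-type rule. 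Then at every budget level $b$ we have $N_{\mathrm F}(b)=(1-\eta)b/c_{\mathrm F}+O(1)$ and $N_{\mathrm C}(b)=\eta b/c_{\mathrm C}+O(1)$, so the integer and residue effects are bounded.

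\textbf{Upper bound.} The key intermediate claim is a quantitative, nonasymptotic refinement of the pointwise risk:
\[
L_b\le \frac{\Phi^\star}{b}+\frac{C}{b^{3/2}}\qquad\text{for all } b\ge B_0,
\]
with $C$ depending only on the fixed constants. Summing this bound gives $\sum_{b=B_0}^T\Phi^\star/b=\Phi^\star\log T+O(1)$ and $\sum_b Cb^{-3/2}=O(1)$, which is the claim. Any remainder that is summable in $b$ would do; even $O(b^{-1}(\log b)^{-1-\epsilon})$ suffices, but $o(b^{-1})$ alone does not. The $O(1)$ discrepancy in the counts contributes only $O(b^{-2})$ after expansion of the trace formula.

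In the known case the estimator is a (generalized) least squares estimator. Its conditional risk given the design is the trace of the inverse empirical information. Since $\|x_t\|\le L_x$, matrix concentration (matrix Bernstein) shows that the empirical Gram matrices lie within $O(\sqrt{\log b/b})$ of $I_d$ except on an event of probability $O(b^{-2})$. On that exceptional event I will bound the risk crudely, using a ridge or initialization safeguard to keep it polynomially bounded. In the unknown case I will write the one-step estimator as an efficient linear term plus a second-order term. The linear term has exact expected risk $\operatorname{tr}$ of the inverse efficient information, which is $\Phi_{\mathrm U}^\star/b$ by \Cref{lem:unknown-information-split}. The second-order term is quadratic in the pilot error, and its squared norm is $O(b^{-2})$. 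Cross-fitting makes the pilot independent of the fold used for the correction, so the cross term is $O(b^{-3/2})$. Projection onto the set in \Cref{ass:identifiability} keeps all moments bounded on bad events. The estimator is held fixed between completed queries, so this changes only the indexing.

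\textbf{Lower bound (local asymptotic minimaxity).} For each fixed exogenous share the experiment is LAN, with information $b\,\mathcal I(\eta)/(c_{\mathrm F}\sigma_{\mathrm F}^2)$ in the appropriate normalization. The local asymptotic minimax theorem (Hájek–Le Cam) then gives, for each $b$, $\liminf$ over shrinking neighborhoods of $\sup bL_b\ge \operatorname{tr}$ of the (efficient) inverse information. This bound is at least $\Phi^\star$ by optimality of $\eta^\star$ among static shares. For the cumulative version I will use the per-$b$ bounds together with a Fatou/Cesàro argument. The neighborhoods shrink at rate $b^{-1/2}$ uniformly over the window $b\in[T^{\epsilon},T]$, which carries almost all of the $\log T$ mass. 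The conclusion is $\liminf_T \sup_{\text{loc}}\mathfrak R_T/\log T\ge\Phi^\star$.

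\textbf{Main obstacle.} I expect the main difficulty to be the quantitative $O(b^{-3/2})$ remainder for the unknown-aggregation one-step estimator. The theorem only gives $o(B^{-1})$, which is not summable to $O(1)$. This will need explicit moment bounds on the pilot estimates of $(\Theta_\star,w_\star)$, and a check that the profiled score is smooth near the interior of the simplex; that smoothness uses $\min_k w_{\star,k}\ge 2\tau$ and $\sigma_{\min}(\Theta_\star Q)\ge\kappa_\Theta$. I would first try to bound the remainder by Taylor expansion with bounded third derivatives of the Gaussian log-likelihood. The bad-event contribution would then be controlled by the projection together with the $O(b^{-2})$ concentration tail.
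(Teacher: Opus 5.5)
Your plan follows the paper's proof. Per-budget bounds $L_b\le\Phi^\star/b+(\text{summable remainder})$ are summed over the budget clock. The $\mathsf B_2$ remainder $O(b^{-3/2})$ is the static two-fold specialization of \Cref{lem:conditional-efficient-estimation}: an efficient linear term plus an $O_{L_2}(b^{-1})$ remainder, with the cross term handled by Cauchy--Schwarz. That remainder also contains pilot-error-times-centered-fluctuation terms, not only terms quadratic in the pilot error, but these are of the same order. The minimax clause is read off from the per-budget local asymptotic minimax bounds behind \Cref{prop:known-static-risk} and \Cref{thm:unknown-frontier}, minimized over the share. In the known case the paper gets $O(b^{-2})$ from a resolvent expansion whose first-order term has mean zero. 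Your concentration route gives $O(\sqrt{\log b}\,b^{-3/2})$, which is also summable, provided the exceptional-event contribution is summable too. Keep the risk there $O(1)$, e.g.\ via the final projection; an $O(b^{-2})$ probability times a bound growing like $b$ would not suffice.

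Two points need repair. First, for $\lambda\le\lambda_{\mathrm U}^{\mathrm{crit}}$ the benchmark share is $\eta_{\mathrm U}^\star=0$, so there are no coarse labels. Then the nuisance block $I_{vv}=\gamma B_\Theta^\top B_\Theta$ vanishes, and neither the $w$-pilot nor the one-step correction is defined; your pilot moment bounds also implicitly need $N_{\mathrm C}\asymp b$. The paper treats this case separately with projected fine OLS. Its risk is $dKc_{\mathrm F}\sigma_{\mathrm F}^2/b+O(b^{-2})=\Phi_{\mathrm U}^\star/b+O(b^{-2})$, since $A_{\mathrm U}+D_{\mathrm U}=dK$.

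Second, the Fatou/Ces\`aro upgrade of the lower bound to a cumulative statement fails as described. Because $\sup_\vartheta\sum_bL_b(\vartheta)\le\sum_b\sup_\vartheta L_b(\vartheta)$, per-$b$ H\'ajek--Le Cam bounds over $b$-dependent local neighborhoods cannot simply be added. The paper's proof only uses the per-budget statement. If you want the cumulative version, bound the supremum below by a Bayes risk under a fixed smooth prior on $\mathcal U_\delta$, and apply van Trees at each $b$. The prior should move both $\beta$ and $v$ for $\mathsf B_2$, and $\beta$ alone for $\mathsf B_1$. For $\mathsf B_2$ this is exactly \Cref{lem:adaptive-information-lower} specialized to a deterministic schedule, since static schedules are predictable.
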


\subsection{Learning the Optimal Mix Online}
\label{subsec:online-adaptation}

The static optimum depends on the unknown $w_\star$ through $\lambda$. \Cref{alg:eet} therefore separates allocation learning from prediction: at the start of epoch $m$, the accumulated vanishing design stream is used to estimate $w_\star$ and the target coarse share, while the estimation stream tracks that share and fits the predictor without feeding its labels back into future allocation decisions. Within the estimation stream, \Cref{alg:eet} uses two deterministic folds; each fold takes one Fisher-scoring step from an opposite-fold pilot, and the two fine-map estimates are averaged and projected. Cross-fitting preserves pilot--score independence conditional on the design history while using estimation data at the full $b^{-1/2}$ scale.

\begin{algorithm}[!ht]  
\caption{Epochal Estimate-and-Track with Cross-Fitted Estimation (EET)}
\label{alg:eet}
\begin{algorithmic}[1]
\Require $B_0$, $\zeta_0\in(0,1/4)$, fixed design split $\eta_{\mathrm{des}}\in(0,1)$, costs/noise variances, and known regularity bounds $\tau,\kappa_\Theta,M_\Theta$
\State Run a fixed predictable initialization through budget $B_0$ containing both resolutions in the design stream and both estimation folds; initialize signed tracking imbalance $s\gets0$
\For{$m=0,1,2,\ldots$}
    \State Set $B_m=2^mB_0$, $\Delta B_m=B_{m+1}-B_m$, and $\zeta_m=\zeta_0/(m+2)^2$
    \State From accumulated design data available at $B_m$, fit guarded projected regressions $\widetilde\Theta_m,\widetilde u_m$ and set
    \[
    \widetilde w_m\in\argmin_{w\in\Delta_K,\;w_k\ge\tau}
    \|\widetilde u_m-\widetilde\Theta_mw\|_2^2,
    \qquad
    \widetilde\lambda_m=
    \frac{c_{\mathrm F}\sigma_{\mathrm F}^2}{c_{\mathrm C}\sigma_{\mathrm C}^2}
    \|\widetilde w_m\|_2^2.
    \]
    \State Set target coarse-budget share $\widehat\eta_m\gets\eta_{\mathrm U}^\star(\widetilde\lambda_m)$
    \State Reserve $\zeta_m\Delta B_m$ for the design stream and query it using the fixed split $\eta_{\mathrm des}$
    \While{the remaining estimation budget permits a query}
        \State $s_{\mathrm C}\gets s+(1-\widehat\eta_m)c_{\mathrm C}$,\quad
        $s_{\mathrm F}\gets s-\widehat\eta_m c_{\mathrm F}$
        \State Choose the affordable action whose updated imbalance has smaller absolute value (fine on ties); stop the block if the only affordable action increases $|s|$
        \State Assign the query to the next deterministic parity fold of that resolution, then observe $(x_t,Y_t^{A_t})$ and update $s$
        \State Recompute the projected two-fold cross-fitted one-step estimator from all estimation data collected so far
    \EndWhile
    \State Hold the current predictor fixed through the next design block until another estimation query completes
\EndFor
\end{algorithmic}
\end{algorithm}

The bounded-imbalance rule tracks $\widehat\eta_m$ to $O(1)$ cost. More importantly, conditional on the accumulated design history, the entire estimation resolution schedule and fold assignment are fixed before the associated covariates and labels are observed. Estimation labels therefore never feed back into future allocation decisions.
Let $w_0:=K^{-1}\mathbf 1$ and $\vartheta=(\operatorname{vec}(\Theta),v)$ be a local coordinate with $w=w_0+Qv$, and for the local statements let $\vartheta_\star$ be an interior point of the regular class with a sufficiently small radius-$\delta$ ball $\mathcal U_\delta$ around it.  

\begin{theorem}[First-order optimal online adaptation]
\label{thm:online-main}
Under \Cref{ass:covariates,ass:noise-cost,ass:identifiability,ass:sequential}, EET satisfies
\begin{equation}
\mathfrak R_T^{\mathrm{alg}}
\le
\Phi_{\mathrm U}^\star\log T+O(1).
\label{eq:online-upper}
\end{equation}
The bound is locally uniform: for sufficiently small $\delta$, $\sup_{\vartheta\in\mathcal U_\delta}\mathfrak R_T^{\mathrm{alg}}(\vartheta)\le(\Phi_{\mathrm U}^\star(\vartheta_\star)+o_\delta(1))\log T+O_\delta(1)$. Moreover,
\begin{equation}
\lim_{\delta\downarrow0}
\liminf_{T\to\infty}
\frac{1}{\log T}
\inf_{\pi,\widehat\Theta}
\sup_{\vartheta\in\mathcal U_\delta}
\mathfrak R_T(\pi,\widehat\Theta;\vartheta)
\ge
\Phi_{\mathrm U}^\star(\vartheta_\star),
\label{eq:online-lower}
\end{equation}
where the infimum ranges over predictable resolution policies and estimators. Hence EET is first-order locally asymptotically minimax.
\end{theorem}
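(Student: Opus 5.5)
The proof splits into an upper bound, obtained by an epoch-wise analysis of EET, and a lower bound, obtained by reduction to the static local asymptotic minimax bound of \Cref{thm:unknown-frontier}.

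\textbf{Upper bound.} Fix epoch $m$, with budget window $[B_m,B_{m+1})$ and $B_m=2^mB_0$.

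First, I control the design stream. It uses the fixed split $\eta_{\mathrm{des}}$ and has accumulated budget of order $\sum_{j<m}\zeta_j\Delta B_j\asymp B_m/(m+2)^2$, up to constants. Guarded projected least squares on this data gives $\|\widetilde\Theta_m-\Theta_\star\|+\|\widetilde u_m-u_\star\|=O_p\bigl((m/\sqrt{B_m})\bigr)$, with sub-Gaussian tails because covariates are bounded and noise is Gaussian. By \Cref{ass:identifiability}, $\sigma_{\min}(\Theta_\star Q)\ge\kappa_\Theta$, so the constrained least-squares map $(\widetilde\Theta,\widetilde u)\mapsto\widetilde w$ is locally Lipschitz near the truth. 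The map $\lambda\mapsto\eta_{\mathrm U}^\star(\lambda)$ is Lipschitz because it is given in closed form; it has a kink at $\lambda_{\mathrm U}^{\mathrm{crit}}$ but remains Lipschitz there. Together these give $|\widehat\eta_m-\eta_{\mathrm U}^\star|=O(m/\sqrt{B_m})$ outside an event of probability $O(B_m^{-2})$. On that bad event the guard keeps the risk bounded by a polynomial in $b$, so its contribution to the sum is summable.

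Second, I analyze the estimation stream. Conditional on the design history, the schedule is exogenous, and the bounded-imbalance rule realizes shares within $O(1)$ cost of the targets. Cross-fitting makes each pilot independent of the score on the opposite fold. A standard one-step expansion then gives
\[
L_b\le \frac{c_{\mathrm F}\sigma_{\mathrm F}^2 d}{b_{\mathrm{est}}}\,\bar\psi_{\mathrm U}(b)+O(b^{-3/2}\mathrm{polylog}\,b),
\]
where $\bar\psi_{\mathrm U}(b)$ is the trace of the inverse profiled information from \Cref{lem:unknown-information-split}. This information is evaluated at the empirical counts, which aggregate the shares $\widehat\eta_j$ over past epochs $j\le m$.

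Third, I convert this into the $\log T$ coefficient. Since $\psi_{\mathrm U}(\cdot,\lambda)$ is smooth and strictly convex with minimizer $\eta_{\mathrm U}^\star$, a deviation $\epsilon$ in the average share costs only $O(\epsilon^2)$ in $\psi$. At the boundary case $\eta_{\mathrm U}^\star=0$ the cost is $O(\epsilon)$ but one-sided, so I handle it via the guard. The realized average share differs from $\eta_{\mathrm U}^\star$ by $O(\bar m/\sqrt{b})$, plus an early-epoch term diluted by the factor $B_j/b$. Hence $\bar\psi_{\mathrm U}(b)=\psi_{\mathrm U}^\star+O(\log^2 b/b)+O(B_0/b)$. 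Also $b_{\mathrm{est}}=b(1-O(\sum_j\zeta_j B_j/b))$, and the design leakage $\zeta_m$ contributes a relative error $O(1/m^2)$; the sum $\sum_b b^{-1}m(b)^{-2}=\sum_m O(1/m^2)$ is finite. Summing $L_b$ over $b\le T$ therefore gives $\Phi_{\mathrm U}^\star\log T+O(1)$. Every constant depends continuously on $\vartheta$ through $\kappa_\Theta$, $M_\Theta$, $\tau$, and $\lambda$, which yields the locally uniform version over $\mathcal U_\delta$.

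\textbf{Lower bound.} For any predictable policy, condition on the action sequence. Because actions are chosen before observing $x_t$ and noise, the Fisher information at budget $b$ equals the profiled information of \Cref{lem:unknown-information-split} at the realized counts, in expectation. I apply a Bayesian Cramér–Rao (van Trees) bound with a smooth prior concentrated on $\mathcal U_\delta$ at scale $b^{-1/2}$. This gives, for each $b$,
\[
\sup_{\mathcal U_\delta}L_b\ge \frac{c_{\mathrm F}\sigma_{\mathrm F}^2 d}{b}\,\E\,\psi_{\mathrm U}(\eta_b,\lambda)-o(1/b),
\]
where $\eta_b$ is the realized coarse share. Convexity of the trace-inverse in the information matrix (Jensen) moves the expectation inside. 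Then $\psi_{\mathrm U}(\eta,\lambda)\ge\psi_{\mathrm U}^\star(\lambda)$ pointwise, and continuity of $\psi_{\mathrm U}^\star$ in $\vartheta$ as $\delta\to0$ gives the bound. Summing over $b$ yields \eqref{eq:online-lower}.

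\textbf{Main obstacle.} The delicate step is making the one-step remainder and the prior-scale argument uniform. For the upper bound, the remainder must be $o(1/b)$ uniformly over $b$, with a $b$-summable error, despite only $O(1)$ imbalance and data-dependent shares. I would first establish uniformity by conditioning on design-stream $\sigma$-fields and applying matrix Bernstein bounds to the empirical Gram matrices in each fold. For the lower bound, a single van Trees prior cannot be optimal for every $b$ at once. I would instead apply it blockwise on dyadic budget blocks, each with its own prior scale, and sum the per-block bounds, which produces the $\log T$ factor.
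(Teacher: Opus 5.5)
Your upper bound follows essentially the paper's route. You separate the vanishing design stream ($\Xi(b)=O(b/(\log b)^2)$, with summable $1/(b(\log b)^2)$ opportunity cost). You condition on the design history so the estimation schedule and folds are exogenous, reach the efficient trace at the realized mix with the cross-fitted one-step estimator, and sum. Two small corrections there. First, EET has no guard acting on the share, and none is needed at the all-fine boundary. The paper's variational-inequality argument gives $\psi_{\mathrm U}(\eta_{\mathrm U}^\star(\widetilde\lambda),\lambda)-\psi_{\mathrm U}(\eta_{\mathrm U}^\star(\lambda),\lambda)\le (C_{\eta\lambda}^2/\mu)|\widetilde\lambda-\lambda|^2$ uniformly, boundary included. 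Even your linear $O(\log b/\sqrt b)$ cost would be summable after the $1/b$ factor. Second, on bad events you need the risk bounded by a constant, which the final projection supplies; a polynomial-in-$b$ bound would not be summable over an epoch of length $B_m$.

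The genuine gap is in the lower bound. The theorem bounds $\sup_{\vartheta\in\mathcal U_\delta}\sum_b L_b(\vartheta)$, and $\sum_b\sup_\vartheta L_b(\vartheta)\ge\sup_\vartheta\sum_b L_b(\vartheta)$. So per-$b$ or per-dyadic-block minimax bounds obtained from $b$-dependent priors cannot be summed: your least favorable parameter may change with $b$, whereas the theorem fixes one $\vartheta$ for the whole campaign. You need a single prior $\Pi$ used at every budget level, so that $\sup_\vartheta\mathfrak R_T\ge\int\mathfrak R_T\,d\Pi=\sum_b\int L_b\,d\Pi$.

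Your diagnosis that one prior cannot serve all $b$ is the wrong one. The paper fixes a smooth prior $\Pi_\delta$ of radius $\delta$ (not $b^{-1/2}$) on the joint parameter $(\Theta,w)$. Its Fisher information $J(\Pi_\delta)=O_\delta(1)$ is negligible against the $\Theta(b)$ data information at every large $b$, costing only $O_\delta(b^{-2})$ per level. The price is that the coarse information and the policy's expected counts vary over the support. The paper controls this with a uniform $C\delta\gamma$ comparison between the prior-averaged Schur complement and the center one. It splits on $\bar\eta_b<1/K$, where $\alpha\asymp b$ makes the resolvent argument work, versus $\bar\eta_b\ge1/K$, where it compresses to the $d(K-1)$-dimensional fine-only subspace. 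This yields $(\Phi_{\mathrm U}^\star-C\delta)/b-C_\delta/b^2$. The $O(\delta)$ loss is exactly why the statement takes $\lim_{\delta\downarrow0}$ after $T\to\infty$.

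Note also that a prior at scale $b^{-1/2}$ with fixed constant has $J\asymp b$, the same order as the data information, so by itself it would lose a constant in the leading coefficient. Finally, do not condition on the action sequence: adaptive actions depend on past labels and hence on $\vartheta$. Instead use the history-density factorization with parameter-free policy kernels, so the Fisher information is $\E_\vartheta$ of the sum of queried-label information. Van Trees then directly gives $\psi_{\mathrm U}$ at the expected share $\bar\eta_b$. No Jensen step, and no intermediate $\E\psi_{\mathrm U}(\eta_b,\lambda)$ bound, is needed.
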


Relative to the known-$w_\star$ oracle, the leading coefficient decomposes as $\Phi_{\mathrm U}^\star=\Phi_{\mathrm K}^\star+(\Phi_{\mathrm U}^\star-\Phi_{\mathrm K}^\star)$: the nonnegative difference is the nuisance price of unknown aggregation, while learning the allocation introduces no additional first-order term.

\textbf{Proof roadmap.}
For the upper bound, \Cref{lem:pilot-tracking} shows that the design stream uses only $O(b/(\log b)^2)$ budget and that the realized estimation-stream objective approaches the unknown-$w_\star$ static optimum fast enough that both the design opportunity cost and the induced allocation error are summable after the leading $1/b$ risk scaling. Conditional on this design history, \Cref{lem:conditional-efficient-estimation} fixes the resolution schedule and cross-fitting folds, so the predictor attains the efficient risk of the realized fine/coarse mix up to an $O(b_{\mathrm E}^{-3/2})$ remainder. Combining the two lemmas gives $L_b\le\Phi_{\mathrm U}^\star/b$ plus summable terms at estimation endpoints. Together with the held-predictor accounting in Appendix~\ref{app:proof-online-main}, summing over the full budget clock proves \Cref{eq:online-upper}; the same uniform bounds on a small regular neighborhood give the locally uniform statement.

For the lower bound, \Cref{lem:adaptive-information-lower} applies van Trees to a local prior on the joint parameter $(\Theta,w)$. Predictability makes the policy kernel parameter-free conditional on the past, so the same nuisance-profiled information frontier lower-bounds the per-budget Bayes risk by $\Phi_{\mathrm U}^\star/b$ up to lower-order terms. Summing yields \Cref{eq:online-lower}; hence the upper and lower bounds meet in their leading $\log T$ coefficient.

\begin{lemma}[Pilot and tracking error]
\label{lem:pilot-tracking}
Let $\Xi(b)$ be design-stream spending by budget $b$, let $b_{\mathrm E}:=b-\Xi(b)$, and let $\eta_b^{\mathrm E}$ be the realized coarse-budget share in the estimation stream. Uniformly over the regular parameter class,
\begin{equation}
\Xi(b)=O\!\left(\frac{b}{(\log b)^2}\right),
\qquad
\E\!\left[
\psi_{\mathrm U}(\eta_b^{\mathrm E},\lambda)
-
\psi_{\mathrm U}(\eta_{\mathrm U}^\star,\lambda)
\right]
=
O\!\left(\frac{(\log b)^3}{b}\right).
\label{eq:pilot-tracking-rate}
\end{equation}
No separation from the break-even boundary is required.
\end{lemma}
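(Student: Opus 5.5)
We split the statement into the budget-accounting claim for $\Xi(b)$ and the objective-gap claim for $\eta_b^{\mathrm E}$. The gap claim further splits into a tracking error (realized share versus target $\widehat\eta_m$) and an estimation error (target $\widehat\eta_m$ versus $\eta_{\mathrm U}^\star(\lambda)$).

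\textbf{Design spending.} For $b\in[B_m,B_{m+1})$ the design stream has spent at most $B_0+\sum_{j\le m}\zeta_j\Delta B_j=B_0+\zeta_0\sum_{j\le m}2^jB_0/(j+2)^2$. This geometric-type sum is dominated by its last term, giving $O(2^mB_0/m^2)$. Since $m\asymp\log_2(b/B_0)$, this is $O(b/(\log b)^2)$, with constants that do not depend on parameters.

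\textbf{Tracking error.} The signed imbalance $s$ equals $(1-\widehat\eta_m)c_{\mathrm C}N_{\mathrm C}-\widehat\eta_mc_{\mathrm F}N_{\mathrm F}$ accumulated over the block, that is, the deviation of coarse spending from the target share. The greedy rule keeps $|s|\le\max(c_{\mathrm F},c_{\mathrm C})$. At epoch changes $s$ is carried over, and the per-epoch target change contributes at most $O(\Delta B_m)$ weighted differences. We bound the realized cumulative share as a spending-weighted average of past targets, plus $O(1/b_{\mathrm E})$.

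Near the optimum we then need a quadratic bound. Since $\psi_{\mathrm U}$ is smooth and strictly convex on $[0,1-\epsilon]$ and all shares stay bounded away from $1$, we have $\psi_{\mathrm U}(\eta)-\psi_{\mathrm U}(\eta^\star)\le C|\eta-\eta^\star|$. This holds in general and does not need interiority. Near the break-even boundary, a linear bound would only give $\E|\eta-\eta^\star|$. To get $(\log b)^3/b$ we need a quadratic bound in the deviation, so we argue as follows. If $\eta^\star$ is interior, the first-order condition gives $\psi_{\mathrm U}(\eta)-\psi_{\mathrm U}(\eta^\star)\le C(\eta-\eta^\star)^2$. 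If $\eta^\star=0$, convexity gives a gap of $\partial_\eta\psi_{\mathrm U}(0)\,\eta+C\eta^2$ with $\partial_\eta\psi_{\mathrm U}(0)\ge0$. This becomes small exactly when $\widehat\eta$ is positive, which requires $\widetilde\lambda$ to exceed the threshold, i.e. $\lambda$ to be near it. Then $\partial_\eta\psi_{\mathrm U}(0)=O(|\widetilde\lambda-\lambda|)$ and $\eta_{\mathrm U}^\star(\widetilde\lambda)=O(|\widetilde\lambda-\lambda|)$, so the gap is $O(|\widetilde\lambda-\lambda|^2)$.

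The key analytic fact is therefore that $\lambda\mapsto\psi_{\mathrm U}(\eta_{\mathrm U}^\star(\lambda'),\lambda)-\psi_{\mathrm U}^\star(\lambda)$ is $O((\lambda'-\lambda)^2)$ uniformly. This follows because $\eta_{\mathrm U}^\star$ is Lipschitz in $\lambda$ (including at the kink) and the gap is quadratic in $\eta-\eta^\star$ up to the boundary term handled above. This is why no separation from the boundary is needed.

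\textbf{Estimation error.} At epoch $m$ the design data has budget $\asymp\zeta_m2^m\asymp b/(\log b)^2$, with both resolutions in fixed proportion. The guarded projected regressions then give $\E\|\widetilde\Theta_m-\Theta_\star\|^2,\ \E\|\widetilde u_m-u_\star\|^2=O((\log b)^2/b)$ by standard least-squares bounds with bounded covariates and Gaussian noise; the guard handles the low-probability event of a poor empirical covariance. \Cref{ass:identifiability} makes $w\mapsto\Theta_\star w$ injective on the simplex tangent with modulus $\kappa_\Theta$, so the constrained least-squares $\widetilde w_m$ satisfies $\|\widetilde w_m-w_\star\|\lesssim\|\widetilde u_m-u_\star\|+M\|\widetilde\Theta_m-\Theta_\star\|$. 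The constraint $w_k\ge\tau$ is inactive at $w_\star$, so it only helps. Hence $\E(\widetilde\lambda_m-\lambda)^2=O((\log b)^2/b)$.

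\textbf{Combining.} The averaged gap is $\sum_j(\text{weight}_j)\,O((\log B_j)^2/B_j)$, plus mixing effects from earlier epochs carried over through $s$. Because spending is geometric, past epochs contribute a convex combination. Earlier epochs have larger errors, but their weights are proportional to $B_j/b$, so each term is $O((\log b)^2/b)$ and summing over $m=O(\log b)$ epochs gives $O((\log b)^3/b)$, matching the claimed rate. The tracking residue adds $O(1/b)$.

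\textbf{Main obstacle.} The hardest step is getting the quadratic gap bound uniformly across the break-even kink, together with controlling the cross-epoch averaging of shares. For the latter I would first try convexity of $\psi_{\mathrm U}$ in $\eta$ (Jensen) to bound the gap of the averaged share by the average of per-epoch gaps.
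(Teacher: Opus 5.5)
Your proposal is correct and follows essentially the same route as the paper: a deterministic bound on the design sum, guarded-regression rates $O(m^2/2^m)$ transferred to $\widetilde w_m$ and $\widetilde\lambda_m$ through tangent identifiability, a margin-free quadratic plug-in bound, and Jensen over the geometrically weighted epoch targets to obtain $\sum_j 2^{j-m}j^2/2^j=O(m^3/2^m)$. The only local difference is at the kink: you split into interior and boundary cases, using that $\partial_\eta\psi_{\mathrm U}(0,\lambda)$ and $\eta_{\mathrm U}^\star(\widetilde\lambda)$ are both $O(|\widetilde\lambda-\lambda|)$ whenever the plug-in target is positive, which is valid; the paper instead derives Lipschitz continuity of $\eta_{\mathrm U}^\star$ from the two variational inequalities and then uses the single cross-difference bound $\psi_{\mathrm U}(\eta_2,\lambda_1)-\psi_{\mathrm U}(\eta_1,\lambda_1)\le C_{\eta\lambda}|\eta_2-\eta_1||\lambda_2-\lambda_1|$, which covers both cases at once.
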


\begin{proof}[Proof sketch]
The fixed design split gives $\Theta(b/(\log b)^2)$ design observations of each resolution. Guarded projected regression therefore estimates $\lambda$ with mean-square error $O((\log b)^2/b)$. Uniform strong convexity of $\psi_{\mathrm U}$ makes the plug-in objective loss quadratic, including at the all-fine boundary. Geometric weighting across dyadic epochs and bounded-imbalance tracking give \Cref{eq:pilot-tracking-rate}. Appendix~\ref{app:proof-pilot-tracking} gives the full argument.
\end{proof}

\begin{lemma}[Cross-fitted efficient estimation]
\label{lem:conditional-efficient-estimation}
Let $\mathcal G_m$ be the design history determining the current epoch schedule. For sufficiently large estimation endpoints $b$,
\begin{equation}
L_b(\mathcal G_m)
:=
\E\!\left[\ell(\widehat\Theta_b,\Theta_\star)\mid\mathcal G_m\right]
\le
\frac{c_{\mathrm F}\sigma_{\mathrm F}^2d}{b_{\mathrm E}}
\psi_{\mathrm U}(\eta_b^{\mathrm E},\lambda)
+
O(b_{\mathrm E}^{-3/2}),
\label{eq:conditional-efficient-risk}
\end{equation}
with a constant uniform over the regular parameter class and the realized predictable schedules generated by EET.
\end{lemma}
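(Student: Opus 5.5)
Conditional on $\mathcal G_m$, the resolution schedule and fold assignment of the estimation stream are fixed before any estimation covariates or labels arrive (\Cref{ass:sequential} and the construction of \Cref{alg:eet}). So the plan is to treat the estimation data up to endpoint $b$ as a \emph{fixed-design} experiment. It has $N_{\mathrm F}=(1-\eta_b^{\mathrm E})b_{\mathrm E}/c_{\mathrm F}+O(1)$ fine and $N_{\mathrm C}=\eta_b^{\mathrm E}b_{\mathrm E}/c_{\mathrm C}+O(1)$ coarse i.i.d.\ observations, split into two deterministic folds of nearly equal resolution composition. The target bound is then the efficient risk of this static experiment, which by \Cref{lem:unknown-information-split} and \Cref{thm:unknown-frontier} equals $\operatorname{tr}$ of the $\Theta$-block of the inverse efficient information. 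With $\lambda$ as in \eqref{eq:known-lambda}, this is exactly $c_{\mathrm F}\sigma_{\mathrm F}^2 d\,\psi_{\mathrm U}(\eta_b^{\mathrm E},\lambda)/b_{\mathrm E}$ up to $O(b_{\mathrm E}^{-2})$ from integer rounding. The remaining task is to show that the cross-fitted one-step estimator achieves this up to $O(b_{\mathrm E}^{-3/2})$, with constants uniform over the regular class and over $\eta\in[0,1)$.

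\textbf{Step 1 (pilot accuracy).} On each fold $j$, the guarded projected pilot $(\widetilde\Theta^{(j)},\widetilde w^{(j)})$ is built from ordinary least squares for $\Theta$ and $u$, followed by the constrained fit for $w$. I would show $\E\|\widetilde\vartheta^{(j)}-\vartheta_\star\|^4=O(b_{\mathrm E}^{-2})$, using bounded covariates, Gaussian noise, matrix Bernstein for the empirical Gram matrix, and \eqref{eq:tangent-identifiability} to make the map $u\mapsto w$ Lipschitz. The guard and projection keep the pilot inside a compact regular set, so moments stay bounded even on the low-probability event that the Gram matrix is ill-conditioned.

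\textbf{Step 2 (one-step expansion).} For fold $j$, take the Fisher-scoring step $\widehat\vartheta^{(j)}=\widetilde\vartheta^{(-j)}+\mathcal I_j(\widetilde\vartheta^{(-j)})^{-1}S_j(\widetilde\vartheta^{(-j)})$. Here $S_j$ is the score on fold $j$ and $\mathcal I_j$ is the expected information, which is computable because $\Sigma_x=I_d$ and the noise variances are known. Taylor-expand the score around $\vartheta_\star$. The model is bilinear in $(\Theta,w)$, so the second derivative is constant and the remainder is explicitly quadratic. This gives $\widehat\vartheta^{(j)}-\vartheta_\star=\mathcal I_j^{-1}S_j(\vartheta_\star)+R_j$. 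The remainder $R_j$ is controlled by $\|\widetilde\vartheta^{(-j)}-\vartheta_\star\|^2$, by $\|\widetilde\vartheta^{(-j)}-\vartheta_\star\|\cdot\|\widehat{\mathcal I}_j-\mathcal I_j\|/b_{\mathrm E}$, and by the fluctuation of the Gram matrix. Because the pilot is independent of fold $j$ given $\mathcal G_m$, cross terms between the pilot error and $S_j(\vartheta_\star)$ have conditional mean zero. Averaging the two folds gives a leading term $\mathcal I^{-1}S(\vartheta_\star)$. Its $\Theta$-block covariance is the efficient one, since the Schur complement over $v$ is exactly what \Cref{lem:unknown-information-split} computes. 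The leading risk is then $\operatorname{tr}([\mathcal I^{-1}]_{\Theta\Theta})$.

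\textbf{Step 3 (risk bound).} Since $\ell(\widehat\Theta,\Theta_\star)=\|\widehat\Theta-\Theta_\star\|_F^2$ after whitening, and projection onto the convex regular set is nonexpansive, expand the square. This gives leading term $+2\E\langle\text{lead},R\rangle+\E\|R\|^2$. Step 1 yields $\E\|R\|^2=O(b_{\mathrm E}^{-2})$, and Cauchy--Schwarz bounds the cross term by $O(b_{\mathrm E}^{-1/2}\cdot b_{\mathrm E}^{-1})=O(b_{\mathrm E}^{-3/2})$, which is the claimed remainder.

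\textbf{Main obstacle.} The hard part is uniformity near $\eta\to0$ (all-fine) and $\eta\to1$. Near $\eta=0$, a fold may contain few or no coarse labels. The $v$-block of the information is then nearly degenerate, and $\mathcal I_j^{-1}$ blows up in the $w$ directions. To handle this I would first marginalize $w$: the $\Theta$-block of the efficient one-step update depends only on the Schur complement. When $\gamma\to0$ that complement tends to $\alpha I$, so the $\Theta$ update reduces to fine-only least squares, and I would show the bound holds with constants independent of $\gamma$. In practice I would regularize the $v$-block inversion and check that its effect on the $\Theta$ coordinates is $O(b_{\mathrm E}^{-2})$. Near $\eta\to1$ no special treatment is needed, because $\eta_b^{\mathrm E}$ is bounded away from $1$: $\eta_{\mathrm U}^\star(\lambda)$ is bounded below $1$ uniformly over $\lambda\le c_{\mathrm F}\sigma_{\mathrm F}^2/(c_{\mathrm C}\sigma_{\mathrm C}^2)$, and the bounded-imbalance tracking keeps $\eta_b^{\mathrm E}$ within $O(1/b)$ of that target.
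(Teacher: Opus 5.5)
Your overall architecture matches the paper's. You condition on $\mathcal G_m$ to obtain a fixed design with deterministic parity folds, fit guarded opposite-fold pilots, and take one expected-Fisher scoring step per fold. You then identify the averaged leading $\beta$-covariance with the Schur-complement efficient trace, and finish with Cauchy--Schwarz and nonexpansive projection.

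\textbf{The gap is Step~1.} You assert $\E\|\widetilde\vartheta^{(j)}-\vartheta_\star\|^4=O(b_{\mathrm E}^{-2})$ for the whole pilot. But its $w$-part comes from the coarse regression on the opposite fold, so its error is only of order $(N_{\mathrm C,-j}\vee1)^{-1/2}$. It does not vanish at all when that fold contains only the fixed initialization coarse labels, and there the coarse Gram guard can also fail with non-negligible probability.

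These schedules are exactly the ones the lemma must cover uniformly. At or below the break-even boundary $\eta_{\mathrm U}^\star=0$, and small or mis-estimated plug-in targets produce bounded or sublinear coarse counts. The online theorem uses the lemma with no margin from the boundary. In that regime, a one-step remainder controlled by $\|\widetilde\vartheta-\vartheta_\star\|^2$ with a sample-size-free constant is of order $N_{\mathrm C}^{-1}$, not $b_{\mathrm E}^{-1}$. So Step~3's $\E\|R\|^2=O(b_{\mathrm E}^{-2})$ does not follow.

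Your ``main obstacle'' paragraph notices the few-coarse-labels regime but puts the difficulty in the wrong place. The $\beta$ row of $\mathcal I_j^{-1}$ does not blow up, because $I_{\beta v}I_{vv}^{-1}=(w\otimes\Theta Q)\{(\Theta Q)^\top\Theta Q\}^{-1}$ is free of $\gamma_j:=N_{\mathrm C,j}/\sigma_{\mathrm C}^2$. Hence, with $\alpha_j:=N_{\mathrm F,j}/\sigma_{\mathrm F}^2$, $[\mathcal I_j^{-1}]_{\beta v}=-\alpha_j^{-1}(w\otimes\Theta Q)\{(\Theta Q)^\top\Theta Q\}^{-1}=O(b_{\mathrm E}^{-1})$. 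No regularization of the $v$-block is needed once each fold has at least one coarse label, which the initialization guarantees.

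\textbf{The missing idea} is a remainder analysis that tolerates a non-vanishing nuisance pilot error. Fine means do not involve $w$, so $\Delta_v=\widetilde v-v_\star$ enters the $\beta$-component of the one-step error only in two ways. First, through sums over coarse score observations, premultiplied by the $O(b_{\mathrm E}^{-1})$ $\beta$ row of $\mathcal I_j^{-1}$. Second, through the parameter dependence of that row:
\begin{itemize}
\item $[\mathcal I_j^{-1}]_{\beta\beta}$ is Lipschitz with constant of order $\gamma_j/b_{\mathrm E}^2$ and multiplies a truth score of size $\sqrt{b_{\mathrm E}}$;
\item $[\mathcal I_j^{-1}]_{\beta v}$ is Lipschitz with constant $O(b_{\mathrm E}^{-1})$ but multiplies the nuisance score, of size $\sqrt{\gamma_j}$.
\end{itemize}
Hence every occurrence of $\Delta_v$ carries a factor of order $\gamma_j/b_{\mathrm E}$ or $\sqrt{\gamma_j}/b_{\mathrm E}$.

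Now use the count-dependent pilot rate $\E\|\Delta_v\|^{2q}\le C\{b_{\mathrm E}^{-q}+(N_{\mathrm C,-j}\vee1)^{-q}\}$, and $\gamma_j\asymp N_{\mathrm C,-j}$ from the parity folds. Writing $N_{\mathrm C}$ for the fold coarse count, one gets for example $(\gamma_j/b_{\mathrm E})\|\Delta_v\|_{L_4}^2\le C(N_{\mathrm C}/b_{\mathrm E})(b_{\mathrm E}^{-1}+N_{\mathrm C}^{-1})\le C/b_{\mathrm E}$ and $(\sqrt{\gamma_j}/b_{\mathrm E})\|\Delta_v\|_{L_4}\le C/b_{\mathrm E}$. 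Thus every remainder class is $O(b_{\mathrm E}^{-1})$ in $L_2$, uniformly in the coarse count.

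Unless you replace Step~1 by this count-dependent rate and carry out the weighted bookkeeping, your argument proves the bound only for schedules with $N_{\mathrm C}\asymp b_{\mathrm E}$.
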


At estimation endpoints, combining \Cref{lem:pilot-tracking,lem:conditional-efficient-estimation} and $b_{\mathrm E}^{-1}-b^{-1}=O(1/[b(\log b)^2])$ gives
\begin{equation}
L_b
\le
\frac{\Phi_{\mathrm U}^\star}{b}
+
O\!\left(
\frac{1}{b(\log b)^2}
+
\frac{(\log b)^3}{b^2}
+
\frac{1}{b^{3/2}}
\right),
\label{eq:per-budget-online-excess}
\end{equation}
whose remainder is summable over estimation endpoints. Appendix~\ref{app:proof-online-main} shows that the intervening held-predictor budget levels preserve summability on the full budget clock.

\begin{lemma}[Adaptive information lower bound]
\label{lem:adaptive-information-lower}
For any predictable resolution policy, a smooth compactly supported prior on the joint local parameter $(\Theta,w)$ inside $\mathcal U_\delta$ gives per-budget Bayes risk at least
\begin{equation}
\frac{\Phi_{\mathrm U}^\star(\vartheta_\star)-o_\delta(1)}{b}
-
O_\delta(b^{-2}).
\label{eq:adaptive-information-lower}
\end{equation}
\end{lemma}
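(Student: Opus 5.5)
We prove \Cref{lem:adaptive-information-lower} with a multivariate van Trees inequality applied to the joint local parameter $\vartheta=(\operatorname{vec}(\Theta),v)$, followed by a nuisance-profiling step that reduces the bound to the static frontier of \Cref{thm:unknown-frontier}.

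\textbf{Prior and total information.} Fix a product prior $\pi_\delta$ on $\mathcal U_\delta$ with smooth density vanishing on the boundary, for instance a rescaled $\cos^2$ bump in each coordinate. Its prior information $\mathcal J(\pi_\delta)$ is $O(\delta^{-2})$. Condition on the budget level $b$. For any predictable policy, the likelihood of the history factorizes into policy kernels $P(A_t\mid\mathcal F_{t-1})$, which do not depend on $\vartheta$, times Gaussian observation densities. The score therefore involves only the observation factors. Using \Cref{ass:covariates,ass:noise-cost}, the expected Fisher information at $\vartheta$ is
\[
\E_\vartheta\bigl[N_{\mathrm F}(b)\bigr]\,\mathcal I_{\mathrm F}(\vartheta)+\E_\vartheta\bigl[N_{\mathrm C}(b)\bigr]\,\mathcal I_{\mathrm C}(\vartheta),
\]
where $\mathcal I_{\mathrm F}$ and $\mathcal I_{\mathrm C}$ are the per-query information matrices. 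The coarse matrix $\mathcal I_{\mathrm C}$ is built from the gradient $x\mapsto(x\otimes w,\;Q^\top\Theta^\top x)$ of the coarse mean. Two facts make this tractable. First, the counts obey the budget constraint $c_{\mathrm F}N_{\mathrm F}+c_{\mathrm C}N_{\mathrm C}\le b$ almost surely. Second, since the action is chosen before $x_t$ is observed, each per-query information is exactly $\E_x[\cdot]$.

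\textbf{Van Trees bound on the target.} Apply van Trees to the functional $\operatorname{vec}(\Theta)$, with weight matrix equal to the identity, since whitened risk equals Frobenius loss. This gives
\[
\E_{\pi_\delta}\E_\vartheta\,\ell \;\ge\; \operatorname{tr}\Bigl(\bigl[\E_{\pi_\delta}\mathcal I_b(\vartheta)+\mathcal J(\pi_\delta)\bigr]^{-1}_{\Theta\Theta}\Bigr),
\]
where $[\cdot]^{-1}_{\Theta\Theta}$ is the $\Theta\Theta$-block of the joint inverse. By block inversion, this block equals the inverse of the Schur complement, that is, of the efficient information for $\Theta$ with $v$ profiled out. \Cref{lem:unknown-information-split} identifies the structure of this Schur complement at $\vartheta_\star$: eigenvalue $\bar\alpha$ with multiplicity $A_{\mathrm U}$, and $\bar\alpha+\bar\gamma\|w_\star\|^2$ with multiplicity $D_{\mathrm U}$. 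Here $\bar\alpha=\E_\pi\E N_{\mathrm F}/\sigma_{\mathrm F}^2$ and $\bar\gamma=\E_\pi\E N_{\mathrm C}/\sigma_{\mathrm C}^2$.

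\textbf{Main obstacle.} The information is averaged over $\vartheta$, whereas the Schur complement is evaluated at $\vartheta_\star$. Two errors must be controlled.
\begin{itemize}
\item By continuity of $\mathcal I_{\mathrm F},\mathcal I_{\mathrm C}$ in $\vartheta$, which holds under \Cref{ass:identifiability} because $\sigma_{\min}(\Theta Q)\ge\kappa_\Theta/2$ on $\mathcal U_\delta$, replacing each per-query matrix by its value at $\vartheta_\star$ costs a relative factor $1+O(\delta)$. The counts enter linearly, so we first average the counts and then use $\E\mathcal I_b=\E N_{\mathrm F}\,\mathcal I_{\mathrm F}+\E N_{\mathrm C}\,\mathcal I_{\mathrm C}$. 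The key point is that the total information is linear in the counts and the per-query matrices are nearly constant.
\item The prior information $\mathcal J(\pi_\delta)=O(\delta^{-2})$ is a constant perturbation of a matrix of order $b$. Its effect on the inverse trace is therefore $O_\delta(b^{-2})$.
\end{itemize}

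\textbf{Concluding via the frontier.} Writing $\eta$ for the averaged coarse spending share, so that $\bar\alpha=(1-\eta)b/(c_{\mathrm F}\sigma_{\mathrm F}^2)$ and $\bar\gamma=\eta b/(c_{\mathrm C}\sigma_{\mathrm C}^2)$ up to the bounded residue, the trace becomes
\[
\frac{c_{\mathrm F}\sigma_{\mathrm F}^2 d}{b}\,\psi_{\mathrm U}(\eta,\lambda)\,(1-O(\delta)).
\]
Minimizing over $\eta\in[0,1)$ gives at least $\Phi_{\mathrm U}^\star(\vartheta_\star)/b$. The $\lambda$ here is evaluated at $\vartheta_\star$; continuity of $\lambda$ in $w$ over $\mathcal U_\delta$ absorbs the difference into the $o_\delta(1)$ term. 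The Bayes risk lower-bounds the supremum over $\mathcal U_\delta$, which yields \eqref{eq:adaptive-information-lower}.
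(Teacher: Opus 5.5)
Your setup matches the paper's: parameter-free predictable kernels, adaptive information as the expected sum of per-query information, joint van Trees on $(\operatorname{vec}(\Theta),v)$, and the $\Theta\Theta$ block of the joint inverse as the inverse of the profiled Schur complement. The gap is in your ``main obstacle'' step. You treat it as uniform over policies, but it fails when the averaged coarse share is close to one.

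Prior averaging does not give a relative factor $1+O(\delta)$. The coarse per-query matrix has rank at most $d$, and its range $\{(w\otimes x,Q^\top\Theta^\top x)\}$ moves with $\vartheta$. What continuity actually gives is an additive bound. With $n_{\mathrm C}(\vartheta)=\E_\vartheta N_{\mathrm C}(b)$, you get $\bigl\|\int n_{\mathrm C}(\vartheta)[\mathcal I_{\mathrm C}(\vartheta)-\mathcal I_{\mathrm C}(\vartheta_\star)]\,\pi_\delta(d\vartheta)\bigr\|_{\mathrm{op}}\le C\delta\bar N_{\mathrm C}$. Passing this through the Lipschitz Schur-complement map yields an additive $C\delta\bar\gamma$ error in the profiled information. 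Likewise, $\mathcal J(\pi_\delta)$ is an additive $O_\delta(1)$ perturbation.

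Both errors are negligible only against the floor $\bar\alpha I$ of the center Schur complement, so you need $\bar\alpha\asymp b$. A policy that spends nearly all its budget on coarse labels can have $\bar\alpha=o(b)$ or even $\bar\alpha=0$. In that regime $\psi_{\mathrm U}(\eta,\lambda)\to\infty$ as $\eta\to1$. Meanwhile the prior-averaged information plus $\mathcal J(\pi_\delta)$ is nonsingular, so its inverse trace is finite. Your display $\frac{c_{\mathrm F}\sigma_{\mathrm F}^2d}{b}\psi_{\mathrm U}(\eta,\lambda)(1-O(\delta))$ is therefore false there. Since the lemma quantifies over every predictable policy, this regime must be handled.

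The paper handles it by splitting on $\bar\eta_b$.

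\textbf{Case $\bar\eta_b<1/K$.} Here $\bar\alpha\ge(1-1/K)b/(c_{\mathrm F}\sigma_{\mathrm F}^2)+O(1)$. Every $\beta$ Schur complement dominates $\bar\alpha I$, so your perturbation argument goes through via the resolvent identity, with losses $C\delta/b$ and $C_\delta/b^2$.

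\textbf{Case $\bar\eta_b\ge1/K$.} Compress onto the fixed subspace $V_{w_\star}=\{H:Hw_\star=0\}$, which has dimension $d(K-1)$. The inverse trace is at least the trace of the inverse of this compression. On $V_{w_\star}$, coarse information is only $O(\delta\bar\gamma)$ and prior information is $O_\delta(1)$. This gives a Bayes risk of at least
\[
\frac{d(K-1)}{\bar\alpha+C\delta\bar\gamma+C_\delta}\ge\frac{dKc_{\mathrm F}\sigma_{\mathrm F}^2}{b}-\frac{C\delta}{b}-\frac{C_\delta}{b^2}.
\]
This suffices because $\Phi_{\mathrm U}^\star\le dKc_{\mathrm F}\sigma_{\mathrm F}^2$, the all-fine coefficient.

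Adding this second case closes your proof. Any other argument that a large coarse share is never better than all-fine, up to $O(\delta)$, would also work.
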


\begin{proof}[Proof sketch]
Predictability makes the policy kernel parameter free, so adaptive Fisher information is the expected sum of queried-label information. Joint van Trees and nuisance profiling as in \Cref{lem:unknown-information-split} yield \Cref{eq:adaptive-information-lower}; Appendix~\ref{app:proof-adaptive-lower} gives the details.
\end{proof}

\section{Numerical Experiments}
\label{sec:experiments}
We use two synthetic instances to test complementary consequences of the theory: $d=6,K=5$ widens the known/unknown break-even gap, while $d=20,K=5$ is the canonical online instance. All losses are exact population fine-prediction risks under the whitened covariance; full protocols, estimators, seeds, and diagnostics appear in Appendix~\ref{app:experiments}.

\textbf{Structural effect of unknown aggregation.}
We first isolate the price of not knowing the aggregation direction. For $d=6$ and $K=5$, the known-aggregation break-even point is $\lambda=5$, whereas the unknown-aggregation boundary is $\lambda=15$. At budget $B=76{,}800$, the known-$w_\star$ design achieves fixed-budget gains relative to all-fine of $2.87\%$ (95\% CI $[1.35,4.40]\%$) at $\lambda=10$ and $4.66\%$ ($[3.03,6.30]\%$) at $\lambda=14$, compared with the closed-form targets $2.00\%$ and $3.68\%$; both targets lie inside the corresponding intervals. The unknown-$w_\star$ optimum remains all-fine throughout this intermediate region. Appendix~\ref{app:structural-static-experiment} reports the full sweep and both tested budgets.

\begin{figure}[t]
\centering
\includegraphics[width=0.75\textwidth]{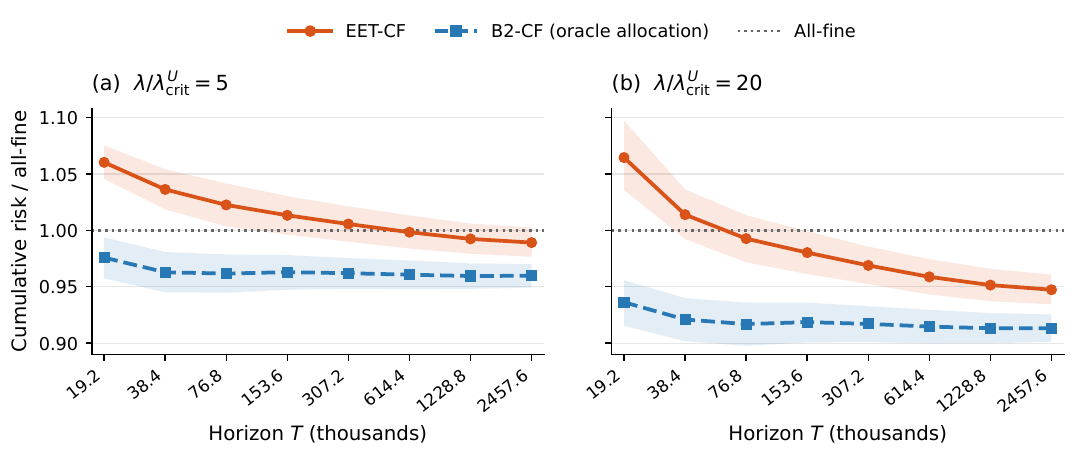}
\caption{\textbf{Finite-budget value of online adaptation.}
Cumulative fine-prediction risk is normalized by all-fine, so values below $1$ indicate an improvement. EET-CF is EET with the cross-fitted estimator in \Cref{alg:eet}; B2-CF uses the same estimator with the oracle unknown-aggregation static share and still treats $w_\star$ as unknown. Shaded bands are paired-bootstrap 95\% intervals over 20 seeds. In the $20\times$ regime EET-CF clearly improves on all-fine; in the $5\times$ regime the mean improvement appears later and remains smaller.}
\label{fig:main-online-value}
\end{figure}

\textbf{Online adaptation.}
For the online experiment we use $d=20$, $K=5$, $w_\star=(0.30,\allowbreak 0.25,\allowbreak 0.20,\allowbreak 0.15,\allowbreak 0.10)$, $c_{\mathrm F}=5$, $c_{\mathrm C}=1$, $\sigma_{\mathrm F}=1$, independent Rademacher covariates, and independent Gaussian query noise. We vary $\sigma_{\mathrm C}$ and report $\lambda/\lambda_{\mathrm U}^{\mathrm{crit}}\in\{5,20\}$ over 20 seeds. At $T=2{,}457{,}600$, $TL_T/\Phi_{\mathrm U}^\star$ is $1.031$ versus $1.028$ at $5\times$ and $1.026$ versus $1.023$ at $20\times$ for EET-CF versus B2-CF, showing that the learned allocation is numerically close to the oracle-share benchmark. Figure~\ref{fig:main-online-value} reports cumulative fine-prediction risk relative to all-fine across the tested horizons. EET-CF improves the final mean by $1.08\%$ at $5\times$, although the 95\% interval includes zero, and by $5.26\%$ at $20\times$ (95\% CI $[3.92,6.56]\%$). The slower cumulative approach is consistent with a bounded additive lower-order term, whose relative effect decays only logarithmically; Appendix~\ref{app:online-remainder-diagnostic} reports the direct remainder diagnostic. Below and at the break-even boundary, all-fine is the static optimum and the online learner retains a visible transient exploration cost; Appendix~\ref{app:below-threshold-diagnostic} reports that diagnostic.

\section{Discussion and Conclusion}
\label{sec:conclusion}

In this work, we study cost-aware two-resolution supervision when expensive fine labels reveal a vector response and cheaper coarse labels reveal an unknown scalar aggregation. The theory gives a closed-form break-even rule, optimal resolution mixes, and an online estimate-and-track policy that attains the optimal first-order cumulative-risk coefficient with a matching local asymptotic minimax lower bound. Experiments illustrate the structural cost of unknown aggregation and show EET approaching the oracle-share benchmark, with finite-budget gains when coarse supervision is favorable. The analysis isolates a linear-Gaussian setting with a fixed aggregation rule and pre-covariate resolution choice. We discuss richer extensions in Appendix~\ref{app:limitations}.

\clearpage

\bibliographystyle{plainnat}
\bibliography{references}

\clearpage
\appendix
\section*{Appendix Roadmap}
Appendix~\ref{app:discussion} expands the literature comparison and discusses limitations and extensions. Appendix~\ref{app:experiments} gives the complete synthetic protocols, uncertainty calculations, structural comparisons, and finite-budget diagnostics. Appendix~\ref{app:proofs} contains the proofs: Proposition~\ref{prop:known-static-risk} in Appendix~\ref{app:proof-known-risk}, Theorem~\ref{thm:known-static-optimum} in Appendix~\ref{app:proof-known-optimum}, Theorem~\ref{thm:unknown-frontier} in Appendix~\ref{app:proof-unknown-frontier}, Lemma~\ref{lem:unknown-information-split} in Appendix~\ref{app:proof-information-split}, Corollary~\ref{cor:static-benchmark-attainment} in Appendix~\ref{app:proof-static-benchmark}, and the online theorem and its three supporting lemmas in Appendices~\ref{app:proof-online-main}--\ref{app:proof-adaptive-lower}.

\paragraph{Frequently used notation.}
\begin{center}
\small
\begin{tabular}{ll}
\toprule
Symbol & Meaning \\
\midrule
$B,b,T$ & static budget, running budget level, and cumulative-risk horizon \\
$B_0$ & end of finite initialization and start of the reported budget clock \\
$\mathsf B_1,\mathsf B_2,\mathsf B_3$ & known-$w_\star$ oracle, unknown-aggregation oracle-share, online learned allocation \\
$N_{\mathrm F},N_{\mathrm C}$ & fine and coarse query counts \\
$\eta,\widehat\eta_m,\eta_b^{\mathrm E}$ & static, epoch-target, and realized estimation-stream coarse shares \\
$\lambda$ & effective coarse-information ratio in \Cref{eq:known-lambda} \\
$\psi_{\mathrm K},\psi_{\mathrm U}$ & known/unknown static risk objectives \\
$G_{\mathrm K},G_{\mathrm U}$ & known/unknown leading gains relative to all-fine \\
$A_{\mathrm U},D_{\mathrm U}$ & fine-only and fine-plus-coarse multiplicities under unknown aggregation \\
$\Phi_{\mathrm K}^\star,\Phi_{\mathrm U}^\star$ & optimal known/unknown first-order cumulative-risk coefficients \\
$Q,w_0,v,\vartheta$ & simplex tangent basis, center, coordinate, and joint local parameter \\
$\mathcal U_\delta$ & local regular neighborhood around $\vartheta_\star$ \\
$B_m,\zeta_m$ & dyadic epoch boundary and design-stream budget fraction \\
$\Xi(b),b_{\mathrm E}$ & design-stream spending and remaining estimation budget \\
$\mathcal G_m$ & design-stream history determining epoch-$m$ allocation \\
$L_x$ & almost-sure whitened covariate-norm bound \\
\bottomrule
\end{tabular}
\end{center}

\section{Extended Discussion}
\label{app:discussion}

This section expands the literature comparison and the limitations discussed briefly in the main text, focusing on aspects that interact directly with the two-resolution information structure.

\subsection{Extended Related Work}
\label{app:extended-related}

\paragraph{Multiresponse optimal design and heterogeneous measurements.}
Classical and modern optimal-design theory provides the broad statistical framework for combining heterogeneous measurements. \citet{silvey1973geometric} develops a geometric approach to optimal design, while \citet{sagnol2011computing} shows that several multiresponse criteria, including A-optimality, can be handled under linear resource constraints. Related work considers multivariate observations and increasingly general response structures \citep{soumaya2015optimal,somogyi2026randomized,filova2026optimal,sagnol2015computing}. In the known-aggregation benchmark of our model, vectorizing $\Theta$ turns a fine query and a coarse query into two known information matrices, so the static problem is a structured two-experiment instance of this general framework. The special rank-one response structure reduces the cost-constrained design to a scalar budget share and yields the explicit threshold in \Cref{sec:static}. Our main unknown-aggregation setting differs because the coarse observation operator itself contains an unknown nuisance parameter.

A related measurement-design viewpoint appears in \citet{mareis2026cost}, who optimize complete and partial measurements under collection costs for a causal estimand. This work reinforces the point that a full-versus-partial acquisition decision can be statistically central without being specific to active learning. Our target and observation geometry are different: the target here is the full multivariate regression map, while the cheaper observation is a scalar projection of that same map.

\paragraph{Multifidelity estimation and adaptive resource allocation.}
Multifidelity methods also allocate finite budgets across expensive and cheap information sources. \citet{peherstorfer2016optimal} derives analytic cost-optimal model-management rules for multifidelity Monte Carlo, while \citet{schaden2020multilevel} and \citet{croci2023multi} study budget-optimal linear unbiased estimation and multi-output extensions. The relationship between fidelities need not be known in advance: \citet{peherstorfer2019multifidelity} adapts a low-fidelity model, \citet{prescott2024efficient} estimates allocation-relevant quantities during multifidelity inference, \citet{xu2022bandit} learns cross-model statistical relationships while allocating a budget, and \citet{dixon2026optimally} balances pilot estimation against the final multifidelity estimator.

These papers are close to our online problem at the level of resource allocation: all must learn quantities that determine whether a cheap source is useful. The inferential distinction is that $w_\star$ is part of the conditional-mean observation law $u_\star=\Theta_\star w_\star$, so uncertainty about the cross-resolution relationship changes the efficient information for the target itself. Profiling out $w_\star$ removes exactly $K-1$ coarse-informative directions and changes the static break-even boundary. Online optimal design gives another neighboring perspective: \citet{fontaine2021online} learns an {A}-optimal scalar-regression design under unknown heteroscedastic noise, whereas our acquisition action chooses response resolution before observing the current covariate. More broadly, related online decision problems couple learning with costly resource or action decisions \citep{xu2025joint,xu2025online}, or with unknown response laws in contextual pricing \citep{xu2026optimal}.

\paragraph{Cost-sensitive active learning and annotation granularity.}
Cost-sensitive active learning has long treated the annotation source as part of the acquisition decision. \citet{donmez2008proactive} select among imperfect oracles while accounting for cost and reliability, and \citet{wallace2011should} allocate instances across experts of different quality and cost. Other work varies the information requested from the same example: \citet{hu2018active} studies partial-feedback queries, \citet{yan2018cost} considers hierarchy-dependent query costs, and \citet{rotman2022multi} allows task-specific annotations with different granularities and costs. More recent work makes supervision level especially explicit, including full versus weak annotations \citep{tejero2023full} and instance-wise supervision-level optimization \citep{matsuo2025instance}. High-stakes chatbot evaluation also combines component-level and end-to-end assessment under limited expert supervision \citep{jha2026developing}, providing an application-level motivation for heterogeneous supervision.

A related literature studies inference from intrinsically coarse labels. \citet{fotakis2021efficient} formalizes categorical coarsening through partitions of a latent fine-label space and derives learning guarantees that depend on the information preserved by coarsening. \citet{nguyen2026leveraging} considers expensive fine-grained human labels together with cheaper coarse VLM labels in active learning. These settings motivate resolution-aware acquisition but use different observation structures. Here the learner purchases either a $K$-dimensional response or a scalar projection of that same response, while the target remains the full vector-valued conditional mean.

\subsection{Limitations and Extensions}
\label{app:limitations}

\paragraph{Joint item and resolution selection.}
The canonical model chooses the resolution before observing the current covariate. This isolates the global value of each resolution while retaining covariates in the prediction problem itself. If $x_t$ were available before the action, the learner would need to decide both \emph{where} information is most valuable and \emph{which resolution} to purchase. The resulting problem is a matrix-valued optimal-design problem rather than the one-dimensional frontier derived here.

\textbf{Richer cross-resolution structure.}
Several extensions change the cross-resolution information geometry, including shared item-level effects, multiple or heterogeneous coarse channels, nonlinear aggregation, and unknown noise levels; we discuss these separately below. Near the asymptotic break-even boundary, finite-budget benefits can also be delayed by estimation and adaptation costs.

\paragraph{Shared item-level noise and repeated measurements.}
The action space in the present paper requests only one resolution for each arriving item, so no joint fine/coarse noise law is needed for the main likelihood. If an extension allowed both resolutions to be collected on the same item, a shared latent item effect or correlated measurement noise could make pairing informative beyond two separately sampled observations. Such a model would also make the effective coarse variance depend on the form of the shared component and could require learning additional noise parameters.

\paragraph{Multiple resolutions and heterogeneous aggregation.}
We analyze one fine channel and one scalar coarse channel with a single fixed aggregation vector. Multiple coarse channels, several levels of granularity, or annotator-dependent aggregation vectors would replace the single rank-one direction by a collection of partially overlapping information subspaces. The same efficient-information viewpoint may still apply, but the phase diagram could contain several entry thresholds rather than one scalar break-even point.

\paragraph{Scale normalization and unconstrained aggregation.}
The simplex assumption fixes the scale of the coarse aggregation and gives a $(K-1)$-dimensional nuisance tangent. If $w_\star$ were instead locally unconstrained in $\mathbb R^K$ and $\Theta_\star$ had full column rank with $K<d$, the same tangent-space calculation gives
\[
A=d(K-1)+K,
\qquad
D=d-K,
\qquad
\lambda_{\mathrm{crit}}=\frac{dK}{d-K}.
\]
Thus the efficient-information argument and scalar allocation optimization carry over, but with a larger nuisance price. We do not make a corresponding intercept claim because an additional offset interacts differently with the centered covariate model and requires a separate derivation.

\paragraph{Nonlinear aggregation.}
The exact decomposition relies on the coarse conditional mean being linear in the fine response. With a smooth nonlinear aggregator, the locally informative direction can depend on the current parameter and potentially on the item itself. The coarse-information subspace would then vary across observations, and the static design would generally no longer collapse to a single scalar allocation share.

\paragraph{High-dimensional structure.}
The theory keeps $d$ and $K$ fixed as the annotation budget grows. If either dimension grows, or if $\Theta_\star$ is sparse or low rank, projected least squares and classical efficiency must be replaced by regularized estimators and a corresponding high-dimensional analysis. Structural assumptions may also change the number of directions that coarse feedback can usefully inform.

\paragraph{Estimated covariance and noise levels.}
The main theorem treats $\Sigma_x$ and the two noise variances as known; exact whitening is justified by free unlabeled covariates. The allocation depends on noise and cost through their cost-adjusted ratio. If an independent or cross-fitted estimator of that ratio has mean-square error no larger than the aggregation-weight contribution, the same quadratic plug-in argument suggests that the first-order allocation result should persist. We leave a complete theorem for estimated variances to future work.

\paragraph{Expected rather than time-uniform guarantees.}
Our main performance criterion is ex-ante expected cumulative fine-prediction risk. A time-uniform or high-probability guarantee would require simultaneous control of the sequential score, the pilot, and the allocation error over all budget levels. Such a result is stronger and may introduce additional logarithmic factors even if the first-order expected-risk coefficient is unchanged.

\section{Extended Numerical Experiments}
\label{app:experiments}

This section records the full synthetic protocol, uncertainty calculations, and additional diagnostics behind \Cref{sec:experiments}. The experiments test quantitative consequences of the linear theory; they are not evidence that the canonical model is an accurate description of a particular real annotation system.

\subsection{Experimental Setup and Reproducibility}
\label{app:experimental-setup}

\paragraph{Canonical $d=20,K=5$ instance.}
The static-convergence and online experiments use
$w_\star=(0.30,0.25,0.20,0.15,0.10)$,
$c_{\mathrm F}=5$, $c_{\mathrm C}=1$, and $\sigma_{\mathrm F}=1$.
The columns of $\Theta_\star$ are the first five columns of a deterministic orthonormal DCT-II basis in $\mathbb R^{20}$, so $\Theta_\star^\top\Theta_\star=I_5$. Covariates have independent Rademacher coordinates, and queried observations use independent Gaussian noise. For this instance,
$A_{\mathrm U}=84$, $D_{\mathrm U}=16$, and $\lambda_{\mathrm U}^{\mathrm{crit}}=6.25$. We vary only $\sigma_{\mathrm C}$ to obtain the desired effective ratios.

The static experiments use 200 seeds, $42001$--$42200$, common random numbers across allocations, and exact population fine-prediction risk. The larger-budget confirmation uses $B\in\{19200,38400,76800\}$ together with the earlier budgets $2400,4800,9600$; the exact theory-optimal allocations are evaluated in the $2\times$, $5\times$, and $20\times$ regimes.

The online experiment uses exactly 20 seeds, $44001$--$44020$, initialization budget $B_0=9600$, dyadic epochs, design share
$\zeta_m=0.20/(m+2)^2$, and a fixed $50\%$ coarse budget share inside each design block. The reported regimes are
$\lambda/\lambda_{\mathrm U}^{\mathrm{crit}}\in\{0.75,1,5,20\}$. All four use horizons through $614400$; the two mixed regimes additionally use $1228800$ and $2457600$.

\paragraph{Initialization and cumulative-risk clock.}
For the canonical online instance, initialization through $B_0$ is fixed in advance: 40 design-fine labels, 40 design-coarse labels, 40 mandatory estimation-fine labels, and 40 mandatory estimation-coarse labels are collected first, after which the remaining pre-$B_0$ budget purchases 1,824 additional estimation-fine labels. Thus the design stream contains 40 fine and 40 coarse observations (cost 240), the estimation stream contains 1,864 fine and 40 coarse observations (cost 9,360), and there is no unused residue. The design observations are used only to form the first allocation target; the predictor at $B_0$ is fit from the estimation folds only. The reported cumulative risk starts inclusively at $B_0$:
$\mathfrak R_T=\sum_{b=B_0}^T L_b$.

\paragraph{Cross-fitted EET estimator.}
The design stream estimates $w_\star$ and the target allocation. Estimation observations are permanently assigned by deterministic parity within each resolution stream to two folds before the current label is observed. At every completed estimation query, each score fold takes one expected-Fisher scoring step from a guarded projected pilot fitted on the opposite fold; the two fine-map estimates are averaged and projected onto the common operator-norm ball. The predictor is held fixed between estimation-query completions, including throughout design blocks. The static $\mathsf B_2$ comparator uses the same cross-fitted estimator but is given only the locally optimal unknown-aggregation static share; its estimator still treats $w_\star$ as unknown. At the all-fine boundary the efficient estimator reduces to pooled guarded projected fine regression, so we use that estimator rather than artificially splitting the fine-only data.

\paragraph{Uncertainty.}
For online curves, each point is a ratio of aggregate mean cumulative risks. The shaded intervals are pointwise 95\% percentile intervals from 20,000 paired bootstrap resamples of the 20 common seeds, using fixed bootstrap seed 48001. Paired method differences use seedwise differences with mean $\pm1.96$ standard errors. For static gains, the primary quantity is the ratio-of-means gain $1-\bar L_{\mathrm{method}}/\bar L_{\mathrm{AF}}$; its 95\% interval uses a paired delta-method calculation. The cumulative risk is the exact integer-budget sum of held population risk, not a checkpoint interpolation or test-set estimate. No scoring guard, fallback, or final-projection event occurred in the online estimator evaluations.  

\subsection{Static Convergence Check}
\label{app:static-convergence}

The first static experiment asks whether the finite-budget unknown-aggregation risk approaches the closed-form coefficient from \Cref{thm:unknown-frontier}. All six budgets use the same two-fold cross-fitted projected one-step estimator as the static $\mathsf B_2$ construction. Figure~\ref{fig:app-static-convergence} reports paired uncertainty over the 200 seeds. At $B=76800$, the empirical gains in the $2\times$, $5\times$, and $20\times$ regimes are $1.226\%$, $4.848\%$, and $9.617\%$, compared with theoretical values $1.548\%$, $5.271\%$, and $10.012\%$. The last-step changes in coefficient excess are much smaller than their paired standard errors, so the slight visual flattening or increase at the largest budgets is not resolved statistically.

\begin{figure}[t]
\centering
\includegraphics[width=0.94\textwidth]{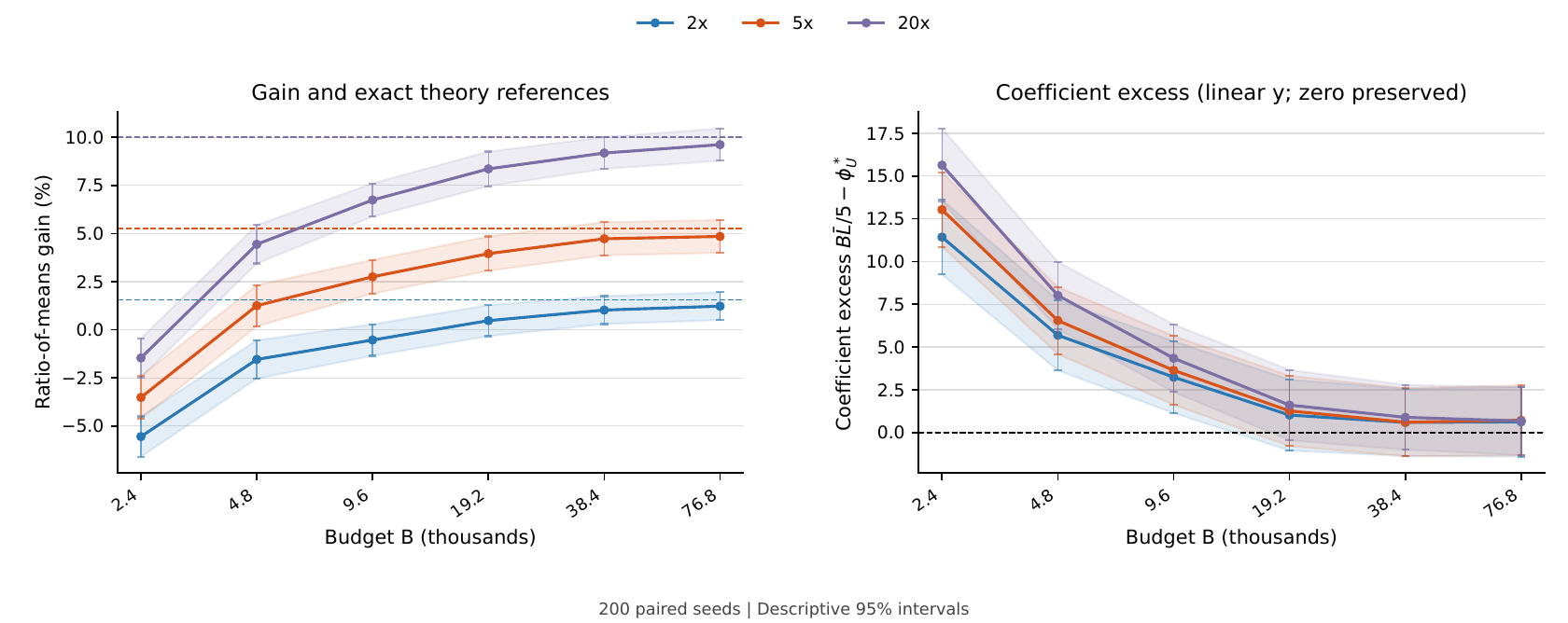}
\caption{\textbf{Static convergence in the canonical $d=20,K=5$ instance.} Left: ratio-of-means annotation gain with paired 95\% intervals and exact theory references. Right: first-order coefficient excess with descriptive 95\% intervals. All points use the same 200 paired seeds.}
\label{fig:app-static-convergence}
\end{figure}

\subsection{Known versus Unknown Aggregation}
\label{app:structural-static-experiment}

To make the structural cost of unknown aggregation visible, we use a second instance with $d=6$ and $K=5$. The first five DCT-II basis vectors define $\Theta_\star$; $w_\star$, costs, and fine-noise variance are unchanged. Here the known-aggregation break-even point is $5$, whereas the unknown-aggregation threshold is $15$. We evaluate
$\lambda\in\{4,5,7.5,10,14,15,20,40\}$ at budgets $19200$ and $76800$ using 200 seeds. $\mathsf B_1$ receives the known-$w_\star$ theory-optimal share and uses efficient known-weight Gaussian regression; $\mathsf B_2$ receives the unknown-$w_\star$ theory-optimal share and uses the cross-fitted estimator.

Figure~\ref{fig:app-known-unknown} shows both budgets. The vertical axis is the fixed-budget gain relative to all-fine,
$1-L_{\mathrm{method}}/L_{\mathrm{AF}}$: a value of $4\%$ means $4\%$ lower fine-prediction risk at the same annotation budget. In the leading theory, the same percentage is also the annotation-cost saving at a fixed target risk. The shaded interval $5<\lambda\le15$ is the regime in which known aggregation recommends a mixed design while unknown aggregation recommends all-fine acquisition. The exact known-$w_\star$ gains are $2.000\%$ at $\lambda=10$ and $3.683\%$ at $\lambda=14$. At $B=76800$, the corresponding empirical gains are $2.874\%$ (95\% CI $[1.352,4.397]\%$) and $4.664\%$ ($[3.032,6.296]\%$); both theory targets lie inside the empirical intervals. $\mathsf B_2$ is exactly all-fine throughout this intermediate region by the pre-specified theory-optimal allocation. The experiment illustrates the consequence of the analytic thresholds; it does not independently estimate them from an allocation grid.

\begin{figure}[t]
\centering
\includegraphics[width=0.92\textwidth]{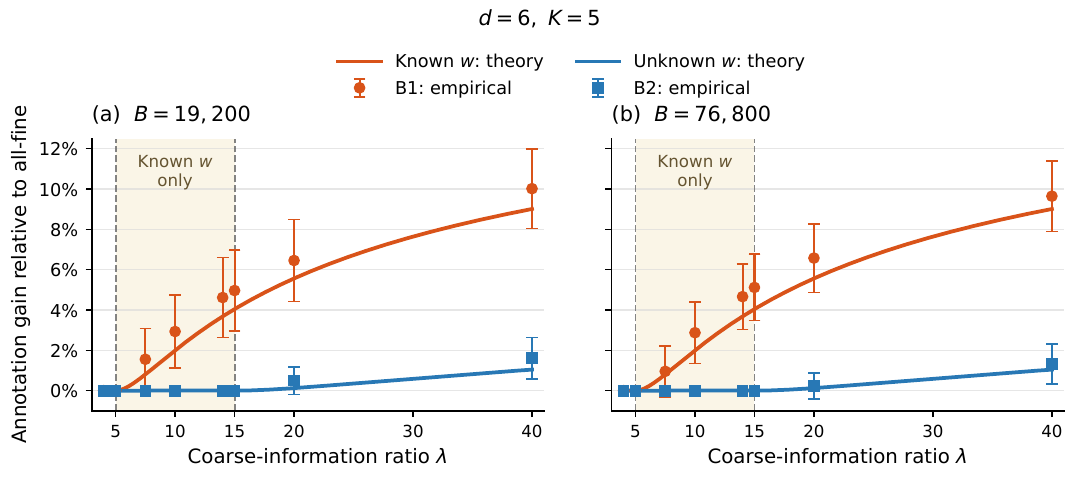}
\caption{\textbf{Known versus unknown aggregation, $d=6$, $K=5$.} Theory curves and empirical gains are shown at two budgets. The shaded region lies between the known-aggregation threshold $5$ and the unknown-aggregation threshold $15$. Error bars are 95\% intervals over 200 seeds.}
\label{fig:app-known-unknown}
\end{figure}

The size of this structural gap depends strongly on $d/K$. For fixed $K=5$, the unknown threshold $5d/(d-4)$ equals $25$ at $d=5$, $15$ at $d=6$, $10$ at $d=8$, $6.25$ at $d=20$, and approaches the known threshold $5$ as $d$ grows. Equivalently, the relative loss in the maximal gain ceiling is $(K-1)/d$.

\subsection{Online Allocation Learning and Finite-Budget Value}
\label{app:online-experiments}

The main text reports cumulative risk relative to all-fine. The allocation path itself depends only on the independent design stream, not on estimation labels. Figure~\ref{fig:app-allocation-learning} shows that the learned and realized coarse shares approach the distinct theoretical targets $0.042891$ and $0.029955$ in the $5\times$ and $20\times$ regimes. The smaller spending share at $20\times$ is not paradoxical: when each coarse dollar is much more informative, less coarse spending is needed to complement the fine-only directions.

\begin{figure}[t]
\centering
\includegraphics[width=0.96\textwidth]{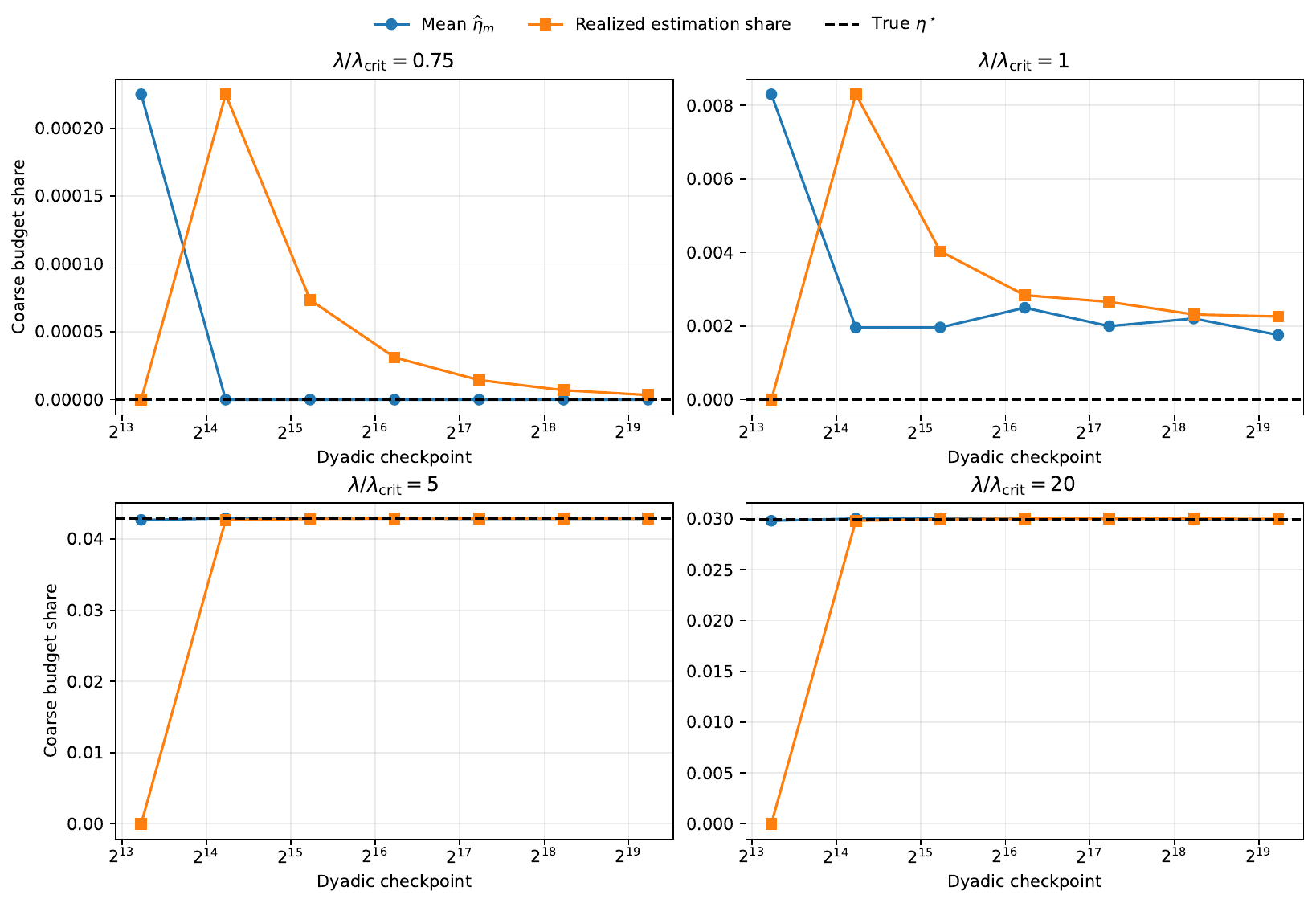}
\caption{\textbf{Allocation learning.} The design-stream plug-in target and the realized estimation-stream coarse-budget share track the theoretical optimal share. Estimation labels never feed back into the allocation rule.}
\label{fig:app-allocation-learning}
\end{figure}

At the largest tested horizon, EET-CF is numerically close to $\mathsf B_2$ in \emph{instantaneous} first-order coefficient even though a positive cumulative gap remains from earlier adaptation. Table~\ref{tab:app-online-summary} summarizes the two mixed regimes. The $\mathsf B_2$ column is an oracle-share reference that EET is intended to approach, not beat.

\begin{table}[t]
\centering
\small
\begin{tabular}{lccccc}
\toprule
Regime & Final EET coeff. & Final B2 coeff. & EET/all-fine & Mean crossover & 95\% crossover \\
\midrule
$5\times$ & $1.031$ & $1.028$ & $0.989$ & $614{,}400$ & none \\
$20\times$ & $1.026$ & $1.023$ & $0.947$ & $76{,}800$ & $153{,}600$ \\
\bottomrule
\end{tabular}
\caption{\textbf{Online summary at $T=2{,}457{,}600$.} The coefficient columns report $TL_T/\Phi_{\mathrm U}^\star$. ``Mean crossover'' is the first tested horizon at which mean cumulative EET risk is below all-fine; ``95\% crossover'' additionally requires the paired pointwise 95\% interval for EET minus all-fine to be entirely below zero.}
\label{tab:app-online-summary}
\end{table}

At the final horizon, the paired cumulative EET-minus-$\mathsf B_2$ gaps are $80.42$ (95\% CI $[59.94,100.90]$) and $93.26$ ($[79.38,107.15]$) in the $5\times$ and $20\times$ regimes. A nonzero finite cumulative gap is compatible with equal leading $\log T$ coefficients; the theorem does not require the difference to vanish.

\subsection{First-Order Remainder Diagnostic}
\label{app:online-remainder-diagnostic}

To inspect the lower-order term directly, we subtract the discrete first-order benchmark
$H_T:=\sum_{b=B_0}^T b^{-1}$. Figure~\ref{fig:app-bounded-remainder} plots
$\mathfrak R_T-\Phi^\star H_T$, using $\Phi_{\mathrm U}^\star$ for EET-CF and $\mathsf B_2$ and the all-fine coefficient $dKc_{\mathrm F}\sigma_{\mathrm F}^2$ for all-fine. Over the last three horizons, all consecutive-increment 95\% intervals contain zero. The displayed tails are therefore visually consistent with flattening, but with 20 paths they do not distinguish a bounded plateau from modest continuing drift. We use this figure only as a finite-horizon diagnostic, not as empirical proof of the $O(1)$ remainder.

\begin{figure}[t]
\centering
\includegraphics[width=0.94\textwidth]{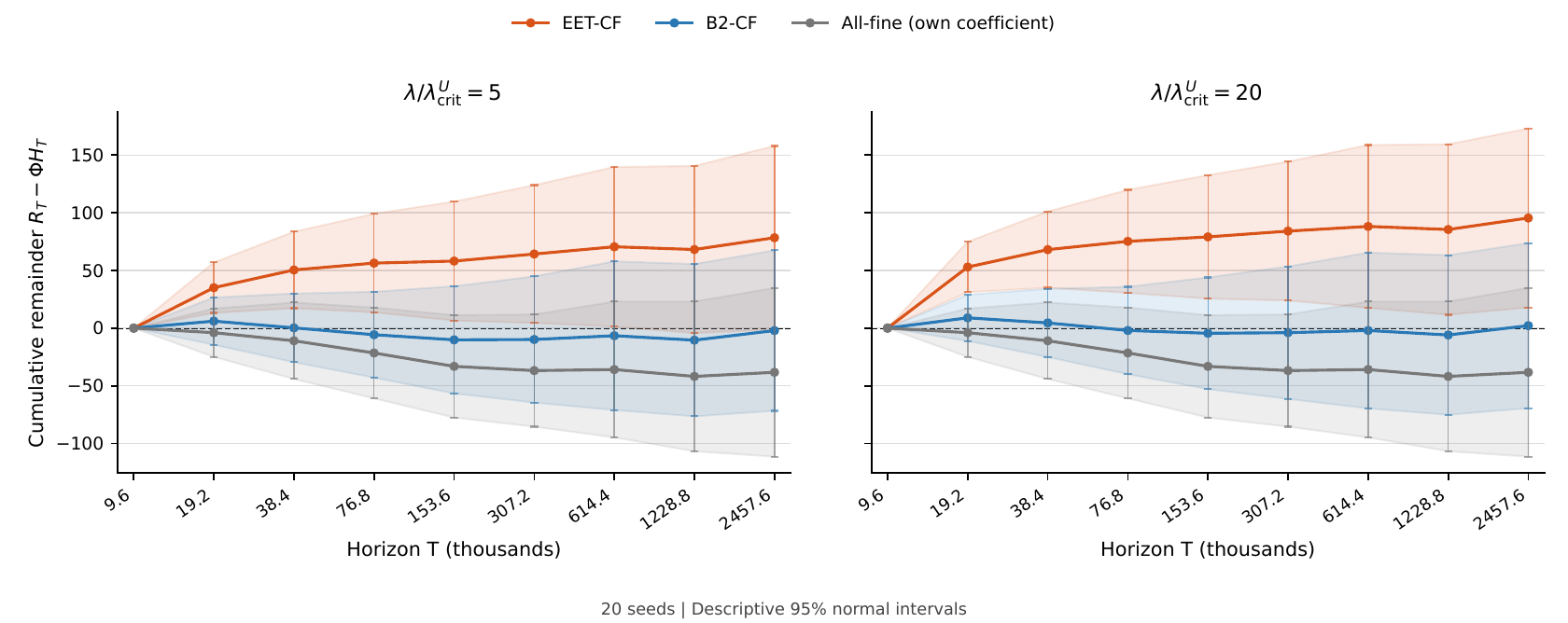}
\caption{\textbf{First-order remainder diagnostic.} Cumulative risk after subtracting the exact discrete harmonic first-order benchmark. Bands are descriptive pointwise 95\% intervals over the 20 saved paths. The late trajectories are consistent with a bounded lower-order term but remain too noisy to establish a plateau.}
\label{fig:app-bounded-remainder}
\end{figure}

\subsection{Below-Threshold Finite-Budget Diagnostic}
\label{app:below-threshold-diagnostic}

At $0.75\lambda_{\mathrm U}^{\mathrm{crit}}$ and $\lambda_{\mathrm U}^{\mathrm{crit}}$, all-fine is the static optimum. We therefore compare normal EET-CF with a diagnostic variant that forces the post-initialization estimation target to zero while preserving the same design-stream spending rule. Figure~\ref{fig:app-below-threshold} shows that forcing zero does not remove the roughly $4\%$ cumulative overhead at $T=614400$. In the $0.75\times$ regime, 18 of 20 normal and forced-zero post-initialization acquisition paths are identical; at the exact boundary they differ more often because finite-sample plug-in estimates cross the threshold.

\begin{figure}[t]
\centering
\includegraphics[width=0.92\textwidth]{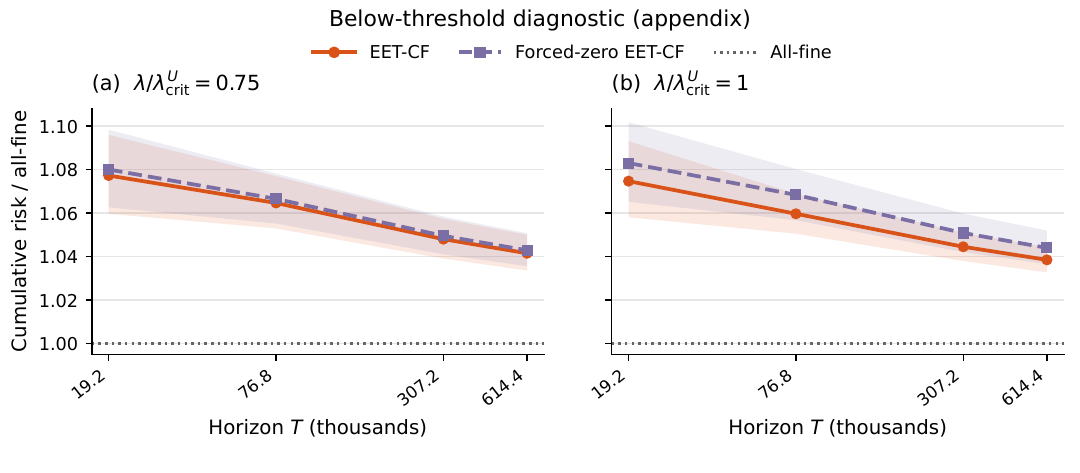}
\caption{\textbf{Below-threshold finite-budget diagnostic.} Cumulative risk is normalized by all-fine. Forced-zero acquisition preserves design spending but removes positive post-initialization target shares. Its failure to reduce the gap indicates that erroneous positive coarse targets are not the main source of the remaining transient overhead. Bands are paired-bootstrap 95\% intervals.}
\label{fig:app-below-threshold}
\end{figure}

At $T=614400$, the forced-zero design stream spends $4346$ cost units and retains the fixed 40-unit estimation-coarse initialization. The corresponding endpoint fine-count inflation proxy is about $0.72\%$, while cumulative overhead is about $4.3$--$4.4\%$. Earlier opportunity costs were larger and remain integrated into cumulative risk, and finite-sample estimation effects can persist. These accounting quantities are diagnostics, not a causal decomposition.

\subsection{Distributional Illustration of Annotation Gain}
\label{app:gain-illustration}

Finally, we illustrate the magnitude of the static gain over heterogeneous aggregation vectors. Fix $d=20,K=5$ and draw $500{,}000$ independent weights from $\operatorname{Dirichlet}(1,\ldots,1)$. For each draw, set
$\rho=c_{\mathrm F}\sigma_{\mathrm F}^2/(c_{\mathrm C}\sigma_{\mathrm C}^2)$ and evaluate the closed-form gain at $\lambda=\rho\|w\|_2^2$. This is Monte Carlo integration of a theorem quantity: no covariates or labels are generated and no estimator is fitted. At $\rho=20$, approximately half of the sampled weights have positive gain but the unconditional mean gain is only $0.15\%$; the mean rises to approximately $1.73\%$, $5.39\%$, and $7.97\%$ at $\rho=40,100,200$.

\begin{figure}[t]
\centering
\begin{minipage}[t]{0.48\textwidth}
\centering
\includegraphics[width=\linewidth]{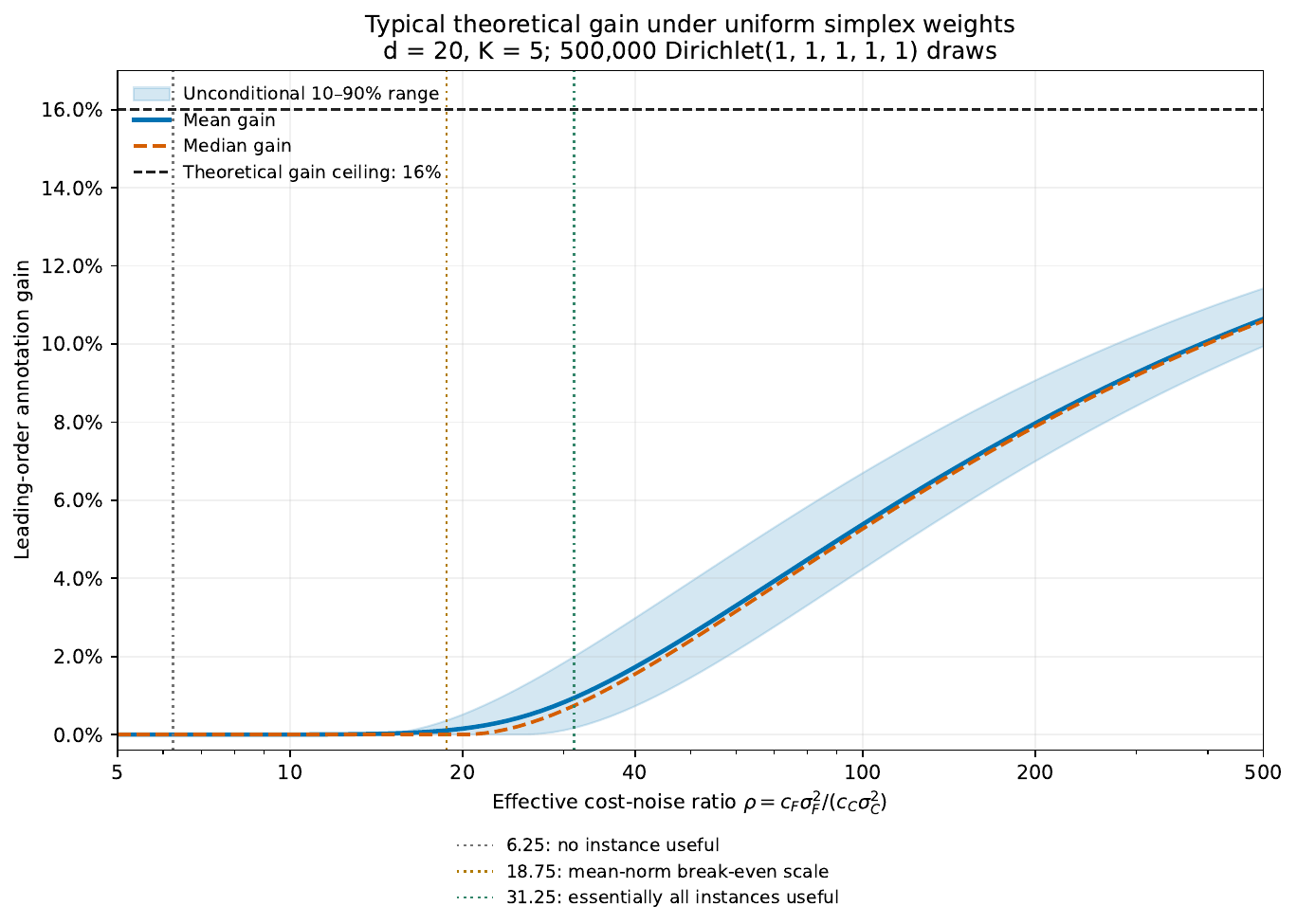}\\[-0.5em]
{\small (a) Distribution of annotation gain}
\end{minipage}
\hfill
\begin{minipage}[t]{0.48\textwidth}
\centering
\includegraphics[width=\linewidth]{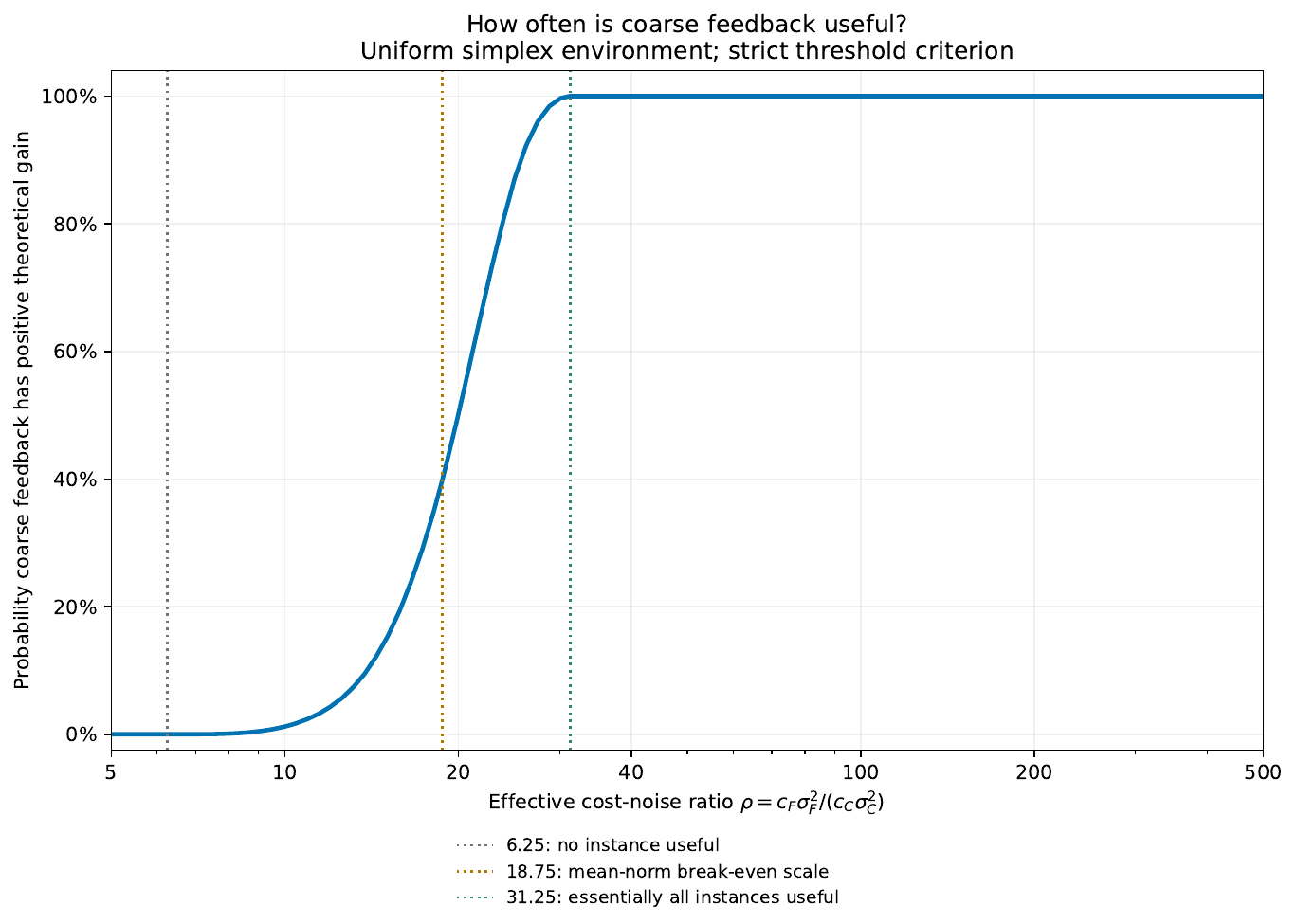}\\[-0.5em]
{\small (b) Probability of positive gain}
\end{minipage}
\caption{\textbf{Theorem-magnitude illustration under random simplex weights.} Positive gain can become common before its magnitude becomes substantial. The variation across sampled weights is heterogeneity under the illustrative Dirichlet environment, not estimation uncertainty.}
\label{fig:app-typical-gain}
\end{figure}

\section{Proof Details}
\label{app:proofs}

This section gives complete proofs of the formal results in \Cref{sec:static,sec:online}. We work in whitened coordinates throughout. Let $w_0:=K^{-1}\mathbf 1$, let $Q\in\mathbb R^{K\times(K-1)}$ have orthonormal columns spanning $\mathbf 1^\perp$, and write $w(v)=w_0+Qv$. We use column stacking $\beta:=\operatorname{vec}(\Theta)$. Constants may depend on the fixed dimensions, costs, variances, $L_x$, $\tau$, $\kappa_\Theta$, and $M_\Theta$, but not on the running budget. Unless otherwise stated, these constants are uniform over the regular parameter class in \Cref{ass:covariates,ass:noise-cost,ass:identifiability}.

\subsection{Proof of \Cref{prop:known-static-risk}}
\label{app:proof-known-risk}

For one fine observation, the derivative of the conditional mean with respect to $\beta$ is the usual multivariate-regression design. Since $\E[xx^\top]=I_d$, its expected information is $\sigma_{\mathrm F}^{-2}I_{dK}$. A coarse observation has mean
$x^\top\Theta w_\star=(w_\star^\top\otimes x^\top)\beta$, so its expected information is
\begin{equation}
\sigma_{\mathrm C}^{-2}
(w_\star w_\star^\top\otimes I_d).
\label{eq:app-known-one-coarse}
\end{equation}
Summing $N_{\mathrm F}$ fine and $N_{\mathrm C}$ coarse observations gives \Cref{eq:known-information}.

Let $\bar w_\star=w_\star/\|w_\star\|_2$. Any perturbation $H\in\mathbb R^{d\times K}$ has the unique orthogonal decomposition
\begin{equation}
H=h\bar w_\star^\top+H_0,
\qquad
H_0\bar w_\star=0,
\label{eq:app-known-decomp}
\end{equation}
with $\|H\|_F^2=\|h\|_2^2+\|H_0\|_F^2$. The $H_0$ subspace has dimension $d(K-1)$ and receives no coarse information because $H_0w_\star=0$. Its information eigenvalue is therefore
$\alpha:=N_{\mathrm F}/\sigma_{\mathrm F}^2$. On the complementary $d$-dimensional subspace, the eigenvalue is
$\alpha+\gamma\|w_\star\|_2^2$, where $\gamma:=N_{\mathrm C}/\sigma_{\mathrm C}^2$. Taking the inverse trace gives \Cref{eq:known-static-risk}.

For completeness, consider an exogenous static sequence with $N_{\mathrm F}\asymp B$ and either $N_{\mathrm C}=0$ or $N_{\mathrm C}\asymp B$, which covers the cost-optimal static designs below. Conditional on the sampled covariates, the Gaussian weighted least-squares estimator has information
\begin{equation}
J_B
=
\frac{1}{\sigma_{\mathrm F}^2}(I_K\otimes S_{\mathrm F})
+
\frac{1}{\sigma_{\mathrm C}^2}(w_\star w_\star^\top\otimes S_{\mathrm C}),
\qquad
S_a:=\sum_{i:A_i=a}x_ix_i^\top,
\end{equation}
and conditional covariance $J_B^{-1}$. Its expectation is exactly $\mathcal I_{\mathrm{kn}}$. Bounded isotropic covariates give exponentially small tails for $\|S_a/N_a-I_d\|_{\mathrm{op}}$. On the event where both relevant Gram matrices lie within one half of their expectations, write $J_B=\mathcal I_{\mathrm{kn}}+\Delta_B$. Since $\|\mathcal I_{\mathrm{kn}}^{-1}\|_{\mathrm{op}}=O(B^{-1})$, $\E\Delta_B=0$, and $\E\|\Delta_B\|_{\mathrm{op}}^2=O(B)$, the resolvent identity
\begin{equation}
J_B^{-1}
=
\mathcal I_{\mathrm{kn}}^{-1}
-
\mathcal I_{\mathrm{kn}}^{-1}\Delta_B\mathcal I_{\mathrm{kn}}^{-1}
+
\mathcal I_{\mathrm{kn}}^{-1}\Delta_B\mathcal I_{\mathrm{kn}}^{-1}\Delta_BJ_B^{-1}
\end{equation}
shows after taking traces and expectations that
$\E\operatorname{tr}(J_B^{-1})=\operatorname{tr}(\mathcal I_{\mathrm{kn}}^{-1})+O(B^{-2})$; the bad-Gram event contributes exponentially little after bounded projection. Thus the projected weighted estimator has risk at most the displayed trace plus $O(B^{-2})$. At the all-fine boundary the same argument reduces to projected multivariate least squares. The fixed-dimensional Gaussian experiment is differentiable in quadratic mean and locally asymptotically normal. The local asymptotic minimax theorem for bowl-shaped loss therefore gives the inverse-information trace as the local lower bound \citep[Theorem~8.11]{van2000asymptotic}. Projection onto the fixed operator-norm ball bounds squared loss, so the corresponding truncated-loss argument and uniform integrability justify the expected-risk statement. Together these facts prove the first-order attainment statement in \Cref{prop:known-static-risk}.

\subsection{Proof of \Cref{thm:known-static-optimum}}
\label{app:proof-known-optimum}

Substituting $N_{\mathrm F}=(1-\eta)B/c_{\mathrm F}$ and $N_{\mathrm C}=\eta B/c_{\mathrm C}$ into \Cref{eq:known-static-risk} gives \Cref{eq:known-budget-risk}. Its first two derivatives are
\begin{align}
\partial_\eta\psi_{\mathrm K}(\eta,\lambda)
&=
\frac{K-1}{(1-\eta)^2}
-
\frac{\lambda-1}{\{1+(\lambda-1)\eta\}^2},
\label{eq:app-known-first}\\
\partial_{\eta\eta}^2\psi_{\mathrm K}(\eta,\lambda)
&=
\frac{2(K-1)}{(1-\eta)^3}
+
\frac{2(\lambda-1)^2}{\{1+(\lambda-1)\eta\}^3}>0.
\label{eq:app-known-second}
\end{align}
Hence the objective is strictly convex. At zero,
$\partial_\eta\psi_{\mathrm K}(0,\lambda)=K-\lambda$. If $\lambda\le K$, strict convexity makes $\eta=0$ the unique minimizer. If $\lambda>K$, the minimizer is interior and solves
\begin{equation}
\frac{\sqrt{K-1}}{1-\eta}
=
\frac{\sqrt{\lambda-1}}{1+(\lambda-1)\eta}.
\end{equation}
Setting $q_{\mathrm K}=\sqrt{(\lambda-1)/(K-1)}$ and rearranging gives \Cref{eq:known-eta-star}.

Substitution yields
\begin{equation}
\psi_{\mathrm K}^\star(\lambda)
=
\frac{\left(\sqrt{(K-1)(\lambda-1)}+1\right)^2}{\lambda}
\end{equation}
above threshold. Since this tends to $K-1$ as $\lambda\to\infty$, \Cref{eq:known-gain-ceiling} follows directly.

\subsection{Proof of \Cref{thm:unknown-frontier}}
\label{app:proof-unknown-frontier}

We first invoke \Cref{lem:unknown-information-split}, proved in \Cref{app:proof-information-split}. It implies that for fixed counts the efficient inverse-information trace for $\Theta_\star$ is
\begin{equation}
\mathcal R_{\mathrm U}(N_{\mathrm F},N_{\mathrm C})
:=
\frac{A_{\mathrm U}\sigma_{\mathrm F}^2}{N_{\mathrm F}}
+
\frac{D_{\mathrm U}}
{N_{\mathrm F}/\sigma_{\mathrm F}^2+
N_{\mathrm C}\|w_\star\|_2^2/\sigma_{\mathrm C}^2}.
\label{eq:app-unknown-fixed-risk}
\end{equation}
For fixed positive proportions, the joint Gaussian experiment in the local coordinate $(\beta,v)$ is differentiable in quadratic mean and locally asymptotically normal. The local asymptotic minimax theorem \citep[Theorem~8.11]{van2000asymptotic} gives the trace of the inverse efficient information as the local lower bound for the $\beta$ target, yielding \Cref{eq:app-unknown-fixed-risk}. The projected two-fold one-step construction analyzed in \Cref{app:proof-conditional-efficient} attains the same leading covariance under an exogenous static schedule; when $N_{\mathrm C}=0$, projected fine OLS gives the boundary case. The final projection onto the fixed operator-norm ball makes the squared loss uniformly bounded, so the asymptotic linear expansion also yields convergence of expected squared loss, not only distributional convergence.

At budget $B$ and coarse spending share $\eta$, substitute
$N_{\mathrm F}=(1-\eta)B/c_{\mathrm F}$ and $N_{\mathrm C}=\eta B/c_{\mathrm C}$ into \Cref{eq:app-unknown-fixed-risk}. With \Cref{eq:known-lambda},
\begin{equation}
\mathcal R_{\mathrm U}(B,\eta)
=
\frac{c_{\mathrm F}\sigma_{\mathrm F}^2d}{B}
\left[
\frac{A_{\mathrm U}/d}{1-\eta}
+
\frac{D_{\mathrm U}/d}{1+(\lambda-1)\eta}
\right]
+o(B^{-1}),
\end{equation}
which is \Cref{eq:unknown-leading-risk}.

To optimize, set $A=A_{\mathrm U}$ and $D=D_{\mathrm U}$ and ignore the common factor $1/d$. Define
\begin{equation}
\phi(\eta,\lambda)
=
\frac{A}{1-\eta}
+
\frac{D}{1+(\lambda-1)\eta}.
\end{equation}
Its derivatives are
\begin{align}
\partial_\eta\phi
&=
\frac{A}{(1-\eta)^2}
-
\frac{D(\lambda-1)}{\{1+(\lambda-1)\eta\}^2},\\
\partial_{\eta\eta}^2\phi
&=
\frac{2A}{(1-\eta)^3}
+
\frac{2D(\lambda-1)^2}{\{1+(\lambda-1)\eta\}^3}>0.
\end{align}
At zero, $\partial_\eta\phi(0,\lambda)=A-D(\lambda-1)$, so the boundary changes sign at
\begin{equation}
1+\frac{A}{D}
=
\frac{A+D}{D}
=
\frac{dK}{d-K+1}.
\end{equation}
Above this boundary, solving the first-order condition gives
\begin{equation}
\eta_{\mathrm U}^\star(\lambda)
=
\frac{q_{\mathrm U}-1}{\lambda-1+q_{\mathrm U}},
\qquad
q_{\mathrm U}:=\sqrt{\frac{D_{\mathrm U}(\lambda-1)}{A_{\mathrm U}}}.
\label{eq:app-unknown-eta}
\end{equation}
This proves the threshold and optimizer.

The all-fine coefficient in the unnormalized objective is $A+D=dK$. Above threshold the exact gap is
\begin{equation}
dK-\phi^\star(\lambda)
=
\frac{A_{\mathrm U}(q_{\mathrm U}-1)^2}{\lambda},
\end{equation}
so
\begin{equation}
G_{\mathrm U}(\lambda)
=
\frac{A_{\mathrm U}(q_{\mathrm U}-1)^2}{\lambda dK}.
\end{equation}
The envelope theorem gives
\begin{equation}
\frac{d}{d\lambda}\phi^\star(\lambda)
=
-\frac{D_{\mathrm U}\eta_{\mathrm U}^\star(\lambda)}
{\{1+(\lambda-1)\eta_{\mathrm U}^\star(\lambda)\}^2}<0
\end{equation}
above threshold, proving that the gain is strictly increasing there. Finally $\phi(\eta,\lambda)\ge A_{\mathrm U}$ for every feasible $\eta$, while $\phi^\star(\lambda)\to A_{\mathrm U}$ as $\lambda\to\infty$. Hence the gain ceiling is
$1-A_{\mathrm U}/(dK)=D_{\mathrm U}/(dK)=(d-K+1)/(dK)$.

\subsection{Proof of \Cref{lem:unknown-information-split}}
\label{app:proof-information-split}

The simplex tangent at the interior point $w_\star$ is $\mathbf1^\perp$. Consider a local perturbation $(H,h_w)$ with $H\in\mathbb R^{d\times K}$ and $h_w\in\mathbf1^\perp$. A fine observation has local mean perturbation $H^\top x$, while a coarse observation has local mean perturbation $x^\top(Hw_\star+\Theta_\star h_w)$. Therefore the expected Fisher-information quadratic form for fixed counts is
\begin{equation}
\mathcal I(H,h_w)
=
\alpha\|H\|_F^2
+
\gamma\|Hw_\star+\Theta_\star h_w\|_2^2,
\qquad
\alpha:=\frac{N_{\mathrm F}}{\sigma_{\mathrm F}^2},
\quad
\gamma:=\frac{N_{\mathrm C}}{\sigma_{\mathrm C}^2}.
\label{eq:app-joint-info-form}
\end{equation}
Let $\mathcal S_\star:=\operatorname{col}(\Theta_\star Q)=\Theta_\star\mathbf1^\perp$. By \Cref{ass:identifiability}, $\Theta_\star Q$ has full column rank, so $\dim(\mathcal S_\star)=K-1$. Profiling \Cref{eq:app-joint-info-form} over $h_w$ gives
\begin{equation}
\mathcal I_{\mathrm{eff}}(H)
=
\alpha\|H\|_F^2
+
\gamma\left\|P_{\mathcal S_\star^\perp}Hw_\star\right\|_2^2.
\label{eq:app-profiled-info}
\end{equation}
Indeed, the second term is the squared distance from $-Hw_\star$ to the nuisance image $\mathcal S_\star$.

Let $\bar w_\star=w_\star/\|w_\star\|_2$ and write
\begin{equation}
H=h\bar w_\star^\top+H_0,
\qquad H_0\bar w_\star=0.
\end{equation}
The $H_0$ subspace has dimension $d(K-1)$ and receives only fine information $\alpha$. Decompose $h=h_\parallel+h_\perp$ with
$h_\parallel\in\mathcal S_\star$ and $h_\perp\in\mathcal S_\star^\perp$. The $K-1$ directions in $\mathcal S_\star$ can be canceled by a nuisance perturbation $h_w$, so they also receive only $\alpha$. The remaining $d-K+1$ directions receive $\alpha+\gamma\|w_\star\|_2^2$. Thus the fine-only multiplicity is
$d(K-1)+(K-1)=(K-1)(d+1)=A_{\mathrm U}$ and the fine-plus-coarse multiplicity is $d-K+1=D_{\mathrm U}$.

For later use, write $B_\Theta:=\Theta Q$. In the joint coordinate $(\beta,v)$ the expected information blocks are
\begin{align}
I_{\beta\beta}
&=
\alpha I_{dK}+\gamma(ww^\top\otimes I_d),
\label{eq:app-Ibb}\\
I_{\beta v}
&=
\gamma(w\otimes B_\Theta),
\label{eq:app-Ibv}\\
I_{vv}
&=
\gamma B_\Theta^\top B_\Theta.
\label{eq:app-Ivv}
\end{align}
Their $\beta$ Schur complement is
\begin{equation}
\Sigma_\beta(\Theta,w)
=
\alpha I_{dK}
+
\gamma\bigl(ww^\top\otimes(I_d-P_{\Theta Q})\bigr),
\label{eq:app-beta-schur}
\end{equation}
where
$P_{\Theta Q}:=\Theta Q(Q^\top\Theta^\top\Theta Q)^{-1}Q^\top\Theta^\top$.
At the truth, the inverse trace of \Cref{eq:app-beta-schur} is exactly \Cref{eq:app-unknown-fixed-risk}.

\subsection{Proof of \Cref{cor:static-benchmark-attainment}}
\label{app:proof-static-benchmark}

For $\mathsf B_1$, choose the static fraction $\eta_{\mathrm K}^\star$ from \Cref{thm:known-static-optimum} and use known-$w_\star$ weighted Gaussian least squares. By \Cref{app:proof-known-risk}, at budget $b$ its expected risk is
\begin{equation}
L_b^{\mathsf B_1}
\le
\frac{\Phi_{\mathrm K}^\star}{b}+O(b^{-2}),
\end{equation}
including bounded count rounding. Summing gives
$\mathfrak R_T^{\mathsf B_1}\le\Phi_{\mathrm K}^\star\log T+O(1)$.

For $\mathsf B_2$, choose the exogenous unknown-aggregation fraction $\eta_{\mathrm U}^\star$. If $\eta_{\mathrm U}^\star=0$, projected fine OLS again has an $O(b^{-2})$ remainder. If $\eta_{\mathrm U}^\star>0$, both fine and coarse counts are linear in $b$ and the static two-fold specialization of \Cref{lem:conditional-efficient-estimation} gives
\begin{equation}
L_b^{\mathsf B_2}
\le
\frac{\Phi_{\mathrm U}^\star}{b}+O(b^{-3/2}).
\end{equation}
The remainder is summable, yielding the stated $\mathsf B_2$ cumulative bound. The local minimax interpretations follow from the known-weight inverse-information argument and the unknown-weight local lower bound in \Cref{app:proof-unknown-frontier}.

\subsection{Proof of \Cref{thm:online-main}}
\label{app:proof-online-main}

Let $\Xi(b)$ be design-stream spending and $b_{\mathrm E}=b-\Xi(b)$. By \Cref{lem:pilot-tracking},
\begin{equation}
\Xi(b)=O\!\left(\frac{b}{(\log b)^2}\right),
\end{equation}
so $b_{\mathrm E}\ge b/2$ for all sufficiently large $b$ and
\begin{equation}
0\le
\frac1{b_{\mathrm E}}-\frac1b
=
\frac{\Xi(b)}{bb_{\mathrm E}}
=
O\!\left(\frac1{b(\log b)^2}\right).
\label{eq:app-design-opportunity}
\end{equation}
The conditional-efficiency lemma is stated for all sufficiently large estimation endpoints. Let $b_\star$ be a fixed threshold after which it applies. The finitely many levels $B_0\le b<b_\star$ contribute only $O(1)$ to cumulative risk because projection onto the compact parameter set uniformly bounds $L_b$.

At a sufficiently large estimation endpoint, taking total expectation in \Cref{eq:conditional-efficient-risk} gives
\begin{equation}
L_b
\le
\frac{c_{\mathrm F}\sigma_{\mathrm F}^2d}{b_{\mathrm E}}
\E\psi_{\mathrm U}(\eta_b^{\mathrm E},\lambda)
+
O(b_{\mathrm E}^{-3/2}).
\end{equation}
Subtract
$\Phi_{\mathrm U}^\star/b=
(c_{\mathrm F}\sigma_{\mathrm F}^2d/b)
\psi_{\mathrm U}(\eta_{\mathrm U}^\star,\lambda)$,
apply \Cref{lem:pilot-tracking}, and use boundedness of $\psi_{\mathrm U}$ on the compact range of realized shares. This gives \Cref{eq:per-budget-online-excess} at estimation endpoints.

It remains to cover the budget levels at which the predictor is held fixed. Within an estimation block, consecutive estimation endpoints are separated by at most the fixed maximum action cost, so repeating an endpoint bound across these bounded gaps preserves summability. For a design block in epoch $m$, let $s_m$ be the last estimation endpoint before the block and let $D_m$ be its budget length. Bounded endpoint residue gives $s_m\asymp B_m$, while $D_m=O(B_m/m^2)$. The predictor is fixed throughout the block, so its leading contribution is $D_m\Phi_{\mathrm U}^\star/s_m$. Relative to the harmonic benchmark on the same budget levels,
\begin{equation}
D_m\frac{\Phi_{\mathrm U}^\star}{s_m}
-\Phi_{\mathrm U}^\star\sum_{j=1}^{D_m}\frac1{s_m+j}
=O\!\left(\frac{D_m^2}{s_m^2}\right)
=O(m^{-4}).
\label{eq:app-held-design-leading}
\end{equation}
Repeating the three remainder terms in \Cref{eq:per-budget-online-excess} over the same block contributes respectively $O(m^{-4})$, $O(m2^{-m})$, and $O(2^{-m/2}/m^2)$. These sequences are summable over epochs. Thus the non-endpoint budget levels contribute only $O(1)$ additional cumulative excess, and
\begin{equation}
\mathfrak R_T^{\mathrm{alg}}
\le
\Phi_{\mathrm U}^\star\sum_{b=B_0}^T\frac1b+O(1)
=
\Phi_{\mathrm U}^\star\log T+O(1).
\end{equation}

All constants in \Cref{lem:pilot-tracking,lem:conditional-efficient-estimation} depend only on the fixed regularity bounds. Therefore the same per-budget inequality holds uniformly on any sufficiently small $\mathcal U_\delta$, with $\Phi_{\mathrm U}^\star$ evaluated at the local parameter. The map $\vartheta\mapsto\Phi_{\mathrm U}^\star(\vartheta)$ is continuous on the regular class: $w\mapsto\|w\|_2^2$ is continuous, the minimizer remains in a compact interval away from one, and the minimum of the continuous objective is continuous. Consequently
\begin{equation}
\sup_{\vartheta\in\mathcal U_\delta}\mathfrak R_T^{\mathrm{alg}}(\vartheta)
\le
\bigl(\Phi_{\mathrm U}^\star(\vartheta_\star)+o_\delta(1)\bigr)\log T+O_\delta(1),
\end{equation}
proving the locally uniform statement in \Cref{thm:online-main}.

The lower bound follows from \Cref{lem:adaptive-information-lower}: sum \Cref{eq:adaptive-information-lower} over $b$, use that the Bayes risk is at most the supremum risk on $\mathcal U_\delta$, divide by $\log T$, and let $T\to\infty$ and then $\delta\downarrow0$. This proves \Cref{eq:online-lower}.

Finally, the pointwise objectives satisfy
\begin{equation}
\psi_{\mathrm U}(\eta,\lambda)-\psi_{\mathrm K}(\eta,\lambda)
=
\frac{K-1}{d}
\left[
\frac1{1-\eta}
-
\frac1{1+(\lambda-1)\eta}
\right]
\ge0,
\end{equation}
because $1+(\lambda-1)\eta-(1-\eta)=\lambda\eta\ge0$. Hence $\Phi_{\mathrm U}^\star\ge\Phi_{\mathrm K}^\star$, and the stated known-versus-unknown decomposition is the identity $\Phi_{\mathrm U}^\star=\Phi_{\mathrm K}^\star+(\Phi_{\mathrm U}^\star-\Phi_{\mathrm K}^\star)$ inserted into the upper bound.

\subsection{Proof of \Cref{lem:pilot-tracking}}
\label{app:proof-pilot-tracking}

We use a guarded projected regression primitive repeatedly. Given $n$ whitened observations from a linear regression $Y=B_\star^\top x+\varepsilon$, let
$S_n=\sum_{i=1}^n x_ix_i^\top$. If $\lambda_{\min}(S_n)\ge n/2$, use ordinary least squares and project the coefficient onto a relevant fixed closed convex compact set containing the truth; otherwise return any fixed point in that set. Bounded isotropic covariates imply
\begin{equation}
\Pr\!\left(\|S_n/n-I_d\|_{\mathrm{op}}>1/2\right)
\le C e^{-cn}.
\label{eq:app-gram-tail}
\end{equation}
Conditionally on the covariates and on the good event, the Gaussian regression error has second and fourth moments of orders $n^{-1}$ and $n^{-2}$. Projection is nonexpansive and bounds the error on the bad event. Thus, for $p=1,2$,
\begin{equation}
\E\|\widetilde B-B_\star\|_F^{2p}
\le C_p(n\vee1)^{-p}.
\label{eq:app-guarded-regression}
\end{equation}
This proves the moment statement used below and also makes every pilot well defined on every sample path.

\paragraph{Design-stream size and aggregation pilot.}
At the start of epoch $m$, the installed pilot is based on the fixed initialization and design blocks $j<m$; the design block collected in epoch $m$ first enters the pilot used in epoch $m+1$.
Let $B_m=2^mB_0$ and $\Delta B_m=B_{m+1}-B_m$. Since
$\zeta_m=\zeta_0/(m+2)^2$,
\begin{equation}
\Xi(B_m)
\asymp
\sum_{j<m}\frac{2^j}{(j+2)^2}
=
\Theta\!\left(\frac{2^m}{m^2}\right).
\label{eq:app-design-sum}
\end{equation}
The upper bound follows by splitting the sum at $m/2$; the lower bound follows from the last fixed number of terms. The fixed design split is bounded away from zero and one, so both fine and coarse design counts are $\Theta(2^m/m^2)$. Applying \Cref{eq:app-guarded-regression} gives, for $p=1,2$,
\begin{align}
\E\|\widetilde\Theta_m-\Theta_\star\|_F^{2p}
&\le C\left(\frac{m^2}{2^m}\right)^p,\\
\E\|\widetilde u_m-u_\star\|_2^{2p}
&\le C\left(\frac{m^2}{2^m}\right)^p.
\label{eq:app-design-regression-rates}
\end{align}

The constrained pilot satisfies
\begin{equation}
\widetilde w_m
\in
\argmin_{w\in\Delta_K,\;w_k\ge\tau}
\|\widetilde u_m-\widetilde\Theta_mw\|_2^2.
\end{equation}
The truth is feasible because $w_{\star,k}\ge2\tau$. On the event
$\|(\widetilde\Theta_m-\Theta_\star)Q\|_{\mathrm{op}}\le\kappa_\Theta/2$,
Weyl's inequality gives
$\sigma_{\min}(\widetilde\Theta_mQ)\ge\kappa_\Theta/2$.
Because $\widetilde w_m-w_\star\in\mathbf1^\perp$, write
$\widetilde w_m-w_\star=Qh_m$. Optimality and feasibility imply
\begin{align}
\frac{\kappa_\Theta}{2}\|\widetilde w_m-w_\star\|_2
&\le
\|\widetilde\Theta_m(\widetilde w_m-w_\star)\|_2\\
&\le
2\|\widetilde u_m-u_\star\|_2
+2\|\widetilde\Theta_m-\Theta_\star\|_{\mathrm{op}}\|w_\star\|_2.
\end{align}
The complementary event has exponentially small probability by \Cref{eq:app-gram-tail} together with the conditional Gaussian tail for the guarded regression, while compact simplex constraints bound the error there. Hence
\begin{equation}
\E\|\widetilde w_m-w_\star\|_2^{2p}
\le C\left(\frac{m^2}{2^m}\right)^p.
\label{eq:app-w-pilot}
\end{equation}
Let $\rho=c_{\mathrm F}\sigma_{\mathrm F}^2/(c_{\mathrm C}\sigma_{\mathrm C}^2)$. Since simplex vectors have norm at most one,
\begin{equation}
|\widetilde\lambda_m-\lambda|
=
\rho\left|\|\widetilde w_m\|_2^2-\|w_\star\|_2^2\right|
\le2\rho\|\widetilde w_m-w_\star\|_2,
\end{equation}
so
\begin{equation}
\E|\widetilde\lambda_m-\lambda|^2
\le C\frac{m^2}{2^m}.
\label{eq:app-lambda-pilot}
\end{equation}
Because both $w_\star$ and $\widetilde w_m$ lie in the simplex,
$\lambda,\widetilde\lambda_m\in[\rho/K,\rho]$.

\paragraph{Quadratic stability of the plug-in optimum.}
Every minimizer of $\psi_{\mathrm U}$ lies in a fixed compact interval away from one: comparison with $\eta=0$ gives
$(A_{\mathrm U}/d)/(1-\eta_{\mathrm U}^\star)\le K$, hence
$\eta_{\mathrm U}^\star\le D_{\mathrm U}/(dK)<1$.
On a slightly larger compact interval and for
$\lambda\in[\rho/K,\rho]$, there are constants $\mu>0$ and $C_{\eta\lambda}<\infty$ such that
\begin{equation}
\partial_{\eta\eta}^2\psi_{\mathrm U}\ge\mu,
\qquad
|\partial_{\eta\lambda}^2\psi_{\mathrm U}|\le C_{\eta\lambda}.
\end{equation}
Let $\eta_i=\eta_{\mathrm U}^\star(\lambda_i)$. The variational inequalities for constrained minimizers are
\begin{equation}
\partial_\eta\psi_{\mathrm U}(\eta_1,\lambda_1)(\eta_2-\eta_1)\ge0,
\qquad
\partial_\eta\psi_{\mathrm U}(\eta_2,\lambda_2)(\eta_1-\eta_2)\ge0.
\end{equation}
Adding them and using strong monotonicity in $\eta$ gives
\begin{equation}
|\eta_2-\eta_1|
\le
\frac{C_{\eta\lambda}}{\mu}|\lambda_2-\lambda_1|.
\label{eq:app-eta-lipschitz}
\end{equation}
This argument also covers a minimizer at the boundary zero. Next, optimality of $\eta_2$ under $\lambda_2$ implies
$\psi_{\mathrm U}(\eta_2,\lambda_2)\le\psi_{\mathrm U}(\eta_1,\lambda_2)$. Therefore
\begin{align}
&\psi_{\mathrm U}(\eta_2,\lambda_1)-\psi_{\mathrm U}(\eta_1,\lambda_1)\\
&\quad\le
\bigl[\psi_{\mathrm U}(\eta_2,\lambda_1)-\psi_{\mathrm U}(\eta_2,\lambda_2)\bigr]
-
\bigl[\psi_{\mathrm U}(\eta_1,\lambda_1)-\psi_{\mathrm U}(\eta_1,\lambda_2)\bigr]\\
&\quad\le
C_{\eta\lambda}|\eta_2-\eta_1|\,|\lambda_2-\lambda_1|
\le
\frac{C_{\eta\lambda}^2}{\mu}|\lambda_2-\lambda_1|^2.
\label{eq:app-quadratic-plugin}
\end{align}
Applying this with $\lambda_1=\lambda$ and $\lambda_2=\widetilde\lambda_m$, then using \Cref{eq:app-lambda-pilot}, yields
\begin{equation}
\E\bigl[
\psi_{\mathrm U}(\widehat\eta_m,\lambda)
-
\psi_{\mathrm U}(\eta_{\mathrm U}^\star,\lambda)
\bigr]
\le C\frac{m^2}{2^m}.
\label{eq:app-epoch-objective-gap}
\end{equation}
No margin from the break-even boundary is used.

\paragraph{Accumulated tracking.}
Ignoring bounded endpoint residues, the cumulative estimation-stream coarse fraction at a budget $b\in[B_m,B_{m+1})$ is a cost-weighted convex combination of the installed epoch targets,
\begin{equation}
\bar\eta_b^{\mathrm E}
=
\sum_{j=0}^m\omega_{j,b}\widehat\eta_j,
\qquad
\omega_{j,b}\ge0,
\quad
\sum_j\omega_{j,b}=1,
\end{equation}
with $\omega_{j,b}\le C2^{j-m}$ for $j<m$. Convexity and \Cref{eq:app-epoch-objective-gap} give
\begin{align}
\E\bigl[\psi_{\mathrm U}(\bar\eta_b^{\mathrm E},\lambda)-
\psi_{\mathrm U}(\eta_{\mathrm U}^\star,\lambda)\bigr]
&\le
C2^{-m}
+C\sum_{j=1}^m2^{j-m}\frac{j^2}{2^j}\\
&\le
C\frac{m^3}{2^m}.
\end{align}
Writing $s$ as coarse expenditure minus its target expenditure, a fine action changes $s$ by $-\widehat\eta_m c_{\rm F}$ and a coarse action by $(1-\widehat\eta_m)c_{\rm C}$. The smaller-absolute-value rule therefore keeps $|s|\le \max\{c_{\rm F},c_{\rm C}\}$ by induction, up to the bounded terminal affordability residue.
The signed-imbalance tracker changes cumulative coarse expenditure by only $O(1)$ within each epoch; bounded unused residues contribute $O(m)$ through epoch $m$. Since $\partial_\eta\psi_{\mathrm U}$ is bounded on the compact interval containing all realized shares, these effects add only $O(m/2^m)$. Using $2^m\asymp b$ and $m\asymp\log b$ gives the second statement of \Cref{eq:pilot-tracking-rate}; \Cref{eq:app-design-sum} gives the first.

\subsection{Proof of \Cref{lem:conditional-efficient-estimation}}
\label{app:proof-conditional-efficient}

Fix an epoch and condition on its design history $\mathcal G_m$. The target share and the entire estimation action schedule through the epoch are then deterministic, as are the parity-fold assignments within each resolution. All estimation covariates and noises remain independent of $\mathcal G_m$. Let $N_{a,r}$ be the number of estimation observations of resolution $a\in\{\mathrm F,\mathrm C\}$ in fold $r\in\{1,2\}$ at budget $b$. The deterministic parity rule gives $|N_{a,1}-N_{a,2}|\le1$.

\paragraph{Opposite-fold pilots.}
For score fold $r$, fit guarded projected fine and coarse regressions on the opposite fold $-r$ using the primitive from \Cref{eq:app-guarded-regression}, obtaining $\widetilde\Theta^{(-r)}$ and $\widetilde u^{(-r)}$. Set
\begin{equation}
\widetilde w^{(-r)}
\in
\argmin_{w\in\Delta_K,\;w_k\ge\tau}
\|\widetilde u^{(-r)}-\widetilde\Theta^{(-r)}w\|_2^2,
\qquad
\widetilde v^{(-r)}=Q^\top(\widetilde w^{(-r)}-w_0).
\end{equation}
The finite initialization guarantees a fixed positive number of coarse observations in each fold; hence the nuisance block $I_{vv}$ is well defined even when the post-initialization coarse count remains bounded. If a coarse Gram matrix is ill conditioned, the guarded regression returns a bounded projected fallback. Because the fine-budget share is uniformly bounded away from zero, $N_{\mathrm F,-r}=\Theta(b_{\mathrm E})$. The same tangent-stability argument as in \Cref{app:proof-pilot-tracking} gives, for $q=1,2$,
\begin{align}
\E\|\widetilde\Theta^{(-r)}-\Theta_\star\|_F^{2q}
&\le Cb_{\mathrm E}^{-q},
\label{eq:app-cf-theta-rate}\\
\E\|\widetilde v^{(-r)}-v_\star\|_2^{2q}
&\le C\left(b_{\mathrm E}^{-q}+(N_{\mathrm C,-r}\vee1)^{-q}\right).
\label{eq:app-cf-v-rate}
\end{align}
The second bound is allowed to remain $O(1)$ when the coarse count is bounded; this is sufficient because the nuisance contribution to the target row of the inverse information is then downweighted by the coarse information itself.

\paragraph{One-step map and inverse-information bounds.}
Let $S_r(\vartheta)$ be the Gaussian score on fold $r$ and let $I_r(\vartheta)$ be its expected Fisher information conditional on the deterministic fold counts. Define
\begin{equation}
\check\vartheta^{(r)}
=
\widetilde\vartheta^{(-r)}
+
I_r(\widetilde\vartheta^{(-r)})^{-1}
S_r(\widetilde\vartheta^{(-r)}).
\label{eq:app-cf-onestep}
\end{equation}
If the fine-pilot tangent guard
$\sigma_{\min}(\widetilde\Theta^{(-r)}Q)\ge\kappa_\Theta/2$ fails, use the bounded projected fine estimator for that fold. The guard failure probability is $O(e^{-cb_{\mathrm E}})$ by \Cref{eq:app-gram-tail} and the conditional Gaussian tail of the fine-regression error. The final estimator averages the two $\beta$ components and projects the reshaped matrix onto the compact operator-norm ball.

For one score fold let
$\alpha_r=N_{\mathrm F,r}/\sigma_{\mathrm F}^2$ and
$\gamma_r=N_{\mathrm C,r}/\sigma_{\mathrm C}^2$. The exact information blocks are \Cref{eq:app-Ibb,eq:app-Ibv,eq:app-Ivv}. Standard block inversion gives
\begin{align}
[I_r^{-1}]_{\beta v}
&=
-\frac1{\alpha_r}
(w\otimes\Theta Q)
\{(\Theta Q)^\top(\Theta Q)\}^{-1},
\label{eq:app-inverse-bv}\\
[I_r^{-1}]_{\beta\beta}
&=
I_{\beta\beta}^{-1}
+
\frac{\gamma_r}{\alpha_r(\alpha_r+\gamma_r\|w\|_2^2)}
\bigl(ww^\top\otimes P_{\Theta Q}\bigr).
\label{eq:app-inverse-bb}
\end{align}
On the regular neighborhood, $\alpha_r\asymp b_{\mathrm E}$ and $0<\gamma_r\le Cb_{\mathrm E}$. Hence the two blocks in the $\beta$ row are $O(b_{\mathrm E}^{-1})$. Moreover, for two parameters $\vartheta_1,\vartheta_2$ in that neighborhood,
\begin{align}
\|[I_r(\vartheta_1)^{-1}]_{\beta\beta}-[I_r(\vartheta_2)^{-1}]_{\beta\beta}\|_{\mathrm{op}}
&\le
C\frac{\gamma_r}{b_{\mathrm E}^2}\|\vartheta_1-\vartheta_2\|,
\label{eq:app-inverse-bb-lipschitz}\\
\|[I_r(\vartheta_1)^{-1}]_{\beta v}-[I_r(\vartheta_2)^{-1}]_{\beta v}\|_{\mathrm{op}}
&\le
\frac{C}{b_{\mathrm E}}\|\vartheta_1-\vartheta_2\|.
\label{eq:app-inverse-bv-lipschitz}
\end{align}
These follow directly from \Cref{eq:app-inverse-bv,eq:app-inverse-bb} because
$\Theta\mapsto P_{\Theta Q}$ and
$\Theta\mapsto\{(\Theta Q)^\top(\Theta Q)\}^{-1}$ are Lipschitz when the tangent singular value is bounded away from zero. The factor $\gamma_r/b_{\mathrm E}^2$ is important when coarse counts are sublinear.

\paragraph{Exact score decomposition.}
Write
$\Delta_\Theta=\widetilde\Theta^{(-r)}-\Theta_\star$,
$\Delta_v=\widetilde v^{(-r)}-v_\star$, and
$B_\star=\Theta_\star Q$. For a coarse score observation, the residual at the pilot is exactly
\begin{equation}
Y_i^{\mathrm C}-x_i^\top\widetilde\Theta^{(-r)}w(\widetilde v^{(-r)})
=
\varepsilon_i^{\mathrm C}-p_i-q_i,
\label{eq:app-coarse-residual}
\end{equation}
where
\begin{equation}
p_i=x_i^\top\Delta_\Theta w_\star+x_i^\top B_\star\Delta_v,
\qquad
q_i=x_i^\top\Delta_\Theta Q\Delta_v.
\end{equation}
The pilot score directions are
$(w_\star+Q\Delta_v)\otimes x_i$ for $\beta$ and
$(B_\star+\Delta_\Theta Q)^\top x_i$ for $v$. Expanding \Cref{eq:app-cf-onestep} around $\vartheta_\star$ gives
\begin{equation}
\check\beta^{(r)}-\beta_\star
=
[I_r(\vartheta_\star)^{-1}S_r(\vartheta_\star)]_\beta
+R_{r,1}+R_{r,2}+R_{r,3}+R_{r,4}+R_{r,5},
\label{eq:app-five-remainders}
\end{equation}
where the five remainder classes are:
\begin{enumerate}
    \item $R_{r,1}$: the quadratic mean term $q_i$ multiplied by the truth score directions;
    \item $R_{r,2}$: the perturbed score directions $Q\Delta_v\otimes x_i$ and $(\Delta_\Theta Q)^\top x_i$, multiplied by $\varepsilon_i^{\mathrm C}-p_i-q_i$;
    \item $R_{r,3}$: the empirical-score Jacobian minus its expected Fisher information, multiplied by $(\Delta_\Theta,\Delta_v)$;
    \item $R_{r,4}$: replacing the truth inverse information by the inverse information evaluated at the pilot;
    \item $R_{r,5}$: the exponentially rare fine-pilot guard failure, handled by the bounded projected fallback.
\end{enumerate}
This list is exhaustive because the fine mean is linear and the coarse mean is exactly bilinear in $(\Theta,v)$. To make the bookkeeping explicit, abbreviate $\widetilde\vartheta=\widetilde\vartheta^{(-r)}$, let
$M_r(\vartheta)=I_r(\vartheta)^{-1}$, and write $M_{r,\beta\bullet}$ for its $\beta$ block row. For a coarse observation define the truth joint score direction
\[
g_i^\star
=
\begin{pmatrix}
w_\star\otimes x_i\\
B_\star^\top x_i
\end{pmatrix},
\qquad
\delta g_i
=
\begin{pmatrix}
Q\Delta_v\otimes x_i\\
(\Delta_\Theta Q)^\top x_i
\end{pmatrix}.
\]
Let $\widehat J_r(\widetilde\vartheta)$ denote the negative empirical score Jacobian on fold $r$. After the linear score term cancels against its expected Jacobian, the first four classes can be represented as
\begin{align}
R_{r,1}
&=
-\bigl[M_r(\widetilde\vartheta)\bigr]_{\beta\bullet}
\frac1{\sigma_{\mathrm C}^2}
\sum_{i\in\mathcal C_r}g_i^\star q_i,\\
R_{r,2}
&=
\bigl[M_r(\widetilde\vartheta)\bigr]_{\beta\bullet}
\frac1{\sigma_{\mathrm C}^2}
\sum_{i\in\mathcal C_r}\delta g_i
(\varepsilon_i^{\mathrm C}-p_i-q_i),\\
R_{r,3}
&=
\bigl[M_r(\widetilde\vartheta)\bigr]_{\beta\bullet}
\{I_r(\widetilde\vartheta)-\widehat J_r(\widetilde\vartheta)\}
(\widetilde\vartheta-\vartheta_\star),\\
R_{r,4}
&=
\bigl[M_r(\widetilde\vartheta)-M_r(\vartheta_\star)\bigr]_{\beta\bullet}
S_r(\vartheta_\star),
\end{align}
with terms that differ only by truth-versus-pilot score directions grouped into the corresponding class. $R_{r,5}$ is supported on the guard-failure event.

We now bound each class conditionally on the opposite-fold pilot. Bounded covariates and Gaussian fourth moments imply the usual $L_2$ bounds for centered sums. Using \Cref{eq:app-cf-theta-rate,eq:app-cf-v-rate}, $\gamma_r\le Cb_{\mathrm E}$, and the target-row inverse bounds above gives
\begin{align}
\|R_{r,1}\|_{L_2}
&\le
C\frac{\gamma_r}{b_{\mathrm E}}
\|\Delta_\Theta\|_{L_4}\|\Delta_v\|_{L_4}
\le \frac{C}{b_{\mathrm E}},\\
\|R_{r,2}\|_{L_2}
&\le
C\left[
\frac{\gamma_r}{b_{\mathrm E}}
(\|\Delta_v\|_{L_4}^2+\|\Delta_\Theta\|_{L_4}\|\Delta_v\|_{L_4})
+
\frac{\sqrt{\gamma_r}}{b_{\mathrm E}}
(\|\Delta_v\|_{L_4}+\|\Delta_\Theta\|_{L_4})
\right]
\le\frac{C}{b_{\mathrm E}},\\
\|R_{r,3}\|_{L_2}
&\le
\frac{C}{b_{\mathrm E}}
\left(
\sqrt{N_{\mathrm F,r}}\|\Delta_\Theta\|_{L_4}
+
\sqrt{N_{\mathrm C,r}}\|\Delta_v\|_{L_4}
\right)
\le\frac{C}{b_{\mathrm E}},\\
\|R_{r,4}\|_{L_2}
&\le\frac{C}{b_{\mathrm E}},
\label{eq:app-remainder-bounds}
\end{align}
where the last inequality uses \Cref{eq:app-inverse-bb-lipschitz,eq:app-inverse-bv-lipschitz} together with truth-score sizes $O_{L_4}(\sqrt{b_{\mathrm E}})$ and $O_{L_4}(\sqrt{\gamma_r})$. The guard contribution satisfies
$\E\|R_{r,5}\|_2^2\le Ce^{-cb_{\mathrm E}}$ because the fallback is bounded. Thus
\begin{equation}
\E\|R_{r,\beta}\|_2^2\le\frac{C}{b_{\mathrm E}^2}.
\label{eq:app-cf-rem-square}
\end{equation}
The bounds remain valid when $N_{\mathrm C,r}$ is bounded: then $\Delta_v$ need not converge, but every occurrence is multiplied by either $\gamma_r/b_{\mathrm E}$ or $\sqrt{\gamma_r}/b_{\mathrm E}$, which is enough for \Cref{eq:app-remainder-bounds}.

\paragraph{Leading variance and risk.}
Conditional on $\mathcal G_m$, the score folds are independent and centered, and
$\operatorname{Cov}(S_r(\vartheta_\star)\mid\mathcal G_m)=I_r(\vartheta_\star)$. Hence the two leading influence functions in \Cref{eq:app-five-remainders} have covariances
$[I_r(\vartheta_\star)^{-1}]_{\beta\beta}$. Their average has mean-square
\begin{equation}
\frac14\sum_{r=1}^2
\operatorname{tr}[I_r(\vartheta_\star)^{-1}]_{\beta\beta}
=
\mathcal R_{\mathrm U}(N_{\mathrm F}^{\mathrm E},N_{\mathrm C}^{\mathrm E})
+O(b_{\mathrm E}^{-2}),
\label{eq:app-cf-leading-variance}
\end{equation}
because each fold has half of the full counts up to $O(1)$, so each fold inverse information is twice the full-count inverse up to $O(b_E^{-2})$; the factor $1/4$ from averaging the two independent fold estimates restores the full-sample efficient trace.
The leading term has $L_2$ norm $O(b_{\mathrm E}^{-1/2})$. Cauchy--Schwarz with \Cref{eq:app-cf-rem-square} therefore makes the influence--remainder cross term $O(b_{\mathrm E}^{-3/2})$, and the squared remainder is $O(b_{\mathrm E}^{-2})$. Final projection cannot increase squared prediction loss. We obtain
\begin{equation}
L_b(\mathcal G_m)
\le
\mathcal R_{\mathrm U}(N_{\mathrm F}^{\mathrm E},N_{\mathrm C}^{\mathrm E})
+
O(b_{\mathrm E}^{-3/2}).
\end{equation}
Writing the count-based efficient trace in terms of $b_{\mathrm E}$ and $\eta_b^{\mathrm E}$ gives \Cref{eq:conditional-efficient-risk}. All constants used above are uniform over the compact regular class and over the predictable schedules generated by EET, proving the lemma.

\subsection{Proof of \Cref{lem:adaptive-information-lower}}
\label{app:proof-adaptive-lower}

Let $\vartheta=(\beta,v)$ and choose a smooth prior $\Pi_\delta$ with compact support strictly inside the regular coordinate ball $\mathcal U_\delta$. Its density is $C^1$, vanishes at the boundary, and has finite Fisher-information matrix $J(\Pi_\delta)$. The Gaussian observation model is differentiable in quadratic mean on this neighborhood, so the multivariate van Trees inequality applies \citep{gill1995applications}. It is essential that the prior vary both $\beta$ and $v$; fixing $v$ would remove the nuisance price.

Fix an arbitrary predictable resolution policy and let $\tau_b$ be the last completed query whose known cumulative cost does not exceed $b$. If $H_{t-1}$ is the observed history before query $t$, the stopped-history density factorizes as
\begin{equation}
p_\vartheta(H_{\tau_b})
=
\prod_{t=1}^{\tau_b}
q_t(A_t\mid H_{t-1})\,p(x_t)\,
p_\vartheta(Y_t^{A_t}\mid x_t,A_t),
\label{eq:app-adaptive-factorization}
\end{equation}
where each predictable policy kernel $q_t$ is parameter free. The stopping rule is also parameter free conditional on the action sequence because costs are known. Therefore the history score is the sum of queried-observation scores. These are martingale differences conditional on the past, so their cross terms have zero expectation and the Fisher information is the expected sum of the queried-label information matrices. Let its prior average be $\overline I_b(\Pi_\delta)$.

The multivariate van Trees inequality for the target map $\vartheta\mapsto\beta$ gives
\begin{equation}
\int
\E_\vartheta
[(\widehat\beta_b-\beta)(\widehat\beta_b-\beta)^\top]
\,\Pi_\delta(d\vartheta)
\succeq
[\overline I_b(\Pi_\delta)+J(\Pi_\delta)]^{-1}_{\beta\beta}.
\label{eq:app-van-trees-matrix}
\end{equation}
Taking traces converts the left side to Bayes fine-prediction risk.

Let
\begin{equation}
\bar N_{\mathrm F}(b)
:=
\int\E_\vartheta N_{\mathrm F}(b)\,\Pi_\delta(d\vartheta),
\qquad
\bar N_{\mathrm C}(b)
:=
\int\E_\vartheta N_{\mathrm C}(b)\,\Pi_\delta(d\vartheta).
\end{equation}
If a policy leaves usable budget idle, adding hypothetical observations can only increase information and decrease the inverse-information trace, so it is enough for the lower bound to complete the design up to a bounded residue $r_b\le\max\{c_{\mathrm F},c_{\mathrm C}\}$. Replacing $b-r_b$ by $b$ changes each $1/b$ leading term by $O(b^{-2})$, which is absorbed below. Let
$\bar\eta_b=c_{\mathrm C}\bar N_{\mathrm C}(b)/b$. At the center $\vartheta_\star$, \Cref{eq:app-beta-schur} gives
\begin{equation}
\operatorname{tr}\left[
\Sigma_{\beta,\star}
(\bar N_{\mathrm F}(b),\bar N_{\mathrm C}(b))^{-1}
\right]
=
\frac{c_{\mathrm F}\sigma_{\mathrm F}^2d}{b}
\psi_{\mathrm U}(\bar\eta_b,\lambda)
+O(b^{-2})
\ge
\frac{\Phi_{\mathrm U}^\star}{b}-O(b^{-2}).
\label{eq:app-center-lower}
\end{equation}

It remains to pass from center information to the prior-averaged information uniformly over policies.

\paragraph{Uniform center-information comparison.}
For $\vartheta\in\mathcal U_\delta$, write
$n_a(\vartheta)=\E_\vartheta N_a(b)$ and let $\mathcal J_{\mathrm C}(\vartheta)$ denote the one-coarse-observation joint information matrix in the local coordinate $(\beta,v)$. The fine information is parameter independent, while
\begin{equation}
\overline I_b(\Pi_\delta)
=
\frac{\bar N_{\mathrm F}(b)}{\sigma_{\mathrm F}^2}
\begin{pmatrix}I_{dK}&0\\0&0\end{pmatrix}
+
\int n_{\mathrm C}(\vartheta)\,
\mathcal J_{\mathrm C}(\vartheta)\,\Pi_\delta(d\vartheta).
\label{eq:app-prior-avg-info}
\end{equation}
If $\bar N_{\mathrm C}(b)=0$, set $\bar\Sigma_\beta=(\bar N_{\mathrm F}/\sigma_{\mathrm F}^2)I_{dK}$; then the comparison below holds with $\gamma=0$. Otherwise normalize the second term by $\bar N_{\mathrm C}(b)$. Because every parameter in the prior support lies within $O(\delta)$ of $\vartheta_\star$, the maps
$w\mapsto ww^\top$, $\Theta\mapsto\Theta Q$, and
$\Theta\mapsto P_{\Theta Q}$ are uniformly Lipschitz, and the normalized coarse-information matrix differs from $\mathcal J_{\mathrm C}(\vartheta_\star)$ by at most $C\delta$ in operator norm. For $\delta$ small enough, its nuisance block remains uniformly positive definite by \Cref{ass:identifiability}. The Schur-complement map is Lipschitz on this compact set, hence the efficient $\beta$ information $\bar\Sigma_\beta$ of the prior-averaged likelihood satisfies
\begin{equation}
\left\|
\bar\Sigma_\beta
-
\Sigma_{\beta,\star}
(\bar N_{\mathrm F}(b),\bar N_{\mathrm C}(b))
\right\|_{\mathrm{op}}
\le
C\delta\gamma,
\qquad
\gamma:=\frac{\bar N_{\mathrm C}(b)}{\sigma_{\mathrm C}^2}.
\label{eq:app-uniform-center-comparison}
\end{equation}

If $\bar\eta_b<1/K$, then
\begin{equation}
\alpha:=\frac{\bar N_{\mathrm F}(b)}{\sigma_{\mathrm F}^2}
\ge
\frac{(1-1/K)b}{c_{\mathrm F}\sigma_{\mathrm F}^2}+O(1)
\asymp b.
\end{equation}
Every $\beta$ Schur complement is bounded below by $\alpha I$. \Eqref{eq:app-uniform-center-comparison} changes the efficient $\beta$ information by at most $O(\delta b)$, while the fixed prior information $J(\Pi_\delta)$ contributes only $O_\delta(1)$. The resolvent identity therefore gives
\begin{equation}
\operatorname{tr}[\overline I_b(\Pi_\delta)+J(\Pi_\delta)]^{-1}_{\beta\beta}
\ge
\operatorname{tr}\left[\Sigma_{\beta,\star}(\bar N_{\mathrm F},\bar N_{\mathrm C})^{-1}\right]
-\frac{C\delta}{b}-\frac{C_\delta}{b^2}.
\label{eq:app-prior-small-coarse}
\end{equation}
Combining with \Cref{eq:app-center-lower} gives the desired lower bound in this case.

If $\bar\eta_b\ge1/K$, use the fixed center fine-only subspace
$V_{w_\star}=\{H:Hw_\star=0\}$ of dimension $d(K-1)$. For a positive-definite operator, the trace of the full inverse is at least the trace of the inverse of its compression to any fixed subspace. By \Cref{eq:app-uniform-center-comparison}, coarse information contributes at most $C\delta\gamma$ on this subspace. Adding $J(\Pi_\delta)$ can increase the compressed information by at most its fixed operator norm $C_\delta$. Hence
\begin{align}
\operatorname{tr}[\overline I_b(\Pi_\delta)+J(\Pi_\delta)]^{-1}_{\beta\beta}
&\ge
\frac{d(K-1)}{\alpha+C\delta\gamma+C_\delta}\\
&\ge
\frac{dKc_{\mathrm F}\sigma_{\mathrm F}^2}{b}-\frac{C\delta}{b}-\frac{C_\delta}{b^2}\\
&\ge
\frac{\Phi_{\mathrm U}^\star-C\delta}{b}-\frac{C_\delta}{b^2},
\end{align}
where the last step uses
$\Phi_{\mathrm U}^\star\le dKc_{\mathrm F}\sigma_{\mathrm F}^2$. Because $\bar\eta_b\ge1/K$, for sufficiently small fixed $\delta$ we have
$\alpha+C\delta\gamma\le[(K-1)/(Kc_{\mathrm F}\sigma_{\mathrm F}^2)+C\delta]b+O(1)$; expanding its reciprocal gives the second line above. The constants in the final $O_\delta(b^{-2})$ term may depend on $\delta$.

Thus both cases yield
\begin{equation}
\operatorname{tr}
[\overline I_b(\Pi_\delta)+J(\Pi_\delta)]^{-1}_{\beta\beta}
\ge
\frac{\Phi_{\mathrm U}^\star-C\delta}{b}
-
\frac{C_\delta}{b^2}.
\end{equation}
Together with \Cref{eq:app-van-trees-matrix}, this is \Cref{eq:adaptive-information-lower}, with $o_\delta(1)=C\delta$. Summation gives the lower half of \Cref{thm:online-main}.

\end{document}